\definecolor{code-green}{rgb}{0,0.6,0}
\definecolor{code-purple}{HTML}{8959a8}
\definecolor{code-light-grey}{HTML}{f2f2f2}
\definecolor{code-dark-grey}{HTML}{606060}
\tiny\color{code-dark-grey},
  \providecommand\BibTeX{{%
    \normalfont B\kern-0.5em{\scshape i\kern-0.25em b}\kern-0.8em\TeX}}}
\DeclarePairedDelimiter\abs{\lvert}{\rvert}%
\newtheorem{theorem}{Theorem}[section]
\newtheorem{lemma}[theorem]{Lemma}
\newtheorem{conjecture}[theorem]{Conjecture}
\renewcommand{\citet}{\cite}
\newcommand{\NA}{---}
\newcommand{\xxnote}[3]{}
  \renewcommand{\xxnote}[3]{\color{#2}{#1: #3}}
\def\eqref#1{equation~\ref{#1}}
\def\1{\bm{1}}
\DeclareMathAlphabet{\mathsfit}{\encodingdefault}{\sfdefault}{m}{sl}
\SetMathAlphabet{\mathsfit}{bold}{\encodingdefault}{\sfdefault}{bx}{n}
\begin{document}

\title{Covariance Matrix Adaptation MAP-Annealing}

\author{Matthew C. Fontaine}
\affiliation{%
  \institution{University of Southern California}
  \city{Los Angeles} 
  \state{CA}
  \country{USA}
}
\email{mfontain@usc.edu}

\author{Stefanos Nikolaidis}
\affiliation{%
  \institution{University of Southern California}
  \city{Los Angeles} 
  \state{CA} 
  \country{USA}
}
\email{nikolaid@usc.edu}

\begin{abstract}
Single-objective optimization algorithms search for the single highest-quality solution with respect to an objective. Quality diversity (QD) optimization algorithms, such as Covariance Matrix Adaptation MAP-Elites (CMA-ME), search for a collection of solutions that are both high-quality with respect to an objective and diverse with respect to specified measure functions. However, CMA-ME suffers from three major limitations highlighted by the QD community: prematurely abandoning the objective in favor of exploration, struggling to explore flat objectives, and having poor performance for low-resolution archives. We propose a new quality diversity algorithm, Covariance Matrix Adaptation MAP-Annealing (CMA-MAE), that addresses all three limitations. We provide theoretical justifications for the new algorithm with respect to each limitation. Our theory informs our experiments, which support the theory and show that CMA-MAE achieves state-of-the-art performance and robustness.
\end{abstract}

\maketitle

\section{Introduction}
\label{sec:intro}

Consider an example problem of searching for celebrity faces in the latent space of a generative model. As a single-objective optimization problem, we specify an objective $f$ that targets a celebrity such as Tom Cruise. A single-objective optimizer, such as CMA-ES~\citep{hansen:cma16}, will converge to a single solution of high objective value, an image that looks like Tom Cruise as much as possible. 

However, this objective has ambiguity. How old was Tom Cruise in the photo? Did we want the person in the image to have short or long hair? By instead framing the problem as a quality diversity optimization problem, we additionally specify a measure function $m_1$ that quantifies age and a measure function $m_2$ that quantifies hair length. A quality diversity algorithm~\citep{pugh2015confronting, chatzilygeroudis2021quality}, such as \mbox{CMA-ME}~\citep{fontaine2020covariance}, can then optimize for a collection of images that are diverse with respect to age and hair length, but all look like Tom Cruise. 

While prior work~\citep{fontaine2020covariance, fontaine2021importance, fontaine2020illuminating, earle2021illuminating} has shown that \mbox{CMA-ME} solves such QD problems efficiently, three important limitations of the algorithm have been discovered. First, on difficult to optimize objectives, variants of \mbox{CMA-ME} will abandon the objective too soon~\citep{tjanaka2022approximating}, and instead favor exploring the measure space, the vector space defined by the measure function outputs. Second, the CMA-ME algorithm struggles to explore flat objective functions~\citep{paolo2021sparse}. Third, CMA-ME works well on high-resolution archives, but struggles to explore low-resolution archives~\citep{cully2021multi, fontaine2021differentiable}.\footnote{We note that archive resolution affects the performance of all current QD algorithms.}

We propose a new algorithm, CMA-MAE, that addresses these three limitations. To address the first limitation, we derive an algorithm that smoothly blends between CMA-ES and CMA-ME. First, consider how CMA-ES and CMA-ME differ. At each step CMA-ES's objective ranking maximizes the objective function $f$ by approximating the natural gradient of $f$ at the current solution point~\citep{akimoto2010bidirectional}. In contrast, CMA-ME's improvement ranking moves in the direction of the natural gradient of $f-f_A$ at the current solution point, where $f_A$ is a discount function equal to the objective of the best solution so far that has the same measure values as the current solution point. The function $f-f_A$ quantifies the gap between a candidate solution and the best solution so far at the candidate solution's position in measure space.

Our key insight is to \textit{anneal the function $f_A$ by a learning rate $\alpha$}. We observe that when $\alpha = 0$, then our discount function $f_A$ never increases and our algorithm behaves like CMA-ES. However, when $\alpha = 1$, then our discount function always maintains the best solution for each region in measure space and our algorithm behaves like CMA-ME. For $0 < \alpha < 1$, CMA-MAE smoothly blends between the two algorithms' behavior, allowing for an algorithm that spends more time on the optimization of $f$ before transitioning to exploration. Figure~\ref{fig:dual} is an illustrative example of varying the learning rate $\alpha$.

Our proposed annealing method naturally addresses the flat objective limitation. Observe that both CMA-ES and CMA-ME struggle on flat objectives $f$ as the natural gradient becomes $\bm{0}$ in this case and each algorithm will restart. However, we show that, when \mbox{CMA-MAE} optimizes $f-f_A$ for $0 < \alpha < 1$, the algorithm becomes a descent method on the density histogram defined by the archive.

Finally, CMA-ME's poor performance on low resolution archives is likely caused by the non-stationary objective $f-f_A$ changing too quickly for the adaptation mechanism to keep up. Our archive learning rate $\alpha$ controls how \emph{quickly} $f-f_A$ changes. We derive a conversion formula for $\alpha$ that allows us to derive equivalent $\alpha$ for different archive resolutions. Our conversion formula guarantees that CMA-MAE is the first QD algorithm invariant to archive resolution.

\begin{figure*}
\centering
\includegraphics[width=1.0\linewidth]{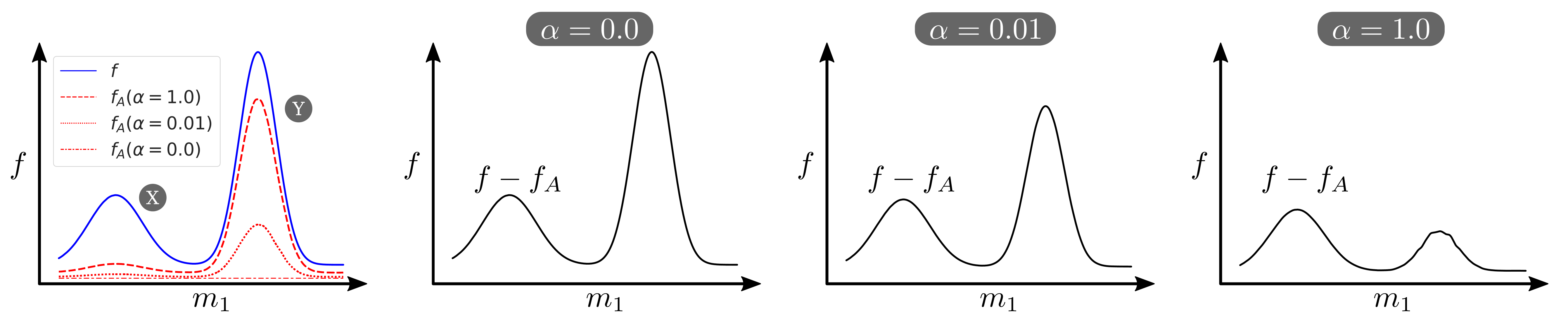}
\caption{An example of how different $\alpha$ values affect the function $f-f_A$ optimized by CMA-MAE after a fixed number of iterations. Here $f$ is a bimodal objective where mode $X$ is harder to optimize than mode $Y$, requiring more optimization steps, and modes $X$ and $Y$ are separated by measure $m_1$. For $\alpha = 0$, the objective $f$ is equivalent to $f-f_A$, as $f_A$ remains constant. For larger values of $\alpha$, CMA-MAE discounts region $Y$ in favor of prioritizing the optimization of region $X$.}
\label{fig:dual}
\end{figure*}

While our new annealing method benefits CMA-ME, our approach is also compatible with CMA-ME's differentiable quality diversity (DQD) counterpart, CMA-MEGA~\citep{fontaine2021differentiable}. We apply the same modification in the DQD setting to form CMA-MAEGA. 
To evaluate CMA-MAEGA, we improve the latent space illumination (LSI)~\citep{fontaine2021differentiable} domain by introducing a higher-dimensional domain based on StyleGAN2, capable of producing higher quality images. This new domain highlights the advantages of DQD in high-dimensional spaces and demonstrates the performance benefits of our annealing method.

Overall, our work shows how a simple algorithmic change to CMA-ME addresses all three major limitations affecting CMA-ME's performance and robustness. Our theoretical findings justify the aforementioned properties and inform our experiments, which show that CMA-MAE outperforms state-of-the-art QD algorithms and maintains robust performance across different archive resolutions.

\section{Problem Definition } \label{sec:problem}
\noindent\textbf{Quality Diversity.} We adopt the quality diversity (QD) problem definition from prior work~\citep{fontaine2021differentiable}. A QD problem consists of an objective function $f: \mathbb{R}^n \rightarrow \mathbb{R}$ that maps \mbox{$n$-dimensional} solution parameters to the scalar value representing the quality of the solution and $k$ measure functions $m_i: \mathbb{R}^n \rightarrow \mathbb{R}$ or, as a vector function, $\bm{m}: \mathbb{R}^n \rightarrow \mathbb{R}^k$ that quantify the behavior or attributes of each solution.\footnote{In agent-based settings, such as reinforcement learning, the measure functions are sometimes called behavior functions and the outputs of each measure function are called behavioral characteristics or behavior descriptors.}
The range of $\bm{m}$ forms a measure space $S = \bm{m}(\mathbb{R}^n)$. The QD objective is to find a set of solutions $\bm{\theta} \in \mathbb{R}^n$, such that  $\bm{m}(\bm{\theta}) = \bm{s}$ for each $\bm{s}$ in $S$ and $f(\bm{\theta})$ is maximized.

The measure space $S$ is continuous, but solving algorithms need to produce a finite collection of solutions. Therefore, QD algorithms in the MAP-Elites~\citep{mouret2015illuminating,cully:nature15} family relax the QD objective by discretizing the space $S$. Given $T$ as the tessellation of $S$ into $M$ cells, the QD objective becomes to find a solution $\bm{\theta_i}$ for each of the $i\in \{1,\ldots,M\}$ cells, such that each $\bm{\theta_i}$ maps to the cell corresponding to $\bm{m}(\bm{\theta_i})$ in the tesselation $T$. The QD objective then becomes maximizing the objective value $f(\bm{\theta_i})$ of all cells:

   \vspace{-1em}
\begin{equation}
  \max \sum_{i=1}^M f(\bm{\theta_i})
\label{eq:objective}
\end{equation}

The \textit{differentiable quality diversity} (DQD) problem~\citep{fontaine2021differentiable} is a special case of the QD problem where both the objective $f$ and measures $m_i$ are \textit{first-order differentiable}.

\section{Preliminaries} \label{sec:preliminaries}

We present several QD algorithms that solve derivative-free QD problems to provide context for our proposed CMA-MAE algorithm. Appendix~\ref{sec:cma-maega} contains information about the DQD algorithm \mbox{CMA-MEGA}, which solves problems where the gradients of the objective and measure functions are available. 

\noindent\textbf{MAP-Elites and MAP-Elites (line)}. The MAP-Elites QD algorithm produces an archive of solutions, where each cell in the archive corresponds to the provided tesselation $T$ in the QD problem definition. The algorithm initializes the archive by sampling solutions from the solution space $\mathbb{R}^n$ from a fixed distribution. After initialization, MAP-Elites produces new solutions by selecting occupied cells uniformly at random and perturbing them with isotropic Gaussian noise: $\bm{\theta}' = \bm{\theta_i} + \sigma \mathcal{N}(\bm{0},I)$. For each new candidate solution $\bm{\theta}'$, the algorithm computes an objective $f(\bm{\theta}')$ and measures $\bm{m}(\bm{\theta}')$. MAP-Elites places $\bm{\theta}'$ into the archive if the cell corresponding to $\bm{m}(\bm{\theta}')$ is empty or $\bm{\theta}'$ obtains a better objective value $f(\bm{\theta}')$ than the current occupant. The MAP-Elites algorithm results in an archive of solutions that are diverse with respect to the measure function $\bm{m}$, but also high quality with respect to the objective $f$. Prior work \citep{vassiliades:gecco18} proposed the MAP-Elites (line) algorithm by augmenting the isotropic Gaussian perturbation with a linear interpolation between two solutions $ \bm{\theta_i}$ and $ \bm{\theta_j}$: \mbox{$\bm{\theta'} = \bm{\theta_i} + \sigma_1 \mathcal{N}(\bm{0},I) +  \sigma_2 \mathcal{N}(\bm{0},1) (\bm{\theta_i} - \bm{\theta_j})$}.

\noindent\textbf{CMA-ME}. \label{subsec:CMA-ME} 
Covariance Matrix Adaptation MAP-Elites~\citep{fontaine2020covariance} combines the archiving mechanisms of MAP-Elites with the adaptation mechanisms of CMA-ES~\citep{hansen:cma16}. Instead of perturbing archive solutions with Gaussian noise, CMA-ME maintains a multivariate Gaussian of search directions $\mathcal{N}(\bm{0},\Sigma)$ and a search point $\bm{\theta} \in \mathbb{R}^n$. The algorithm updates the archive by sampling $\lambda$ solutions around the current search point $\bm{\theta_i} \sim \mathcal{N}(\bm{\theta},\Sigma)$. After updating the archive, \mbox{CMA-ME} ranks solutions via a two stage ranking. Solutions that discover a new cell are ranked by the objective $\Delta_i = f(\bm{\theta_i})$, and solutions that map to an occupied cell $e$ are ranked by the improvement over the incumbent solution $\bm{\theta_e}$ in that cell: $\Delta_i = f(\bm{\theta_i}) - f(\bm{\theta_e})$. CMA-ME prioritizes exploration by ranking all solutions that discover a new cell before all solutions that improve upon an existing cell. Finally, CMA-ME moves $\bm{\theta}$ towards the largest improvement in the archive, according to the CMA-ES update rules. Prior work~\citep{fontaine2021differentiable} showed that the improvement ranking of CMA-ME approximates a natural gradient of a modified QD objective (see Eq.~\ref{eq:objective}).

\begin{figure*}[!t]
\centering
\includegraphics[width=0.9\linewidth]{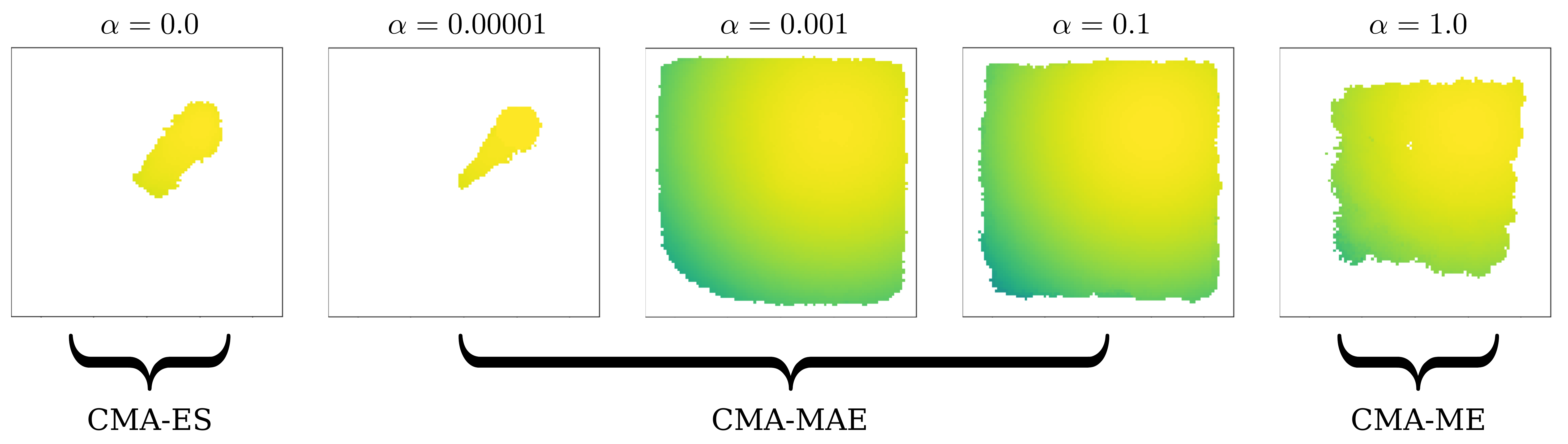}
\caption{Our proposed CMA-MAE algorithm smoothly blends between the behavior of CMA-ES and CMA-ME via an archive learning rate $\alpha$. Each heatmap visualizes an archive of solutions across a 2D measure space, where the color of each cell represents the objective value of the solution.}
\label{fig:blend}
\end{figure*}

\section{Proposed Algorithms} 

We present the CMA-MAE algorithm. While we focus on CMA-MAE, the same augmentations apply to CMA-MEGA to form the novel CMA-MAEGA algorithm (see Appendix~\ref{sec:cma-maega}).

\noindent\textbf{CMA-MAE.} CMA-MAE is an algorithm that adjusts the rate the non-stationary QD objective $f-f_A$ changes. First, consider at a high level how \mbox{CMA-ME} explores the measure space and discovers high quality solutions. The \mbox{CMA-ME} algorithm maintains a solution point $\bm{\theta}$ and an archive $A$ with previously discovered solutions. When CMA-ME samples a new solution $\bm{\theta}'$, the algorithm computes the solution's objective value $f(\bm{\theta'})$ and maps the solution to a cell $e$ in the archive based on the measure $\bm{m}(\bm{\theta'})$. CMA-ME then computes the improvement of the objective value $f(\bm{\theta'})$ of the new solution, over a discount function $f_A : \mathbb{R}^n \rightarrow \mathbb{R}$. In CMA-ME, we define $f_A(\bm{\theta'})$ by computing the cell $e$ in the archive corresponding to $\bm{m}(\bm{\theta}')$  and letting $f_A(\bm{\theta'}) = f(\bm{\theta_e})$, where $\bm{\theta_e}$ is the incumbent solution of cell $e$. The algorithm ranks candidate solutions by improvement $f(\bm{\theta'})-f_A(\bm{\theta'})=f(\bm{\theta'})-f(\bm{\theta_e})$ and moves the search in the direction of higher ranked solutions.

Assume that CMA-ME samples a new solution $\bm{\theta'}$ with a high objective value of $f(\bm{\theta'}) = 99$. If the current occupant $\bm{\theta_e}$ of the corresponding cell has a low objective value of $f(\bm{\theta_e}) = 0.3$, then the improvement in the archive $\Delta = f(\bm{\theta'}) - f(\bm{\theta_e}) = 98.7$ is high and the algorithm will move the search point $\bm{\theta}$ towards $\bm{\theta'}$. Now, assume that in the next iteration the algorithm discovers a new solution $\bm{\theta''}$ with objective value $f(\bm{\theta''}) = 100$ that maps to the same cell as $\bm{\theta'}$. The improvement then is $\Delta = f(\bm{\theta''}) - f(\bm{\theta'}) = 1$ as $\bm{\theta'}$ replaced $\bm{\theta_e}$ in the archive in the previous iteration. CMA-ME would likely move $\bm{\theta}$ away from $\bm{\theta''}$ as the solution resulted in low improvement. In contrast, CMA-ES would move towards $\bm{\theta''}$ as it ranks only by the objective $f$, ignoring previously discovered solutions with similar measure values.

In the above example, CMA-ME moves away from a high performing region in order to maximize how the archive changes. However, in domains with hard-to-optimize objective functions, it is beneficial to perform more optimization steps towards the objective $f$ before leaving each high-performing region~\citep{tjanaka2022approximating}. 

Like CMA-ME, CMA-MAE maintains a discount function $f_A(\bm{\theta'})$ and ranks solutions by improvement $f(\bm{\theta'})-f_A(\bm{\theta'})$. However, instead of maintaining an \textit{elitist archive} by setting $f_A(\bm{\theta'})$ equal to $f(\bm{\theta_e})$, we maintain a \textit{soft archive} by setting $f_A(\bm{\theta'})$ equal to $t_e$, where $t_e$ is an acceptance threshold maintained for each cell in the archive $A$.  When adding a candidate solution to the archive, we control the rate that $t_e$ changes by the archive learning rate $\alpha$ as follows: $t_e \leftarrow (1-\alpha) t_e + \alpha f(\bm{\theta'})$.

The archive learning rate $\alpha$ in CMA-MAE allows us to control how quickly we leave a high-performing region of measure space. For example, consider discovering solutions in the same cell with objective value 100 in 5 consecutive iterations. The improvement values computed by CMA-ME against the elitist archive would be $100,0,0,0,0$, thus CMA-ME would move rapidly away from this cell. The improvement values computed against the soft archive of CMA-MAE with $\alpha=0.5$ would diminish smoothly as follows:  $100,50,25,12.5,6.25$, enabling further exploitation of the high-performing region.

Next, we walk through the CMA-MAE algorithm step-by-step. Algorithm~\ref{alg:cma-mae} shows the pseudo-code for CMA-MAE with the differences from CMA-ME highlighted in yellow. First, on line~\ref{mae:initialize_t} we initialize the acceptance threshold to $min_f$. In each iteration we sample $\lambda$ solutions around the current search point $\bm{\theta}$ (line~\ref{mae:sample}). For each candidate solution $\bm{\theta_i}$, we evaluate the solution and compute the objective value $f(\bm{\theta_i})$ and measure values $\bm{m}(\bm{\theta_i})$ (line~\ref{mae:evaluate}). Next, we compute the cell $e$ in the archive that corresponds to the measure values and the improvement $\Delta_i$ over the current threshold $t_e$ (lines~\ref{mae:calc_cell}-\ref{mae:calc_imp}). If the objective crosses the acceptance threshold $t_e$, we replace the incumbent $\bm{\theta_e}$ in the archive and increase the acceptance threshold $t_e$ (lines~\ref{mae:start_arc_update}-\ref{mae:inc_thresh}). Next, we rank all candidate solutions $\bm{\theta_i}$ by their improvement $\Delta_i$. Finally, we step our search point $\bm{\theta}$ and adapt our covariance matrix $\Sigma$ towards the direction of largest improvement (lines~\ref{mae:ranking}-\ref{mae:adaptation}) according to CMA-ES's update rules~\citep{hansen:cma16}.

\begin{algorithm}[t!]
\SetAlgoLined
\caption{Covariance Matrix Adaptation MAP-Annealing}
\label{alg:cma-mae}
\SetKwInOut{Input}{input}
\SetKwInOut{Result}{result}
\SetKwProg{CMAMAE}{CMA-MAE}{}{}
\DontPrintSemicolon
\CMAMAE{$(evaluate, \bm{\theta_0}, N, \lambda, \sigma,$ \hl{$min_f, \alpha$})}
{
\Input{An evaluation function $evaluate$ that computes the objective and measures, an initial solution $\bm{\theta_0}$, a desired number of iterations $N$, a branching population size $\lambda$, an initial step size $\sigma$, \hl{a minimal acceptable solution quality $min_f$, and an archive learning rate $\alpha$.}}
\Result{Generate $N \lambda$ solutions storing elites in an archive $A$.}

\BlankLine
Initialize solution parameters $\bm{\theta}$ to $\bm{\theta_0}$, CMA-ES parameters $\Sigma=\sigma I$ and $\bm{p}$, where we let $\bm{p}$ be the CMA-ES internal parameters. 

\hl{Initialize the archive A and the acceptance threshold $t_e$ with $min_f$ for each cell $e$.} \label{mae:initialize_t}

\For{$iter\leftarrow 1$ \KwTo $N$}{

\For{$i\leftarrow 1$ \KwTo $\lambda$}{ \label{mae:startforloop} 
  $\bm{\theta_i} \sim \mathcal{N}(\bm{\theta},\Sigma)$\; \label{mae:sample}

 $f, \bm{m} \gets \mbox{evaluate}(\bm{\theta_i})$\; \label{mae:evaluate}

 $e \gets \mbox{calculate\_cell}(A, \bm{m})$\; \label{mae:calc_cell}

 \hl{$\Delta_i \gets f - t_e$} \label{mae:calc_imp}

    \If{ \hl{$f > t_e$}}{
    \label{mae:start_arc_update}
    Replace the current occupant in cell $e$ of the archive $A$ with $\bm{\theta_i}$\;
    \hl{$t_e \gets (1 - \alpha) t_e + \alpha f$}\; \label{mae:inc_thresh}
 }

}

  \label{mae:endforloop}
\hl{rank $\bm{\theta}_i$ by $\Delta_i$ \;} \label{mae:ranking}

Adapt CMA-ES parameters $\bm{\theta},\Sigma,\bm{p}$ based on improvement ranking $\Delta_i$\label{mae:adaptation}

\If{\mbox{CMA-ES converges}}{
Restart CMA-ES with $\Sigma = \sigma I$.

Set $\bm{\theta}$ to a randomly selected existing cell $\bm{\theta_i}$  from the archive

}
}

}
\end{algorithm}

\noindent\textbf{CMA-MAEGA.} We note that our augmentations to the CMA-ME algorithm only affects how we replace solutions in the archive and how we calculate $\Delta_i$. CMA-ME and CMA-MEGA replace solutions and calculate $\Delta_i$ identically, so we apply the same augmentations to CMA-MEGA to form a new DQD algorithm, CMA-MAEGA, in Appendix~\ref{sec:cma-maega}.

\section{Theoretical Properties of CMA-MAE}

We provide insights about the behavior of CMA-MAE for different $\alpha$ values. We include all proofs in Appendix~\ref{sec:cmame-theory}. CMA-MAEGA has similar theoretical properties discussed in Appendix~\ref{sec:cmamaega-theory}.

\begin{theorem} 
\label{thm:CMA-ES}
The CMA-ES algorithm is equivalent to CMA-MAE when $\alpha=0$, if CMA-ES restarts from an archive solution.
\end{theorem}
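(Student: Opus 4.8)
The plan is to show that setting $\alpha = 0$ collapses every component of CMA-MAE that could distinguish it from CMA-ES, so that the two algorithms generate identical search trajectories. The only three places where CMA-MAE can deviate from CMA-ES are (i) the threshold update on line~\ref{mae:inc_thresh}, (ii) the improvement ranking on line~\ref{mae:ranking}, and (iii) the restart rule that reinitializes $\bm{\theta}$ from the archive. I would dispatch these in turn, wrapping the argument in an induction over the iteration count.

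First I would show that the acceptance thresholds are frozen. Substituting $\alpha = 0$ into the update $t_e \gets (1-\alpha)t_e + \alpha f$ yields $t_e \gets t_e$, so a trivial induction shows that every threshold remains at its initialized value $min_f$ throughout the run, independently of which solutions enter the archive. Consequently the improvement computed on line~\ref{mae:calc_imp} is $\Delta_i = f(\bm{\theta_i}) - t_e = f(\bm{\theta_i}) - min_f$ for every candidate at every iteration.

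Next I would establish ranking equivalence. Since $min_f$ is a fixed constant, $\Delta_i = f(\bm{\theta_i}) - min_f$ is a strictly increasing (affine) transformation of the raw objective $f(\bm{\theta_i})$; hence ranking the $\lambda$ candidates by $\Delta_i$ on line~\ref{mae:ranking} produces exactly the permutation obtained by ranking them by $f(\bm{\theta_i})$, which is precisely the ranking CMA-ES uses. Because both algorithms sample from the same $\mathcal{N}(\bm{\theta},\Sigma)$ and feed an identical ranking into the same CMA-ES adaptation rules for $\bm{\theta}$, $\Sigma$, and the internal parameters $\bm{p}$ (line~\ref{mae:adaptation}), the sequences of states they produce coincide. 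A point I would make explicit here is that the archive replacement step, although it may leave the archive in a \emph{non-elitist} state when $\alpha=0$ (the incumbent is overwritten by whichever accepted solution arrived most recently rather than the best one), never feeds back into $\Delta_i$; the archive contents are therefore a pure byproduct that cannot perturb the search dynamics.

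The remaining and most delicate point concerns the restart, which is the sole channel through which the archive touches the search: CMA-MAE reinitializes its mean $\bm{\theta}$ at a randomly selected occupied cell. This is exactly why the theorem is stated conditionally—under the hypothesis that CMA-ES also restarts from an archive solution, the two restart behaviors are matched and the trajectories stay aligned after every restart. I expect the main obstacle to be arguing this matching cleanly, since one must confirm that the set of occupied cells available for reinitialization is the same for both algorithms at the moment of any restart. This again follows from the threshold freeze, because the acceptance test $f > t_e$ reduces to $f > min_f$ for both, so both maintain the same occupancy pattern. With the threshold freeze, ranking equivalence, and matched restart established, the induction closes and the stated equivalence follows.
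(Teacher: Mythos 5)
Your proposal is correct and follows essentially the same route as the paper's proof: freeze the thresholds at $min_f$ by substituting $\alpha=0$ into the update rule, then invoke the invariance of comparison-based ranking under the order-preserving shift $f \mapsto f - min_f$ to conclude that the two algorithms rank (and hence adapt) identically. The extra care you take with the restart channel and the archive occupancy pattern is a welcome tightening of a step the paper dispatches with the single assertion that the two algorithms ``differ only in how they rank solutions,'' but it does not change the substance of the argument.
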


The next theorem states that CMA-ME is equivalent to CMA-MAE when $\alpha = 1$ with the following caveats: First, we assume that \mbox{CMA-ME} restarts only by the CMA-ES restart rules, rather than the additional ``no improvement'' restart rule in prior work~\citep{fontaine2020covariance}. Second, we assume that both CMA-ME and CMA-MAE leverage $\mu$ selection~\citep{hansen:cma16} rather than filtering selection~\citep{fontaine2020covariance}.

\begin{theorem}
\label{thm:CMA-ME}
The CMA-ME algorithm is equivalent to CMA-MAE when $\alpha = 1$ and $min_f$ is an arbitrarily large negative number.
\end{theorem}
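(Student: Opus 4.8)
The plan is to prove the equivalence by induction on the iteration count, showing that under the stated caveats CMA-MAE with $\alpha = 1$ generates exactly the same sequence of search points $\bm{\theta}$, covariance matrices $\Sigma$, and archive contents as CMA-ME. The first step is to identify what the threshold becomes when $\alpha = 1$: the update on line~\ref{mae:inc_thresh} collapses to $t_e \leftarrow f$, so after any replacement an occupied cell stores a threshold equal to the objective value of its current incumbent, while an untouched cell retains its initial value $min_f$. I would phrase this as a loop invariant: at the moment any candidate is processed, $t_e = f(\bm{\theta_e})$ for an occupied cell $e$ and $t_e = min_f$ for an empty cell.

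Given this invariant, the acceptance test $f > t_e$ on line~\ref{mae:start_arc_update} reduces to ``$f > f(\bm{\theta_e})$'' for an occupied cell and ``$f > min_f$'' for an empty cell. Because $min_f$ is an arbitrarily large negative number, the latter holds for every candidate, so the replacement rule coincides exactly with CMA-ME's rule (replace iff the cell is empty or the new objective beats the incumbent). Hence the two archives remain identical as long as the candidate streams agree.

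The crux is matching the two rankings, since the CMA-ES adaptation on line~\ref{mae:adaptation} depends on the candidate ordering and, under $\mu$ selection, on nothing else. For a candidate mapping to an occupied cell, $\Delta_i = f - t_e = f - f(\bm{\theta_e})$, which is precisely CMA-ME's improvement score. For a candidate mapping to an empty cell, $\Delta_i = f - min_f$ is an order-preserving shift of CMA-ME's new-cell score $f$, so the induced ordering among new-cell candidates is identical. It then remains to verify the interleaving: CMA-ME ranks all new-cell discoveries strictly above all occupied-cell improvements, and I would argue this is forced here because the occupied-cell scores $f - f(\bm{\theta_e})$ stay bounded over the finite run while $f - min_f \to +\infty$ as $min_f \to -\infty$; thus for $min_f$ negative enough every new-cell score exceeds every occupied-cell score. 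Since the ranking is ordinal, identical orderings produce identical CMA-ES updates, which closes the induction.

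I expect the main obstacle to be the bookkeeping around intra-batch archive updates: when several candidates in the same $\lambda$-batch map to a cell that was empty at the start of the batch, the invariant must be re-established after each replacement so that the second such candidate is scored against the first (as in CMA-ME) rather than against $min_f$. I would handle this by requiring the loop invariant to hold after every execution of the update block, not merely between outer iterations. The remaining caveats are invoked directly: restricting CMA-ME to the CMA-ES restart rule removes its extra ``no improvement'' restart so the restart conditions coincide, and assuming $\mu$ selection rather than filtering selection guarantees the adaptation step is a function of the ordering alone, which is exactly what the ranking argument controls.
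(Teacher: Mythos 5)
Your proposal is correct and follows essentially the same route as the paper: you establish the same key invariant (the paper's Lemma~\ref{Le_1}, that $t_e = f(\bm{\theta_e})$ for occupied cells and $t_e = min_f$ for empty cells when $\alpha = 1$) and then match the rankings case by case, using invariance of comparison-based sorting to the constant shift by $min_f$ and the arbitrarily negative $min_f$ to recover CMA-ME's two-stage ordering of new-cell discoveries above improvements. Your additional attention to intra-batch archive updates is a sound refinement of the same induction the paper performs over individual archive additions.
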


We next provide theoretical insights on how the discount function $f_A$ smoothly increases from a constant function $min_f$ to the discount function used by CMA-ME, as $\alpha$ increases from 0 to 1. We focus on the special case of a fixed sequence of candidate solutions.

\begin{theorem}
\label{thm:monot}
Let $\alpha_i$ and $\alpha_j$ be two archive learning rates for archives $A_i$ and $A_j$ such that $0 \leq \alpha_i < \alpha_j \leq 1$. For two runs of CMA-MAE that generate the same sequence of $m$ candidate solutions $\{S\} = \bm{\theta_1}, \bm{\theta_2}, ..., \bm{\theta_m}$, it follows that $f_{A_i}(\bm{\theta}) \leq f_{A_j}(\bm{\theta})$ for all $\bm{\theta}\in \mathbb{R}^n$.
\end{theorem}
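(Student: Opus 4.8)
The plan is to reduce the statement, which quantifies over all $\bm{\theta} \in \mathbb{R}^n$, to a one-dimensional claim about the acceptance thresholds maintained per cell. Because the discount function satisfies $f_A(\bm{\theta}) = t_e$, where $e$ is the cell containing $\bm{m}(\bm{\theta})$, and the tessellation $T$ does not depend on the learning rate, every $\bm{\theta}$ maps to the same cell in both runs. Hence it suffices to show that, for each fixed cell $e$, the final threshold $t_e$ produced by $\alpha_i$ is at most the one produced by $\alpha_j$. Furthermore, $t_e$ is affected only by those candidate solutions in $\{S\}$ whose measures fall in cell $e$, and since both runs process the identical sequence $\bm{\theta_1}, \ldots, \bm{\theta_m}$, this subsequence and its objective values $f_1, f_2, \ldots$ are common to both runs. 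The problem thus collapses to a single scalar recurrence driven by a shared input sequence but two different rates.

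The key technical move I would make is to rewrite the guarded threshold update as one monotone operation. Lines~\ref{mae:calc_imp}--\ref{mae:inc_thresh} set $t_e \leftarrow (1-\alpha) t_e + \alpha f$ exactly when $f > t_e$ and leave $t_e$ fixed otherwise. Since $(1-\alpha) t_e + \alpha f - t_e = \alpha (f - t_e)$ has the same sign as $f - t_e$ whenever $\alpha > 0$, this guarded assignment is equivalent (as far as the value of $t_e$ is concerned) to the unconditional rule $t_e \leftarrow \max\{\,t_e,\; (1-\alpha) t_e + \alpha f\,\}$, with the $\alpha = 0$ case simply leaving $t_e$ constant. This reformulation is what makes the argument tractable: it removes any need to track which solutions are ``accepted'' in each run and exposes the update as a composition of a convex combination and a maximum, both order-preserving.

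With this in hand, I would prove the cell-wise claim by induction on the length of the cell-$e$ subsequence. Writing $T_\alpha^{(k)}$ for the threshold after the first $k$ relevant solutions, the base case holds since $T_{\alpha_i}^{(0)} = min_f = T_{\alpha_j}^{(0)}$. For the step, I assume $a := T_{\alpha_i}^{(k-1)} \leq T_{\alpha_j}^{(k-1)} =: b$ and compare the updates by a short case analysis on where the incoming value $f_k$ sits. If $f_k \leq a \leq b$, neither threshold moves and the inequality is inherited. If $f_k \geq b \geq a$, both updates are genuine convex combinations and $(1-\alpha_i)a + \alpha_i f_k \leq (1-\alpha_i)b + \alpha_i f_k \leq (1-\alpha_j)b + \alpha_j f_k$, where the first step uses $a \leq b$ and the second uses $\alpha_i \leq \alpha_j$ together with $f_k \geq b$. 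In the mixed case $a \leq f_k \leq b$, the smaller-rate run updates to a convex combination of $a$ and $f_k$ that cannot exceed $f_k$, while the larger-rate run does not update and stays pinned at $b \geq f_k$. Applying this at every step yields $t_e^{(i)} \leq t_e^{(j)}$, and quantifying over all cells gives $f_{A_i}(\bm{\theta}) \leq f_{A_j}(\bm{\theta})$ for all $\bm{\theta}$.

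The main obstacle is precisely the guarded nature of the update: a larger learning rate makes the threshold climb faster, which can cause later candidate solutions to fail the acceptance test they would have passed under a smaller rate, so the two runs accept different subsets and a naive coupling breaks down. The $\max$-reformulation is the crux that sidesteps this, letting the induction proceed without the acceptance sets ever needing to coincide, and I expect the mixed case $a \leq f_k \leq b$ in the inductive step to be the point requiring the most careful statement.
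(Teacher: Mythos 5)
Your proposal is correct and follows essentially the same route as the paper's proof: reduce to a per-cell threshold comparison, induct over the sequence of additions, and split into the same three cases according to where $f(\bm{\theta_{k+1}})$ falls relative to the two thresholds, with your mixed case and both-update case matching the paper's Cases 2 and 3. Your $\max$-reformulation of the guarded update and the two-step chain $(1-\alpha_i)a + \alpha_i f_k \leq (1-\alpha_i)b + \alpha_i f_k \leq (1-\alpha_j)b + \alpha_j f_k$ are just a cleaner packaging of the algebra the paper carries out in its Case 3.
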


Finally, we wish to address the limitation that CMA-ME performs poorly on flat objectives, where all solutions have the same objective value. Consider how CMA-ME behaves on a flat objective $f(\bm{\theta}) = C$ for all $\bm{\theta} \in \mathbb{R}^n$, where $C$ is an arbitrary constant. CMA-ME will only discover each new cell once and will not receive any further feedback from that cell, since any future solution cannot replace the incumbent elite. This hinders the CMA-ME's movement in measure space, which is based on feedback from \textit{changes} in the archive. Future candidate solutions will only fall into occupied cells, triggering repeated restarts caused by CMA-ES's restart rule.

When the objective function plateaus, we still want CMA-ME to perform well and benefit from the CMA-ES adaptation mechanisms. One reasonable approach would be to keep track of the frequency $o_e$ that each cell $e$ has been visited in the archive, where $o_e$ represents the number of times a solution was generated in that cell. Then, when a flat objective occurs, we rank solutions by descending frequency counts. Conceptually, CMA-ME would descend the density histogram defined by the archive, pushing the search towards regions of the measure space that have been sampled less frequently. Theorem.~\ref{thm:density_descent} shows that we obtain the density descent behavior on flat objectives \textit{without additional changes to the CMA-MAE algorithm}.

\begin{theorem}
\label{thm:density_descent}
The CMA-MAE algorithm optimizing a constant objective function $f(\bm{\theta}) = C$ for all $\bm{\theta} \in \mathbb{R}^n$ is equivalent to density descent, when $0 < \alpha < 1$ and $min_f < C$.
\end{theorem}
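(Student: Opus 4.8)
The plan is to track the acceptance threshold $t_e$ of each cell as a closed-form function of its visit count and then show that the resulting improvement values $\Delta_i$ induce exactly the ranking a density-descent method would produce. First I would fix a cell $e$ and unroll the threshold update $t_e \leftarrow (1-\alpha)t_e + \alpha f$ under the constant objective $f \equiv C$, starting from $t_e = min_f$. Since this update is an affine contraction toward $C$ with ratio $1-\alpha$, after $n$ accepted visits the threshold is
\begin{equation}
t_e^{(n)} = C - (1-\alpha)^n (C - min_f).
\end{equation}
The hypotheses $min_f < C$ and $0 < \alpha < 1$ give $C - min_f > 0$ and $0 < 1-\alpha < 1$, so $(1-\alpha)^n (C - min_f) > 0$ for every finite $n$ and hence $t_e^{(n)} < C$. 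Consequently the acceptance test $f > t_e$ is satisfied at every visit, so every candidate mapping to a cell is accepted and the number of acceptances $o_e$ coincides with the number of times the cell has been sampled.

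Next I would compute the improvement that CMA-MAE assigns to a candidate landing in a cell that has already been visited $o_e$ times:
\begin{equation}
\Delta_i = f - t_e = C - \bigl(C - (1-\alpha)^{o_e}(C - min_f)\bigr) = (1-\alpha)^{o_e}(C - min_f).
\end{equation}
The factor $C - min_f$ is a fixed positive constant, and $o_e \mapsto (1-\alpha)^{o_e}$ is strictly decreasing because $0 < 1-\alpha < 1$. Therefore ranking the candidates by $\Delta_i$ in descending order is identical to ranking them by their visit counts $o_e$ in ascending order: a solution in a less-frequently-visited cell always receives a strictly larger improvement than one in a more-frequently-visited cell. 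Since CMA-MAE steps $\bm{\theta}$ and adapts $\Sigma$ toward the highest-ranked solutions (line~\ref{mae:adaptation}), the search is driven toward the least-visited regions of measure space, which is precisely the density-descent behavior described above.

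To finish, I would make the notion of density descent precise as the algorithm that ranks highest the candidates lying in the least-frequently-visited cells and feeds that ranking into the same CMA-ES adaptation step, and then observe that the two rankings agree candidate-by-candidate, so the two algorithms produce identical search trajectories. The main subtlety to handle carefully is the within-iteration bookkeeping: when several of the $\lambda$ sampled solutions fall in the same cell during one generation, each acceptance raises $t_e$ before the next is processed, so $o_e$ must be defined as the visit count at the moment a candidate is evaluated, including intra-batch updates. I expect this bookkeeping, together with pinning down a clean definition of the reference density-descent method so that the equivalence is well-posed, to be the only real obstacle; the monotonicity argument itself is immediate once the closed form for $t_e^{(n)}$ is in hand.
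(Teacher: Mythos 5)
Your proposal is correct and follows essentially the same route as the paper's proof: both arguments reduce to the observation that under a constant objective every cell's threshold depends only on its visit count through one common, strictly increasing function, so ranking candidates by improvement $C - t_e$ is identical to ranking them by ascending occupancy count, which is exactly the reference density-descent ranking. The one substantive difference is the key lemma. You derive the explicit closed form $t_e^{(n)} = C - (1-\alpha)^n(C - min_f)$ (which the paper proves separately, as Theorem~\ref{thm:closed_form_te}, but only in service of the learning-rate conversion formula), whereas the paper's proof of Theorem~\ref{thm:density_descent} uses only the strict monotonicity of the sequence $t_e(k)$, established by induction in Lemma~\ref{Le_2} without ever solving the recurrence. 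That weaker, monotonicity-only argument is what licenses the paper's remark that \emph{any} threshold update rule producing a strictly increasing sequence inherits the density-descent property; your closed form proves monotonicity for this particular rule but does not generalize. In exchange, your version makes explicit two points the paper leaves implicit: that $t_e^{(n)} < C$ for every finite $n$, so every candidate is accepted and the number of threshold updates genuinely coincides with the generation count $o_e$ that density descent ranks by (without this the two counts could diverge and the equivalence would not be well-posed), and that intra-batch threshold updates must be folded into the definition of $o_e$ at the moment each candidate is evaluated. Both points are worth spelling out, and neither introduces a gap.
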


We highlight that the proof of Theorem~\ref{thm:density_descent} is based on two critical properties. First, the threshold update rule forms a strictly increasing sequence of thresholds for each cell. Second, CMA-ES is invariant to order preserving transformations of its objective $f$. While we have proposed the update rule of line~\ref{mae:inc_thresh} of Algorithm~\ref{alg:cma-mae}, we note that any update rule that satisfies the increasing sequence property retains the density descent property and is thus applicable in CMA-MAE.

While Theorem~\ref{thm:density_descent} assumes a constant objective $f$, we conjecture that the theorem holds true generally when threshold $t_e$ in each cell $e$ approaches the local optimum within the cell boundaries (see Conjecture~\ref{conj:density_descent_convex} in the Appendix).

\begin{table*}[t]
\centering
\resizebox{1.0\linewidth}{!}{
\begin{tabular}{l|rr|rr|rr|rr|rr|rr}
\hline
             & \multicolumn{2}{l|}{LP (sphere)}     & \multicolumn{2}{l|}{LP (Rastrigin)}   &
             \multicolumn{2}{l|}{LP (plateau)}        & \multicolumn{2}{l|}{Arm Repertoire} &\multicolumn{2}{l|}{LSI (StyleGAN)} & \multicolumn{2}{l}{LSI (StyleGAN2)} \\ 
    \toprule
Algorithm     & QD-score & Coverage & QD-score & Coverage & QD-score & Coverage  & QD-score & Coverage & QD-score & Coverage & QD-score & Coverage \\
    \midrule
MAP-Elites  &   41.64 & 50.80\% & 31.43 & 47.88\% & 47.07 &   47.07\% & 71.40 & 74.09\% & 12.85 & 19.42\% & -936.96 & 4.48\% \\
MAP-Elites (line)  & 49.07 & 60.42\% & 38.29 & 56.51\% & 52.20 & 52.20\% & 74.55  & 75.61\% & 14.40 & 21.11\% & \textbf{-236.65} & \textbf{8.81\%}\\
CMA-ME    & 36.50 & 42.82\% & 38.02 & 53.09\% & 34.54  & 34.54\% & 75.82 & 75.89\% &  14.00 & 19.57\% & \NA & \NA\\
 CMA-MAE   & \textbf{64.86} & \textbf{83.31\%} & \textbf{52.65} & \textbf{80.46\%} & \textbf{79.27} & \textbf{79.29\%} & \textbf{79.03} & \textbf{79.24\%} & \textbf{17.67} & \textbf{25.08\%} & \NA & \NA \\

  \bottomrule
\end{tabular}
 }
\caption{Mean QD-score and coverage values after 10,000 iterations for each QD algorithm per domain.}
\label{tab:results}
\end{table*}

\begin{table*}[t]
\centering
\resizebox{1.0\linewidth}{!}{
\begin{tabular}{l|rr|rr|rr|rr|rr|rr}
\hline
             & \multicolumn{2}{l|}{LP (sphere)}     & \multicolumn{2}{l|}{LP (Rastrigin)}   &
             \multicolumn{2}{l|}{LP (plateau)}        & \multicolumn{2}{l|}{Arm Repertoire}  & \multicolumn{2}{l|}{LSI (StyleGAN)}  & \multicolumn{2}{l}{LSI (StyleGAN2)} \\ 
    \toprule
Algorithm     & QD-score & Coverage & QD-score & Coverage & QD-score & Coverage  & QD-score & Coverage & QD-score & Coverage  & QD-score & Coverage \\
    \midrule

 CMA-MEGA  & 75.32 & \textbf{100.00\%} & \textbf{63.07} & \textbf{100.00\%} & \textbf{100.00}  &\textbf{100.00\%} & 75.21 & 75.25\%  & 16.08 & 22.58\% & 9.17 & 14.91\%\\
 CMA-MAEGA  & \textbf{75.39} & \textbf{100.00\%} & 63.06 & \textbf{100.00\%} & \textbf{100.00} & \textbf{100.00\%} & \textbf{79.27} & \textbf{79.35\%}  & \textbf{16.20} &  \textbf{23.83\%} & \textbf{11.51} & \textbf{18.62\%}\\

  \bottomrule
\end{tabular}
 }
\caption{Mean QD-score and coverage values after 10,000 iterations for each DQD algorithm per domain.}
\label{tab:dqd_results}
\end{table*}

\section{Experiments}
\label{sec:experiments}

We compare the performance of CMA-MAE with the state-of-the-art QD algorithms MAP-Elites, MAP-Elites (line), and CMA-ME, using existing Pyribs~\citep{pyribs} QD library implementations. We set $\alpha=0.01$ for CMA-MAE and include additional experiments for varying $\alpha$ in section~\ref{sec:robustness}.
Because annealing methods replace solutions based on the threshold, we retain the best solution in each cell for comparison purposes. We include additional comparisons between CMA-MEGA and CMA-MAEGA -- the DQD counterpart to CMA-MAE.

We select the benchmark domains from prior work~\citep{fontaine2021differentiable}: linear projection~\citep{fontaine2020covariance}, arm repertoire~\citep{cully2017quality}, and latent space illumination~\citep{fontaine2020illuminating}. To evaluate the good exploration properties of CMA-MAE on flat objectives, we introduce a variant of the linear projection domain to include a ``plateau'' objective function that is constant everywhere for solutions within a fixed range and has a quadratic penalty for solutions outside the range. We describe the domains in detail in Appendix~\ref{sec:domain}. 

We additionally introduce a second LSI experiment on StyleGAN2~\citep{karras:stylegan2}, configured by insights from the generative art community~\cite{crowson2022vqgan,frans2021clipdraw} that improve the quality of single-objective latent space optimization. To improve control over image synthesis, the LSI (StyleGAN2) domain optimizes the full $9216$-dimensional latent $w$-space, rather than a compressed $512$-dimensional latent space in the LSI (StyleGAN) experiments. We exclude CMA-ME and CMA-MAE from this domain due to the prohibitive size of the covariance matrix. The LSI (StyleGAN2) domain allows us to evaluate the performance of DQD algorithms on a much more challenging DQD domain than prior work. We describe the domain in Appendix~\ref{sec:improving-lsi}.

\subsection{Experiment Design}

\label{subsec:design}
\noindent\textbf{Independent Variables.} We follow a between-groups design with two independent variables: the algorithm and the domain. 

\noindent\textbf{Dependent Variables.} We use the sum of $f$ values of all cells in the archive, defined as the \mbox{QD-score}~\citep{pugh2015confronting}, as a metric for the quality and diversity of solutions. Following prior work \citep{fontaine2021differentiable}, we normalize the QD-score metric by the archive size (the total number of cells from the tesselation of measure space) to make the metric invariant to archive resolution. We additionally compute the coverage, defined as the number of occupied cells in the archive divided by the total number of cells.

\begin{figure*}[t!]
\includegraphics[width=1.0\linewidth]{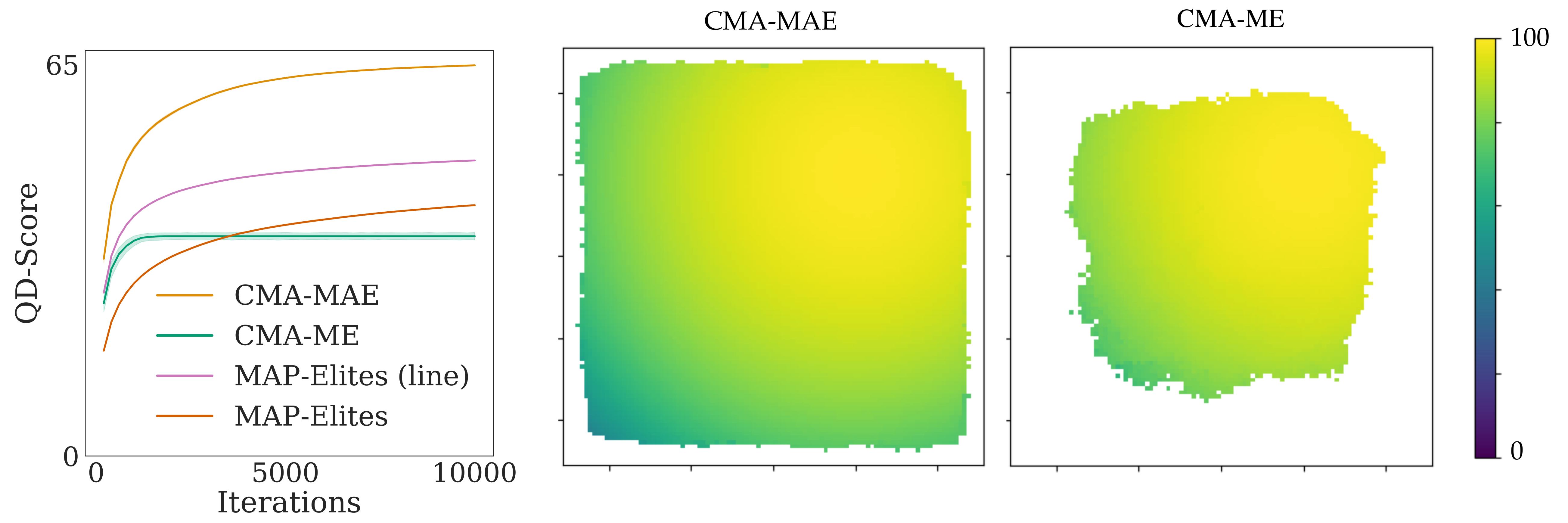}
\includegraphics[width=1.0\linewidth]{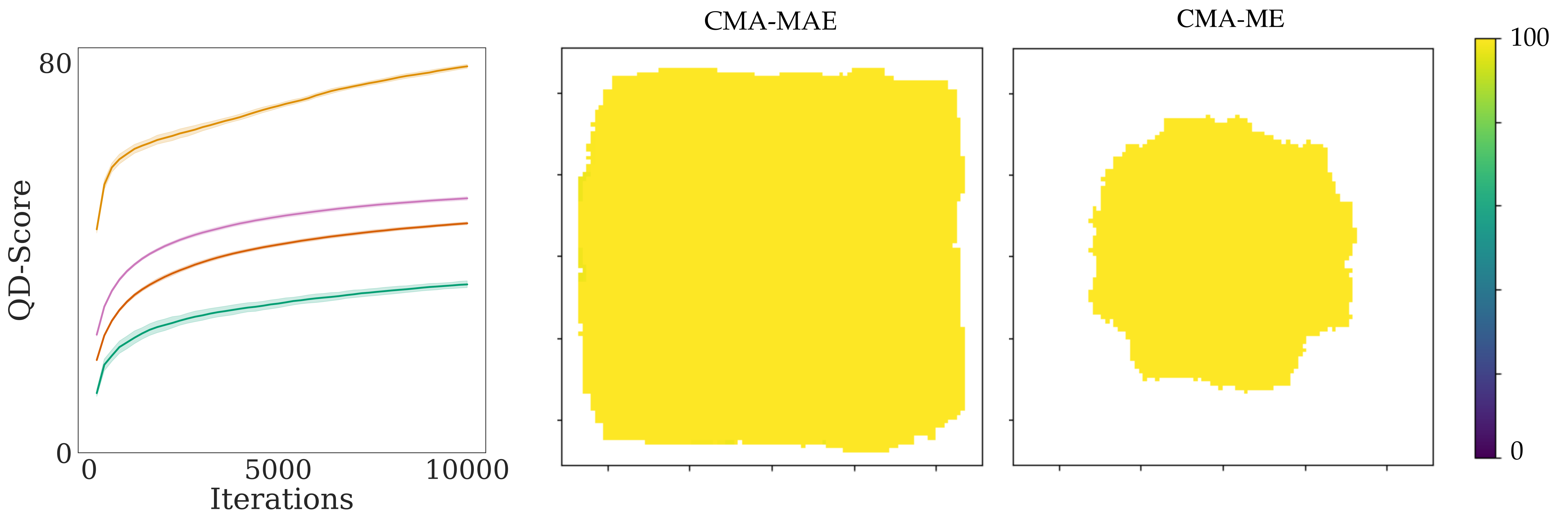}
\includegraphics[width=1.0\linewidth]{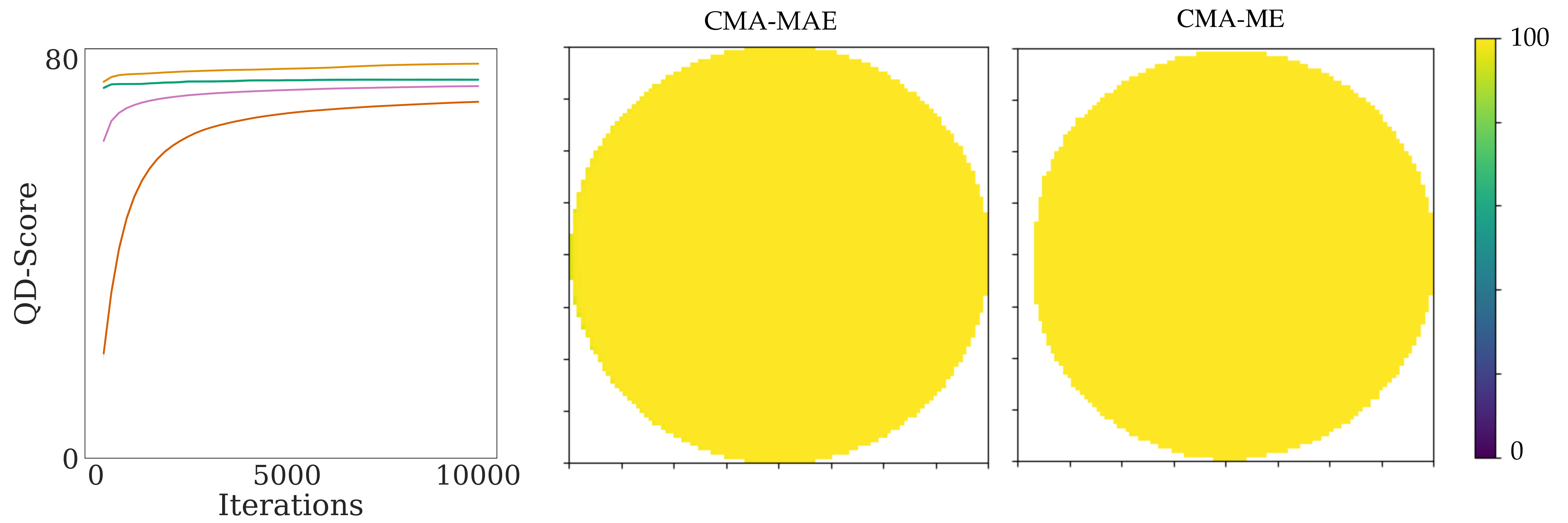}

\caption{QD-score plot with 95$\%$ confidence intervals and heatmaps of generated archives by CMA-MAE and CMA-ME for the linear projection sphere (top), plateau (middle), and arm repertoire (bottom) domains. Each heatmap visualizes an archive of solutions across a 2D measure space.}
\label{fig:summary}
\end{figure*}

\begin{figure*}[t!]
\centering
\includegraphics[width=1.0\linewidth]{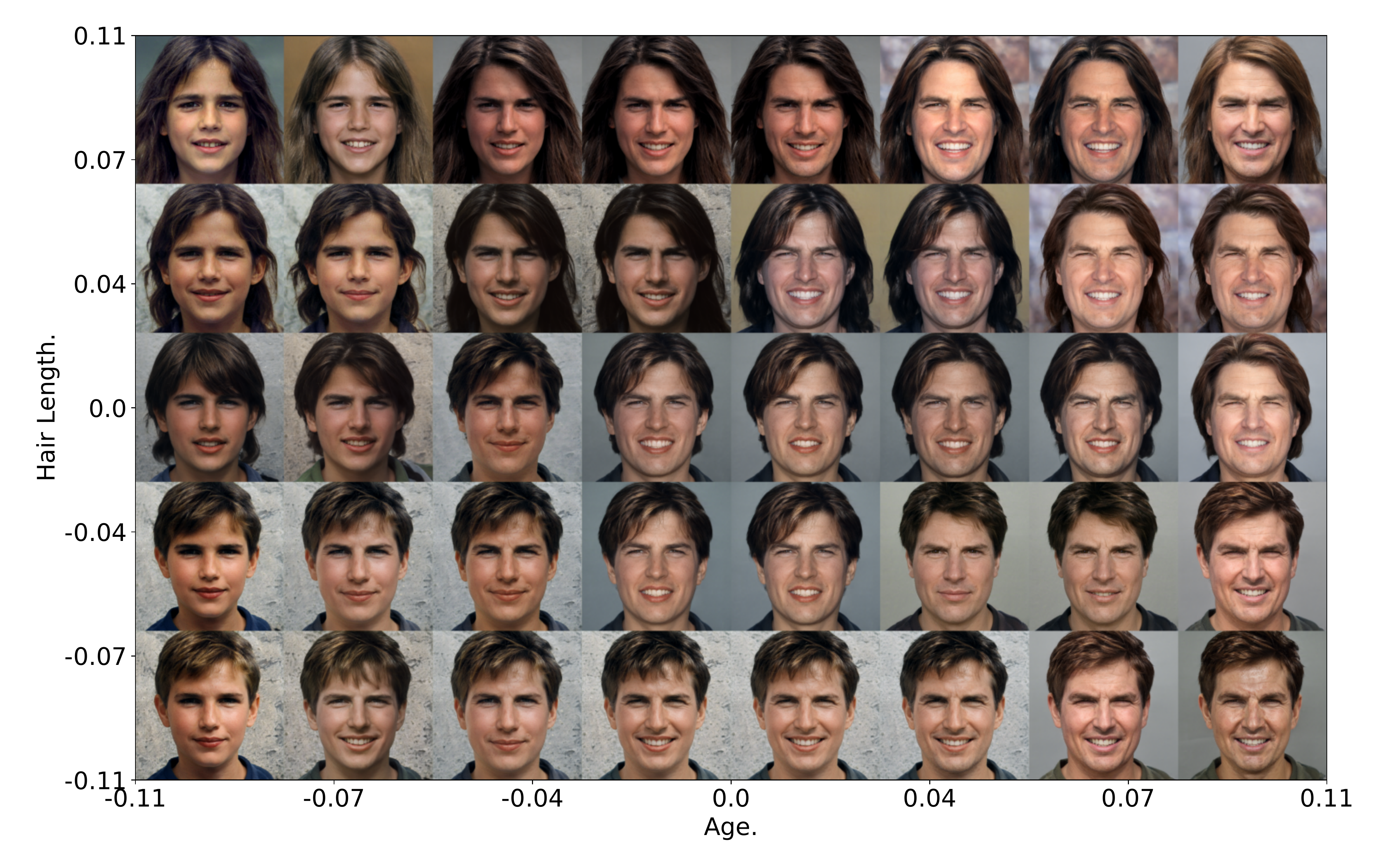}
\vspace{-1cm}
\caption{
 A latent space illumination collage for the objective ``A photo of the face of Tom Cruise.'' with hair length and age measures sampled from a final CMA-MAEGA archive for the LSI (StyleGAN2) domain. See Appendix~\ref{sec:improving-lsi} for more detail.}

\label{fig:collage_cruise}
\end{figure*}

\subsection{Analysis}
\noindent\textbf{Derivative-free QD Algorithms.} Table~\ref{tab:results} shows the QD-score and coverage values for each algorithm and domain, averaged over 20 trials for the linear projection (LP) and arm repertoire domains and over 5 trials for the LSI domain. Fig.~\ref{fig:summary} shows the QD-score values for increasing number of iterations and example archives for CMA-MAE and CMA-ME with 95\% confidence intervals.

We conducted a two-way ANOVA to examine the effect of the algorithm and domain on the QD-score. There was a significant interaction between the algorithm and the domain ($F(12,320)=1958.34,p<0.001$). Simple main effects analysis with Bonferroni corrections showed that CMA-MAE outperformed all derivative-free QD baselines in all benchmark domains. 

For the arm repertoire domain, we can compute the optimal archive coverage by testing whether each cell overlaps with a circle of radius equal to the maximum arm length (see Appendix~\ref{sec:domain}). We observe that CMA-MAE approaches the computed optimal coverage $80.24\%$ for a resolution of $100 \times 100$.

We observe from the runs that CMA-MAE initially explores regions of the measure space that have high-objective values. Once the archive becomes saturated, CMA-MAE reduces to approximate density descent, as we prove in Theorem~\ref{thm:density_descent} for flat objectives. On the other hand, CMA-ME does not receive any exploration signal when the objective landscape becomes flat, resulting in poor performance.

\noindent\textbf{DQD Algorithms.} We additionally compare CMA-MEGA and CMA-MAEGA in the five benchmark domains.  Table~\ref{tab:dqd_results} shows the QD-score and coverage values for each algorithm and domain, averaged over 20 trials for the linear projection (LP) and arm repertoire domains and over 5 trials for the LSI domains. We conducted a two-way ANOVA to examine the effect of the algorithm and domain on the QD-score. There was a significant interaction between the search algorithm and the domain ($F(5,168)=165.7,p<0.001$). Simple main effects analysis with Bonferroni corrections showed that CMA-MAEGA outperformed CMA-MEGA in the LP (sphere), arm repertoire, and LSI (StyleGAN2) domains. There was no statistically significance difference between the two algorithms in the LP (Rastrigin), LP (plateau), and LSI (StyleGAN) domains. 

We attribute the absence of a statistical difference in the QD-score between the  two algorithms on the LP (Rastrigin) and LP (plateau) domains on the perfect coverage obtained by both algorithms. Thus, any differences in QD-score are based on the objective values of the solutions returned by each algorithm. In LP (plateau), the optimal objective for each cell is easily obtainable for both methods. The LP (Rastrigin) domain contains many local optima, because of the form of the objective function (Eq.~\ref{eq:rastrigin}). CMA-MEGA will converge to these optima before restarting, behaving as a single-objective optimizer within each local optimum. Because of the large number of local optima in the domain, CMA-MEGA obtains a higher QD-score.

In the LSI (StyleGAN) domain, we attribute similar performance between CMA-MEGA and \mbox{CMA-MAEGA} to the restart rules used to keep each search within the training distribution of StyleGAN. The ill-conditioned latent space of StyleGAN also explains why CMA-MAE outperforms both DQD algorithms on this domain. Being a natural gradient optimizer, CMA-MAE is an approximate second-order method, and second-order methods are better suited for optimizing spaces with ill-conditioned curvature.

On the other hand, in the LSI (StyleGAN2) domain, we regularize the search space by an L2 penalty in latent space, allowing for a larger learning rate and a basic restart rule for both algorithms, while still preventing drift out of the training distribution of StyleGAN2. Because of the fewer restarts, CMA-MAEGA can take advantage of the density descent property, which was shown to improve exploration in CMA-MAE, and outperform CMA-MEGA. Fig.~\ref{fig:collage_cruise} shows an example collage generated from the final archive of CMA-MAEGA on the LSI (StyleGAN2) domain. We note that because StyleGAN2 has a better conditioning on the latent space~\citep{karras:stylegan2}, the model is better suited for first-order optimization of its latent space, which helps distinguish between the two DQD algorithms. 

We include runs for MAP-Elites and MAP-Elites (line) on the LSI (StyleGAN2) domain in Table~\ref{tab:results} for comparison purposes. In the LSI (StyleGAN2) domain, the two algorithms drift out of distribution and suffer a regularization penalty that results in negative objective values. We observe that the gap in performance between derivative-free QD algorithms and DQD algorithms is higher in the LSI (StyleGAN2) domain than in LSI (StyleGAN) domain. This highlights the benefits of leveraging gradients of the objective and measure functions in high-dimensional search spaces.

\begin{figure*}[t!]
\includegraphics[width=1.0\linewidth]{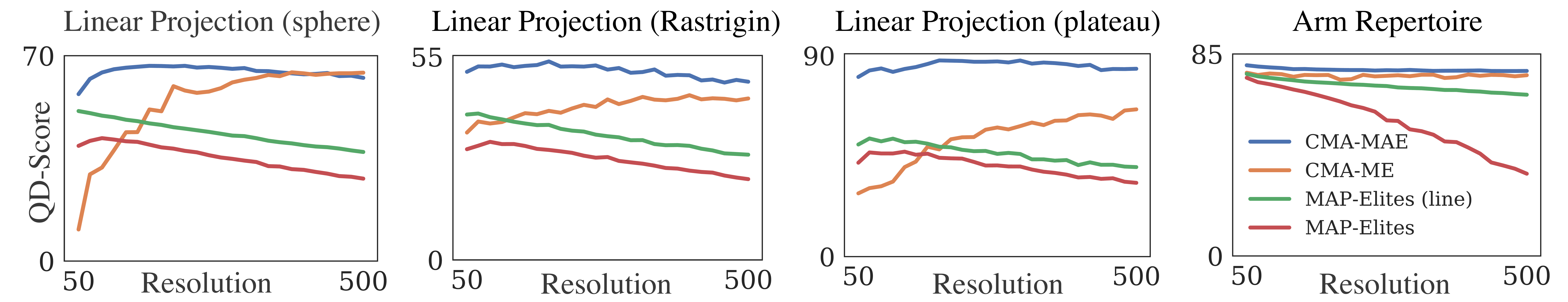}
\vspace{-0.5cm}
\caption{Final QD-score of each algorithm for 25 different archive resolutions.}
\label{fig:results_resolutions}
\end{figure*}

\begin{table*}[t]
\centering
\resizebox{0.73\linewidth}{!}{
\begin{tabular}{r|rr|rr|rr|rr}
\hline
             & \multicolumn{2}{l|}{LP (sphere)}     & \multicolumn{2}{l|}{LP (Rastrigin)}   &
             \multicolumn{2}{l|}{LP (plateau)}        & \multicolumn{2}{l}{Arm Repertoire} \\ \toprule
$\alpha$ (CMA-MAE)  & QD-score & Coverage & QD-score & Coverage  & QD-score & Coverage   & QD-score & Coverage \\
    \midrule
0.000  & 5.82 & 6.06\%  & 5.33 & 6.24\% &  19.49 & 19.49\% & 65.91 & 66.25\% \\
0.001  & 62.65 & 79.36\% & 47.87 & 68.10\% & 77.60  &  77.68\% & 78.63 & 79.07\% \\
0.010  & \textbf{64.86} & \textbf{83.31\%} & \textbf{52.65} & \textbf{80.56\%} & 79.27   & 79.29\% & \textbf{79.03} & \textbf{79.24\%} \\
0.100  & 60.42 & 76.19\% & 48.74  & 72.50\% & \textbf{83.21}  & \textbf{83.21\%} & 78.74 & 78.85\% \\
1.000  & 37.01 & 43.50\% &  37.86 & 52.82\% &  34.00 & 34.00\% & 75.94 & 76.01\%\\
  \bottomrule
\end{tabular}
 }
\caption{Mean QD metrics after 10,000 iterations for CMA-MAE at different learning rates.}
\vspace{-0.5em}
\label{tab:lr_results}
\end{table*}

\section{On the Robustness of CMA-MAE} \label{sec:robustness}

Next, we present two studies that evaluate \mbox{CMA-MAE} robustness across two hyperparameters that may affect algorithm performance: the archive learning rate $\alpha$ and the archive resolution.

\noindent\textbf{Archive Learning Rate.}
We examine the effect of different archive learning rates on the performance of CMA-MAE in the linear projection and arm repertoire domains. We vary the learning rate from 0 to 1 on an exponential scale, while keeping the resolution constant.%

Table~\ref{tab:lr_results} shows that running CMA-MAE with the different \mbox{$0<\alpha<1$} values results in similar performance, showing that \mbox{CMA-MAE} is fairly robust to the exact choice of $\alpha$ value. On the other hand, if $\alpha=0$ or $\alpha=1$ the performance drops drastically. Setting $\alpha=1$ results in very similar performance with CMA-ME, which supports our insight from Theorem~\ref{thm:CMA-ME}. 

\noindent\textbf{Archive Resolution.} As noted in prior work \citep{cully2021multi,fontaine2021differentiable}, quality diversity algorithms in the \mbox{MAP-Elites} family sometimes perform differently when run with different archive resolutions. For example, in the linear projection domain proposed in prior work \citep{fontaine2020covariance}, \mbox{CMA-ME} outperformed MAP-Elites and \mbox{MAP-Elites (line)} for archives of resolution $500 \times 500$, while in this paper we observe that it performs worse for resolution $100 \times 100$. In this study, we investigate how CMA-MAE performs at different archive resolutions.

First, we note that the optimal archive learning rate $\alpha$ is dependent on the resolution of the archive. Consider as an example a sequence of solution additions to two archives $A_1$ and $A_2$ of resolution \mbox{$100 \times 100$} and $200 \times 200$, respectively. $A_2$ subdivides each cell in $A_1$ into four cells, thus archive $A_2$'s thresholds $t_e$ should increase at a four times faster rate than $A_1$. To account for this difference, we compute $\alpha_2$ for $A_2$ via a conversion formula $\alpha_2 = 1 - (1 - \alpha_1) ^ r$ (see derivation in Appendix~\ref{sec:conversion}), where $r$ is the ratio of cell counts between archives $A_1$ and $A_2$. We initialize $\alpha_1=0.01$ for $A_1$. In the above example, $\alpha_2 = 1 - (1 - 0.01) ^ 4 = 0.0394$.

Fig.~\ref{fig:results_resolutions} shows the QD-score of CMA-MAE with the resolution-dependent archive learning rate and the baselines for each benchmark domain. CMA-ME performs worse as the resolution decreases because the archive changes quickly at small resolutions, affecting CMA-ME's adaptation mechanism. On the contrary, MAP-Elites and MAP-Elites (line) perform worse as the resolution increases due to having more elites to perturb. CMA-MAE's performance is invariant to the resolution of the archive.

\section{Related Work}

\noindent\textbf{Quality Diversity Optimization.}  The predecessor to quality diversity optimization, simply called diversity optimization, originated with the Novelty Search algorithm~\citep{lehman2011abandoning}, which searches for a collection of solutions that are diverse in measure space. Later work introduced the Novelty Search with Local Competition (NSLC)~\citep{lehman2011evolving} and \mbox{MAP-Elites}~\citep{cully:nature15,mouret2015illuminating} algorithms, which combined single-objective optimization with diversity optimization and were the first QD algorithms. Since then, several QD algorithms have been proposed, based on a variety of single-objective optimization methods, such as Bayesian optimization~\citep{kent2020bop}, evolution strategies~\citep{conti2018improving, colas2020scaling, fontaine2020covariance}, differential evolution~\citep{choi:gecco21}, and gradient ascent~\citep{fontaine2021differentiable}. Several works have improved selection mechanisms~\citep{sfikas2021monte, cully2017quality}, archives~\citep{fontaine2019mapping,vassiliades2016using,smith2016rapid}, perturbation operators~\citep{vassiliades:gecco18, nordmoen2018dynamic}, and resolution scaling~\citep{fontaine2019mapping, cully2019autonomous, grillotti2022relevance}.

\noindent\textbf{QD with Gradient Information.} Several works combine gradient information with QD optimization without leveraging the objective and measure gradients directly. For example, in model-based QD optimization~\citep{gaier2018data, hagg2020designing, cazenille2019exploring, Keller2020ModelBasedQS, Lim2021DynamicsAwareQF, zhang2021deep, gaier2020blackbox}, prior work \citep{rakicevic2021policy} trains an autoencoder on the archive of solutions and leverages the Jacobian of the decoder network to compute the covariance of the Gaussian perturbation. Works in quality diversity reinforcement learning (QD-RL)~\citep{parkerholder2020dvd, pierrot2020diversity, nilsson2021policy, tjanaka2022approximating} approximate a reward gradient or diversity gradient via a critic network, action space noise, or evolution strategies and incorporate those gradients into a QD-RL algorithm.

\noindent\textbf{Acceptance Thresholds.} Our proposed archive learning rate $\alpha$ was loosely inspired by simulated annealing methods~\citep{bertsimas1993simulated} that maintain an acceptance threshold that gradually becomes more selective as the algorithm progresses. The notion of an acceptance threshold is also closely related to minimal criterion methods in evolutionary computation~\citep{lehman2010revising, brant2017minimal, brant2020diversity, stanley2016strictness}. Our work differs by both 1) maintaining an acceptance threshold per archive cell rather than a global threshold and 2) annealing the threshold.

\section{Limitations and Future Work}  \label{sec:limitations}

Our approach introduced two hyperparameters, $\alpha$ and $min_f$ (see Appendix~\ref{subsec:min_thresh_init} for $min_f$ analysis), to control the rate that $f-f_A$ changes. We observed that an $\alpha$ set strictly between $0$ and $1$ yields theoretical exploration improvements and that CMA-MAE is robust with respect to the exact choice of $\alpha$. We additionally derived a conversion formula that converts an $\alpha_1$ for a specific archive resolution to an equivalent $\alpha_2$ for a different resolution. However, the conversion formula still requires practitioners to specify a good initial value of $\alpha_1$. Future work will explore ways to automatically initialize $\alpha$, similar to how CMA-ES automatically assigns internal parameters~\citep{hansen:cma16}.

CMA-MAE's DQD counterpart CMA-MAEGA sets a new state-of-the-art in DQD optimization. However, observing its performance benefits required the more challenging LSI (StyleGAN2) domain. This highlights the need for more challenging DQD problems to advance research in DQD algorithms, since many of the current benchmark domains can be solved optimally by existing algorithms.

Quality diversity optimization is a rapidly growing branch of stochastic optimization with applications in generative design~\citep{hagg:gecco21, gaier2020blackbox, gaier2018data}, automatic scenario generation in robotics~\citep{fontaine:thri21, fontaine2021importance, fontaine:sa_rss2021}, reinforcement learning~\citep{parkerholder2020dvd, pierrot2020diversity, nilsson2021policy, tjanaka2022approximating}, damage recovery in robotics~\citep{cully:nature15}, and procedural content generation~\citep{gravina2019procedural, fontaine2020illuminating, zhang2021deep, earle2021illuminating, khalifa2018talakat, steckel2021illuminating, schrum2020cppn2gan, sarkar2021generating,bhatt2022dsage}. Our paper introduces a new quality diversity algorithm, CMA-MAE. Our theoretical findings inform our experiments, which show that \mbox{CMA-MAE} addresses three major limitations affecting \mbox{CMA-ME}, leading to state-of-the-art performance and robustness.

\section{Acknowledgments}

This work was supported by the NSF CAREER Award (\#2145077) and NSF NRI (\#2024949). We thank Bryon Tjanaka and David Lee for their helpful discussions and feedback when deriving the batch threshold update rule in Appendix~\ref{sec:batch_update}. We would also like to thank Lisa B. Soros for her feedback on a preliminary version of this paper.

\bibliographystyle{acm}
\bibliography{references}

\appendix

\clearpage
\onecolumn

\section*{Appendix}

\section{Hyperparameter Selection}
\label{sec:hyperparams}
For all domains we mirror the hyperparameter selection of prior work~\citep{fontaine2021differentiable}. For CMA-MAE and \mbox{CMA-MAEGA}, we duplicate the hyperparameter selections of CMA-ME and CMA-MEGA, respectively. Following prior work~\citep{fontaine2020covariance}, we run all algorithms with 15 emitters on the linear projection and arm repertoire domains. In the latent space illumination domain, we run experiments with only one emitter, due to the computational expense of the domain. Emitters are independent CMA-ES instances that run in parallel with a shared archive. For each algorithm, we select a batch size $\lambda = 36$ following prior work~\citep{fontaine2021differentiable}. For MAP-Elites and MAP-Elites (line), we initialize the archive with $100$ random solutions, sampled from the distribution $\mathcal{N}(\bm{0},I)$. These initial solutions do not count in the evaluation budget for MAP-Elites and \mbox{MAP-Elites (line)}. For algorithms in the CMA-ME family (CMA-ME, CMA-MAE, CMA-MEGA, and CMA-MAEGA), we initialize $\bm{\theta_0} = \bm{0}$ for every domain.

In our experiments we want to directly compare the ranking mechanisms of \mbox{CMA-ME} and \mbox{CMA-MAE}. However, \mbox{CMA-ME} is typically run with a ``no improvement'' restart rule, where the algorithm will restart if no solution changes the archive. Due to \mbox{CMA-MAE}'s annealed acceptance threshold $t_e$, a ``no improvement'' restart rule would cause \mbox{CMA-ME} and \mbox{CMA-MAE} to restart at different rates, confounding the effects of restarts and rankings. Filter selection also has a similar confounding effect as solutions are selected if they change the archive. For these reasons, in the main paper we run CMA-ME with a basic restart rule (CMA-ES style restarts only~\citep{hansen:cma16}) and $\mu$ selection~\citep{hansen:cma16} (selecting the top half of the ranking). In Appendix Section~\ref{sec:additional-results}, we run an extra CMA-ME with filter selection and the ``no improvement'' restart rule, which we denote CMA-ME*. We include, as an additional baseline, a configuration of CMA-ME that mixes emitters that optimize only for the objective with emitters that optimize for improvement, a configuration first studied in prior work~\citep{cully2021multi}. We refer to this configuration as CMA-ME (imp, opt).

In the latent space illumination domain, due to the computational expense of the domain, we compare directly against the results in prior work~\citep{fontaine2021differentiable}, where we obtained the data (MIT license) with consent from the authors. For CMA-MAE and CMA-MAEGA we include the ``no improvement'' restart rule to match CMA-ME and CMA-MEGA as closely as possible. For this domain, we take gradient steps with the Adam optimizer~\citep{kingma2014adam}, following the recommendation of prior work~\citep{fontaine2021differentiable}. However, we run CMA-MAE with $\mu$ selection, since we found that  small values of the archive learning rate $\alpha$ make filter selection worse.

In Appendix~\ref{sec:improving-lsi}, we describe a second LSI experiment on StyleGAN2~\citep{karras:stylegan2} configured by insights from the generative art community that improve the quality of single-objective latent space optimization. For this domain, we configure CMA-MAEGA and CMA-MEGA to use a ``basic'' restart rule because the latent space L2 regularization keeps solutions in the StyleGAN2 training distribution. For this experiment, the latent space is large ($n=9216$), so we exclude CMA-ME and \mbox{CMA-MAE} due to the size of the covariance matrix ($9216 \times 9216$) and the prohibitive cost for computing an eigendecomposition of a large covariance matrix.

\vspace{10px}

\noindent\textbf{Linear Projection (sphere, Rastrigin, plateau).} 
\begin{itemize}
    \item MAP-Elites: $\sigma = 0.5$
    \item MAP-Elites (line): $\sigma_1 = 0.5$,  $\sigma_2 = 0.2$
    \item CMA-ME: $\sigma = 0.5$, $\mu$ selection, basic restart rule
    \item CMA-ME*: $\sigma = 0.5$, filter selection, no improvement restart rule
    \item CMA-ME (imp, opt): $\sigma = 0.5$, $\mu$ selection, basic restart rule, \\7 optimizing and 8 improvement emitters
    \item CMA-MAE: $\sigma = 0.5$, $\alpha = 0.01$, $min_f = 0$, $\mu$ selection, basic restart rule
    \item CMA-MEGA:  $\sigma_g = 10.0$, $\eta = 1.0$, basic restart rule, gradient ascent optimizer
    \item CMA-MAEGA:  $\sigma_g = 10.0$, $\eta = 1.0$,  $\alpha = 0.01$, $min_f = 0$, basic restart rule, gradient ascent optimizer
\end{itemize}

\noindent\textbf{Arm Repertoire.}
\begin{itemize}
    \item MAP-Elites:  $\sigma = 0.1$
    \item MAP-Elites (line):  $\sigma_1 = 0.1$, $\sigma_2 = 0.2$
    \item CMA-ME: $\sigma = 0.2$, $\mu$ selection, basic restart rule
    \item CMA-ME*: $\sigma = 0.2$, filter selection, no improvement restart rule
    \item CMA-ME (imp, opt): $\sigma = 0.2$, $\mu$ selection, basic restart rule, \\7 optimizing and 8 improvement emitters
    \item CMA-MAE: $\sigma = 0.2$, $\alpha = 0.01$, $min_f = 0$, $\mu$ selection, basic restart rule
    \item CMA-MEGA: $\sigma_g= 0.05$, $\eta = 1.0$, basic restart rule, gradient ascent optimizer
    \item CMA-MAEGA: $\sigma_g= 0.05$, $\eta = 1.0$,  $\alpha = 0.01$, $\min_f = 0$, basic restart rule, gradient ascent optimizer
\end{itemize}

\noindent\textbf{Latent Space Illumination. (StyleGAN)}
\begin{itemize}
    \item MAP-Elites:  $\sigma = 0.2$
    \item MAP-Elites (line):  $\sigma_1 = 0.1$, $\sigma_2 = 0.2$ 
    \item CMA-ME: $\sigma = 0.02$, filter selection, no improvement restart rule
    \item CMA-MAE: $\sigma = 0.02$, $\alpha = 0.1$, $min_f = 55$, $\mu$ selection, no improvement restart rule, 50 iteration timeout
    \item CMA-MEGA:  $\sigma_g = 0.002$, $\eta = 0.002$, Adam optimizer, no improvement restart rule
    \item CMA-MAEGA:  $\sigma_g = 0.002$, $\eta = 0.002$, $\alpha = 0.1$, $min_f = 55$, Adam optimizer, no improvement restart rule, 50 iteration timeout
\end{itemize}

\noindent\textbf{Latent Space Illumination. (StyleGAN 2)}
\begin{itemize}
    \item MAP-Elites:  $\sigma = 0.1$
    \item MAP-Elites (line):  $\sigma_1 = 0.1$, $\sigma_2 = 0.2$ 
    \item CMA-MEGA:  $\sigma_g = 0.01$, $\eta = 0.05$, Adam optimizer, basic restart rule
    \item CMA-MAEGA:  $\sigma_g = 0.01$, $\eta = 0.05$, $\alpha = 0.02$, $min_f = 0$, Adam optimizer, basic restart rule
\end{itemize}

\noindent\textbf{Adam Hyperparameters.}
We use the same hyperparameters as previous work~\cite{styleclip, fontaine2021differentiable}.
\begin{itemize}
    \item $\beta_1=0.9$
    \item $\beta_2=0.999$
\end{itemize}

\noindent\textbf{Archives.} For the linear projection and arm repertoire domains, we initialize an archive of $100 \times 100$ cells for all algorithms. For latent space illumination we initialize an archive of $200 \times 200$ cells for all algorithms, following prior work~\citep{fontaine2021differentiable}.

\section{Domain Details}
\label{sec:domain}
To experimentally evaluate both CMA-MAE and CMA-MAEGA, we select domains from prior work on DQD~\citep{fontaine2021differentiable}: linear projection~\citep{fontaine2020covariance}, arm repertoire~\citep{cully2017quality}, and latent space illumination~\citep{fontaine2020illuminating}. While many quality diversity optimization domains exist, we select these because gradients of $f$ and $\bm{m}$ are easy to compute analytically and allow us to evaluate DQD algorithms in addition to derivative-free QD algorithms. To evaluate the good exploration properties of CMA-MAE on flat objectives, we introduce a variant of the linear projection domain to include a ``plateau'' objective function.

\noindent\textbf{Linear Projection.} The linear projection domain~\citep{fontaine2020covariance} was introduced to benchmark distortions caused by mapping a high-dimensional search space to a low-dimensional measure space. The domain forms a 2D measure space by a linear projection that bounds the contribution of each component $\theta_i$ of the projection to the range $[-5.12, 5.12]$. QD algorithms must adapt the step size of each component $\theta_i$ to slowly approach the extremes of the measure space, with a harsh penalty for components outside $[-5.12, 5.12]$. As QD domains must provide an objective, the linear projection domain included two objectives from the black-box optimization benchmarks~\citep{hansen:arxiv16,hansen:gecco10}: sphere and Rastrigin. Following prior work~\citep{fontaine2020covariance}, we run all experiments for $n=100$.

\begin{figure}[t!]
    \centering
    \begin{tabular}{cc}
    \begin{subfigure}{0.45\linewidth}
      \includegraphics[width = 1.0\linewidth]{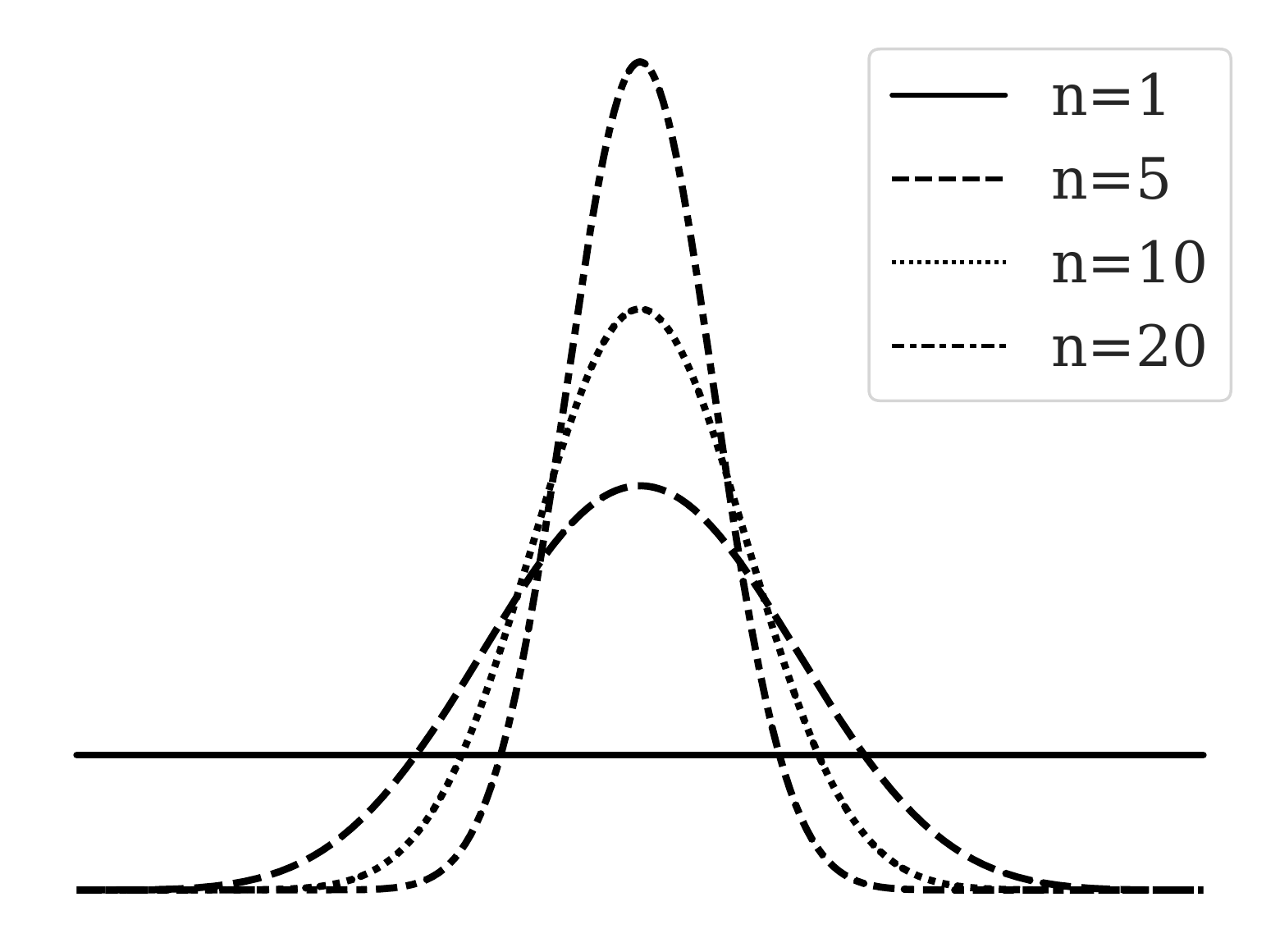}
      \caption{}
      \end{subfigure}& 
    \begin{subfigure}{0.45\linewidth}
      \includegraphics[width = 1.0\linewidth]{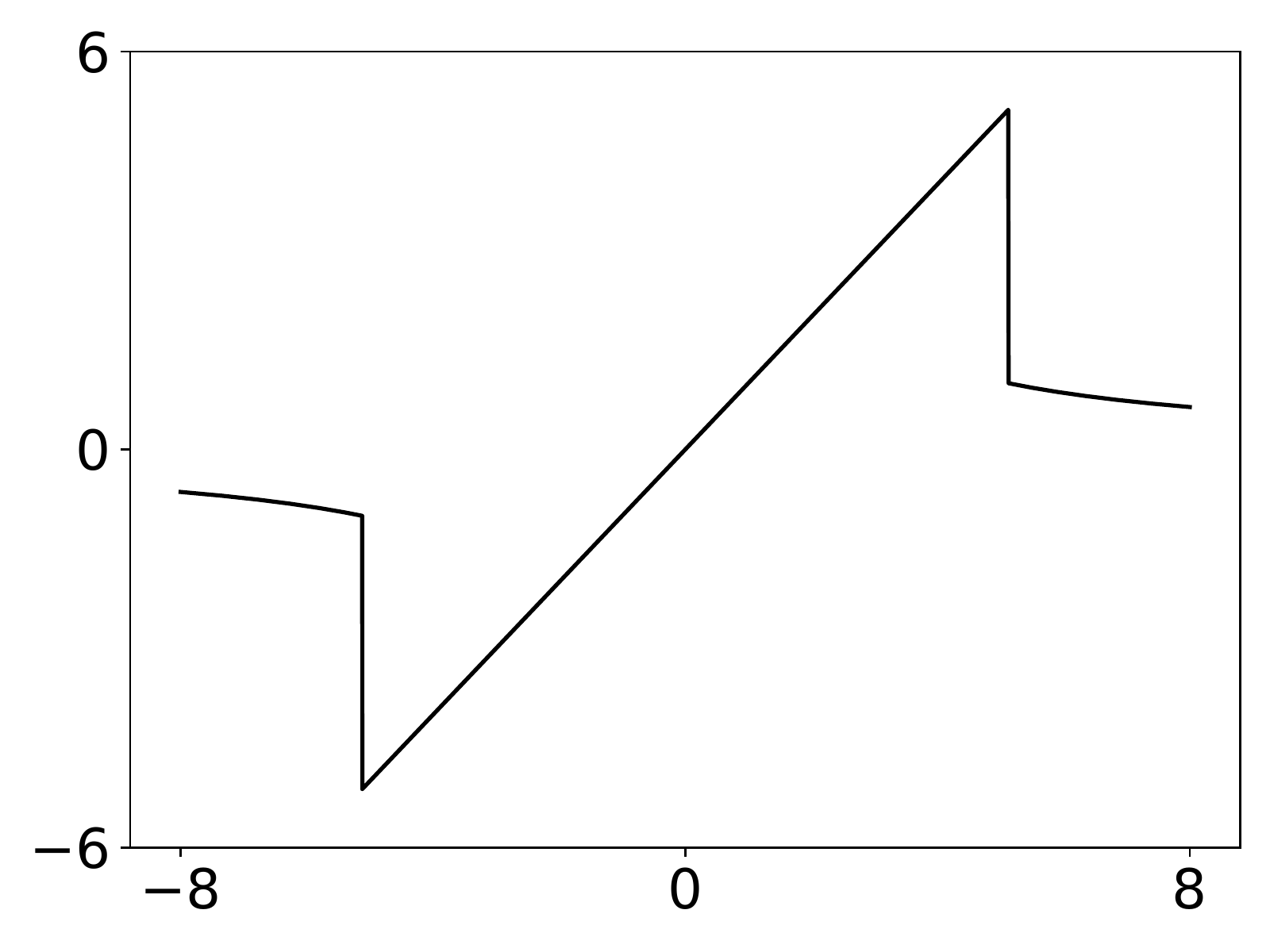}
      \caption{}
      \end{subfigure}
     \end{tabular}
   \caption{Measure function figures reproduced from prior work~\citep{fontaine2020covariance, fontaine2021differentiable} with the authors' permission. (a) a Bates distribution. (b) $clip$ function for defining the measures in the linear projection domain.}
    \label{fig:lp_examples}
\end{figure}

Formally, the measure functions are defined as a linear projection, a weighted sum of the components $\theta_i \in \mathbb{R}$ of a solution $\bm{\theta} \in \mathbb{R} ^ n$. The first measure function $m_1$ is a weighted sum of the first half of the solution $\bm{\theta}$, and the second measure function $m_2$ is a weighted sum of the second half of the solution $\bm{\theta}$ (see Eq.~\ref{eq:projection}). To ensure that all solutions mapped to measure space occupy a finite volume, the contribution in measure space of each component $\theta_i$ is bounded to the range $[-5.12, 5.12]$ via a clip function (see Eq.~\ref{eq:clip}) that applies a harsh penalty for solution components $\theta_i$ stepping outside the range $[-5.12, 5.12]$.

\begin{equation} \label{eq:clip}
  clip(\theta_i) =
  \begin{cases}
    \theta_i & \text{if $-5.12 \leq \theta_i \leq 5.12$} \\
    5.12 / \theta_i & \text{otherwise}
  \end{cases}
\end{equation}

\begin{equation} \label{eq:projection}
m(\bm{\theta}) = \left( \sum_{i=1}^{\lfloor{\frac{n}{2}}\rfloor}  {clip(\theta_i)}, \sum_{i=\lfloor{\frac{n}{2}}\rfloor+1}^{n}  {clip(\theta_i)} \right)
\end{equation}

Fig.~\ref{fig:lp_examples} visualizes \emph{why} the linear projection domain is challenging. First, we note that the density of solutions in search space mapped to measure space mostly occupies the region close to $\bm{0}$. To justify why, consider sampling uniformly in the hypercube $[-5.12, 5.12]^n$ in search space. We note that each of these points maps to the linear region of the measure functions and each of our measures becomes a sum of random variables. If we divide by $n$, we normalize by the dimensions of the search space, then the measure functions become an \emph{average} of random variables. The average of $n$ uniform random variables is the Bates distribution~\citep{johnson1995continuous}, a distribution that narrows in variance as $n$ grows larger. Without the $clip$ function, a QD algorithm could simply increase a single $\theta_i$ to reach any point in the measure space. However, the clip function prevents this by bounding the contribution of each component of $\bm{\theta}$ to the range $[-5.12, 5.12]$. To reach the extremes of measure space all components $\theta_i$ must converge to the extremums $\pm 5.12$. The linear projection domain is challenging to explore due to both the clustering of solutions in a small region of measure space and the heavy measure space penalties applied by the clip function when a component $\theta_i$ leaves the region $[-5.12, 5.12]$.

\begin{figure}[t!]
    \centering
    \begin{tabular}{ccc}
    \begin{subfigure}{0.3\linewidth}
      \includegraphics[width = 1.0\linewidth]{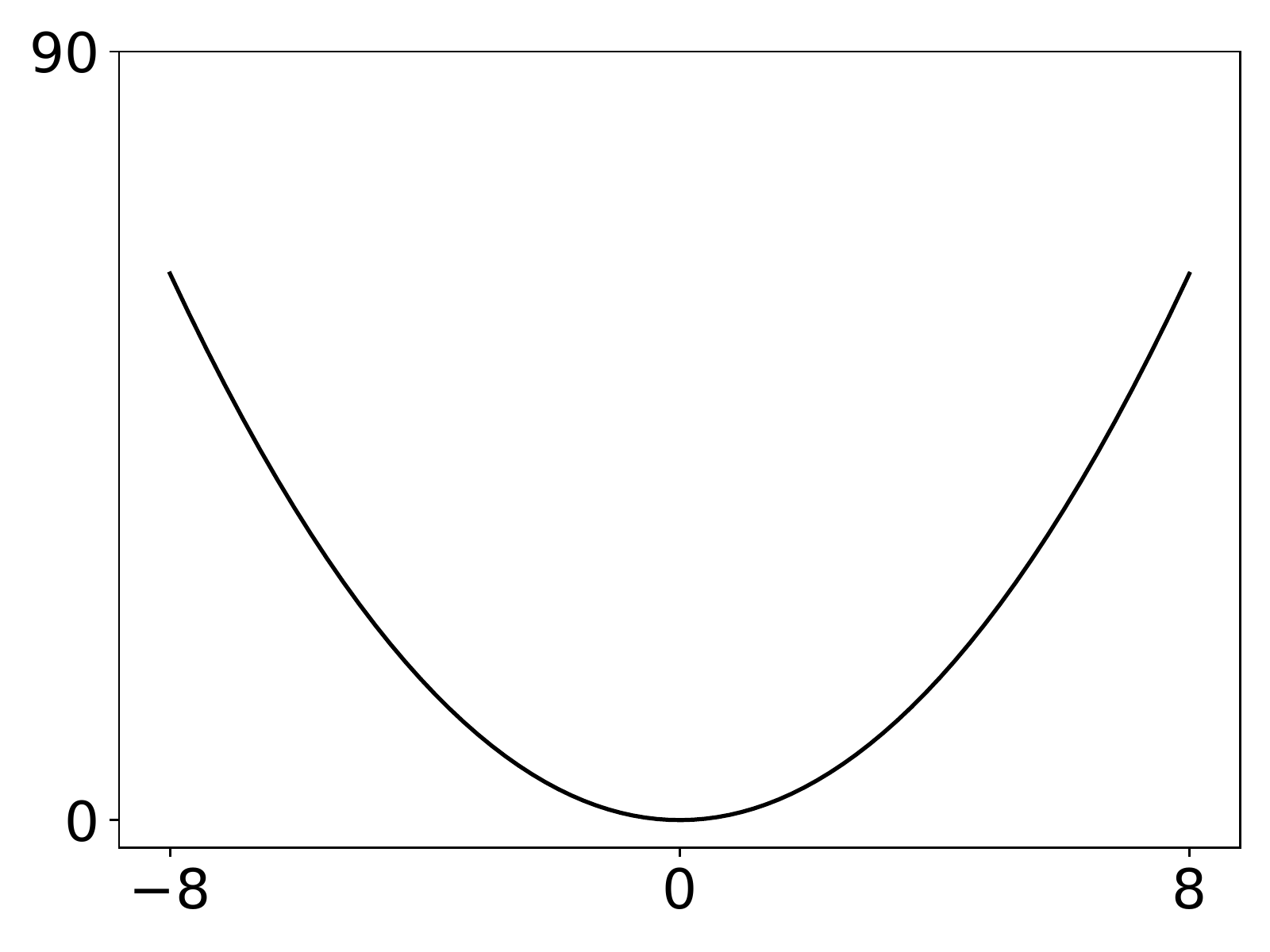}
      \caption{}
      \end{subfigure}& 
    \begin{subfigure}{0.3\linewidth}
      \includegraphics[width = 1.0\linewidth]{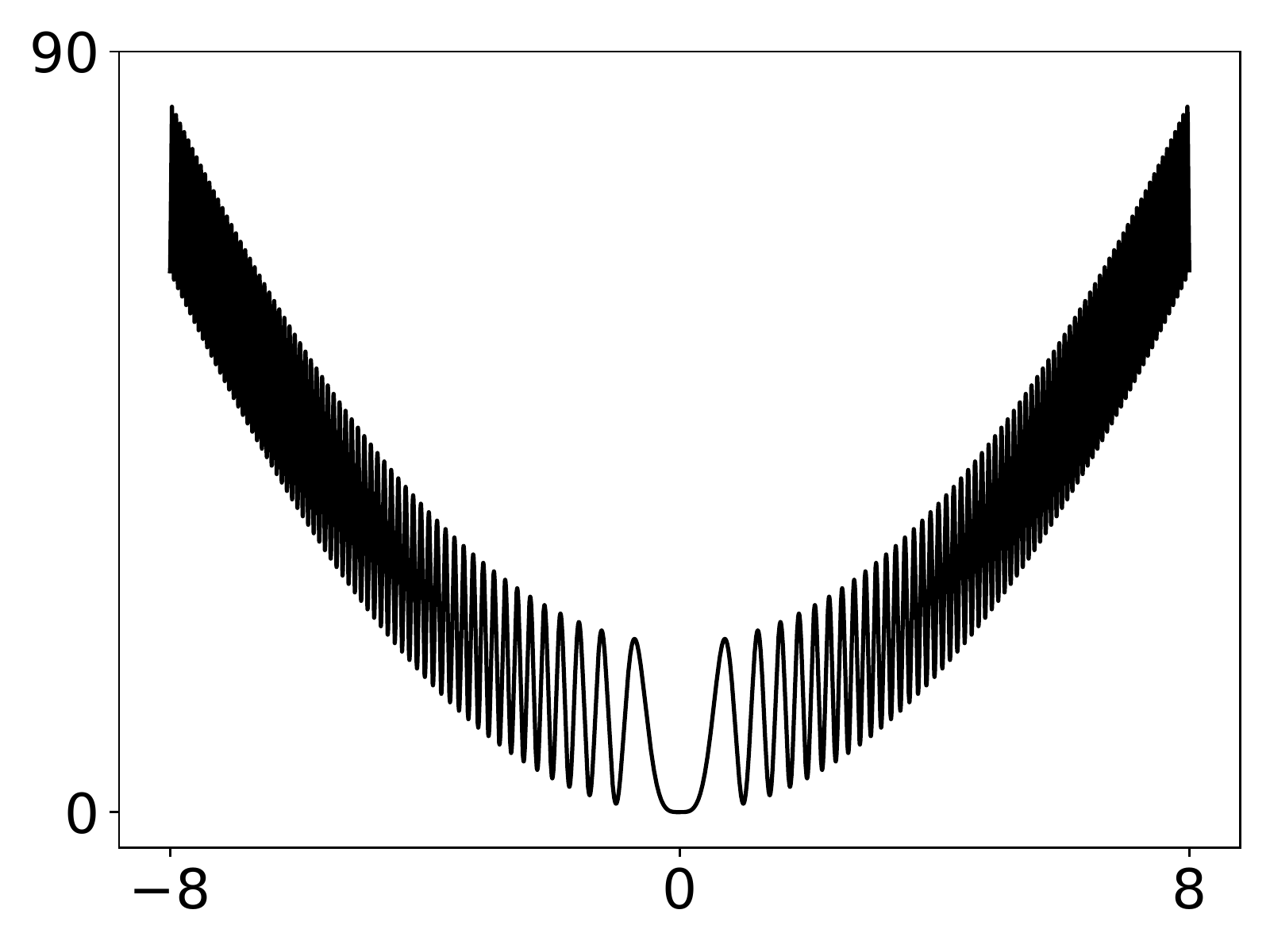}
      \caption{}
      \end{subfigure}&
      \begin{subfigure}{0.3\linewidth}
      \includegraphics[width = 1.0\linewidth]{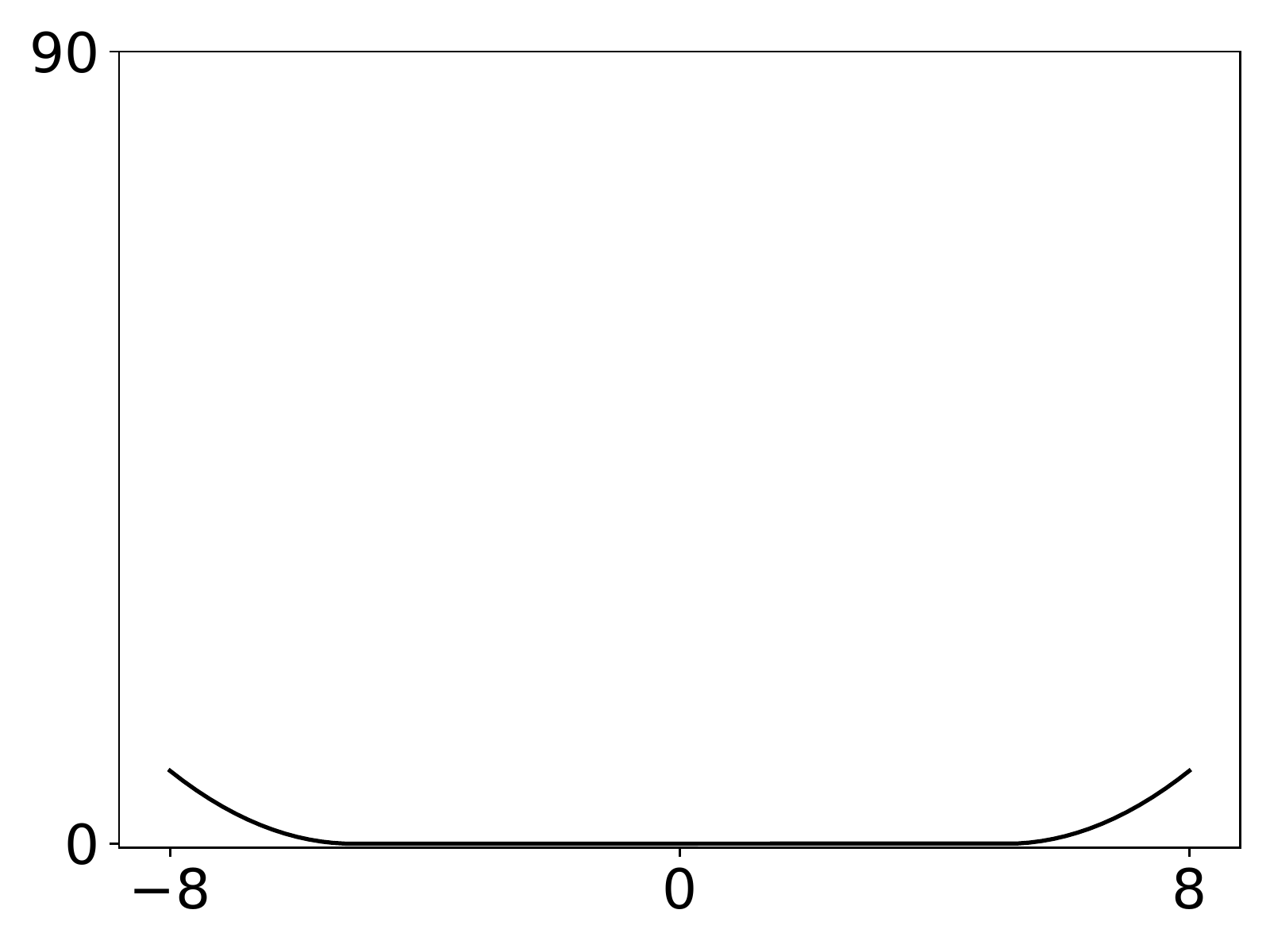}
      \caption{}
      \end{subfigure}
     \end{tabular}
   \caption{Objective functions for the linear projection domain in their minimization form for $n=1$. (a) a sphere function. (b) the Rastrigin function. (c) a plateau function.}
    \label{fig:obj_examples}
\end{figure}

Next, we describe the linear projection domain's objective functions visualized in Fig.~\ref{fig:obj_examples}.

The objectives of the linear projection domain satisfy the requirements that a QD domain needs to have an objective and are of lesser importance than the measure function definitions, since the benchmark primarily evaluates exploration capabilities. Prior work~\citep{fontaine2020covariance} selected two objectives from the black-box optimization benchmarks competition~\citep{hansen:arxiv16,hansen:gecco10}: sphere and Rastrigin. The sphere function (Eq.~\ref{eq:sphere}) is a quadratic function\footnote{In derivative-free optimization many of the benchmark functions are named after the shape of the contour lines. In the case of quadratic functions with an identity Hessian matrix, the contour lines form hyperspheres.}, while the Rastrigin function (Eq.~\ref{eq:rastrigin}) is a multi-modal function that when smoothed is quadratic. The domain shifts the global optimum to the position $\theta_i = 5.12 \cdot 0.4 = 2.048$.

\begin{equation} \label{eq:sphere}
f_{sphere}(\bm{\theta}) = \sum_{i=1}^{n}  {\theta_i ^ 2}
\end{equation}

\begin{equation} \label{eq:rastrigin}
f_{Rastrigin}(\bm{\theta}) =  10n + \sum_{i=1}^{n} [{\theta_i ^ 2} - 10 \cos(2\pi \theta_i ^ 2)]
\end{equation}

We introduce an additional objective to evaluate the good exploration properties of CMA-MAE on flat objectives. Our ``plateau'' objective function (Eq.~\ref{eq:plateau}) is constant everywhere, but with a quadratic penalty for each component outside the range $[-5.12, 5.12]$. The penalty acts as a regularizer to encourage algorithms to search in the linear region of measure space.

\begin{equation} \label{eq:plateau_single}
  f_{plateau}(\theta_i) =
  \begin{cases}
    0 & \text{if $-5.12 \leq \theta_i \leq 5.12$} \\
    (\abs{x} - 5.12) ^ 2 & \text{otherwise}
  \end{cases}
\end{equation}

\begin{equation} \label{eq:plateau}
f_{plateau}(\bm{\theta}) = \frac{1}{n} \sum_{i=1}^{n}  {f_{plateau}(\theta_i)}
\end{equation}

\begin{figure}[t!]
    \centering
    \begin{tabular}{ccc}
    \begin{subfigure}{0.3\linewidth}
      \includegraphics[width = 1.0\linewidth]{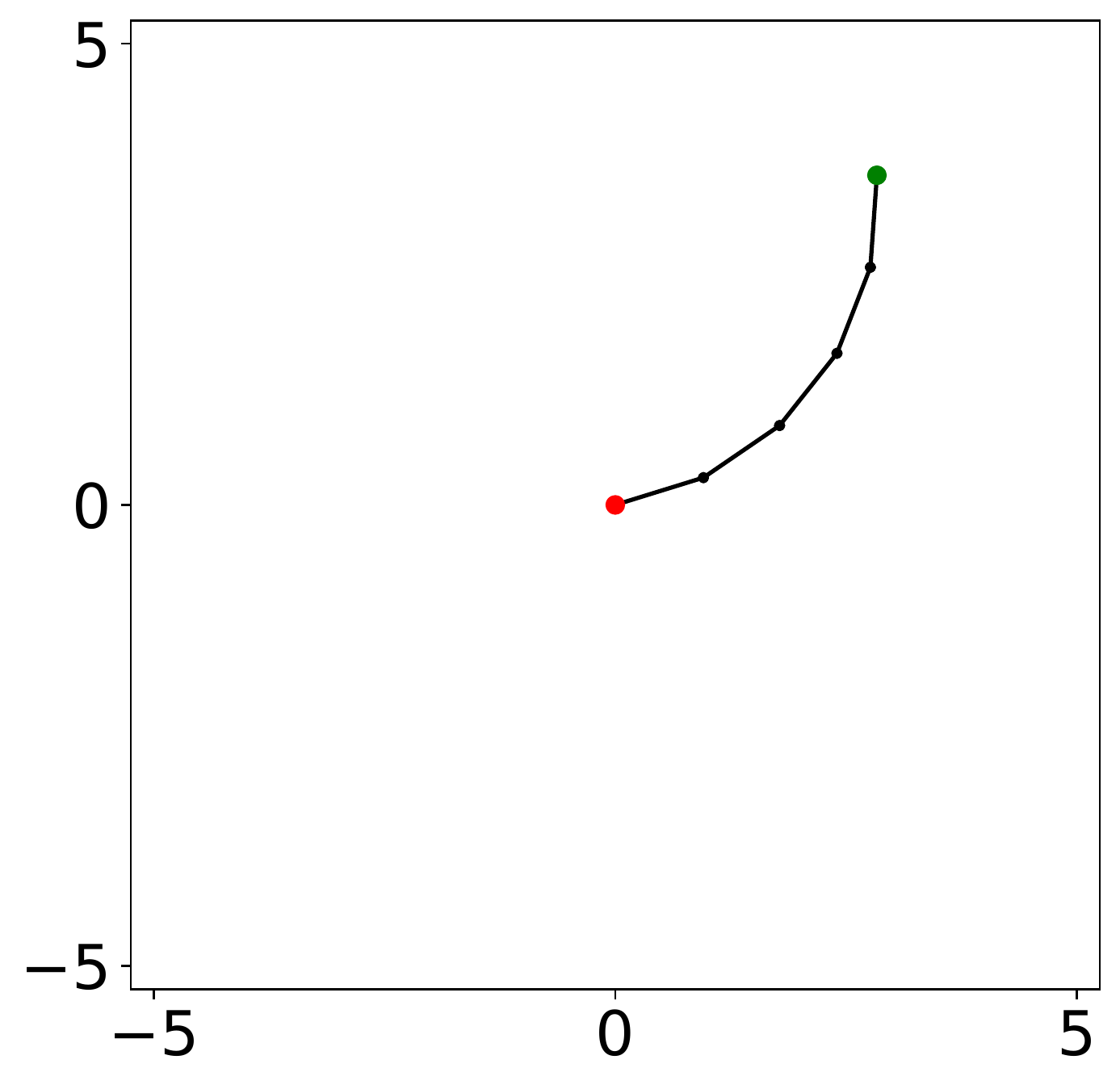}
      \caption{}
      \end{subfigure}& 
    \begin{subfigure}{0.3\linewidth}
      \includegraphics[width = 1.0\linewidth]{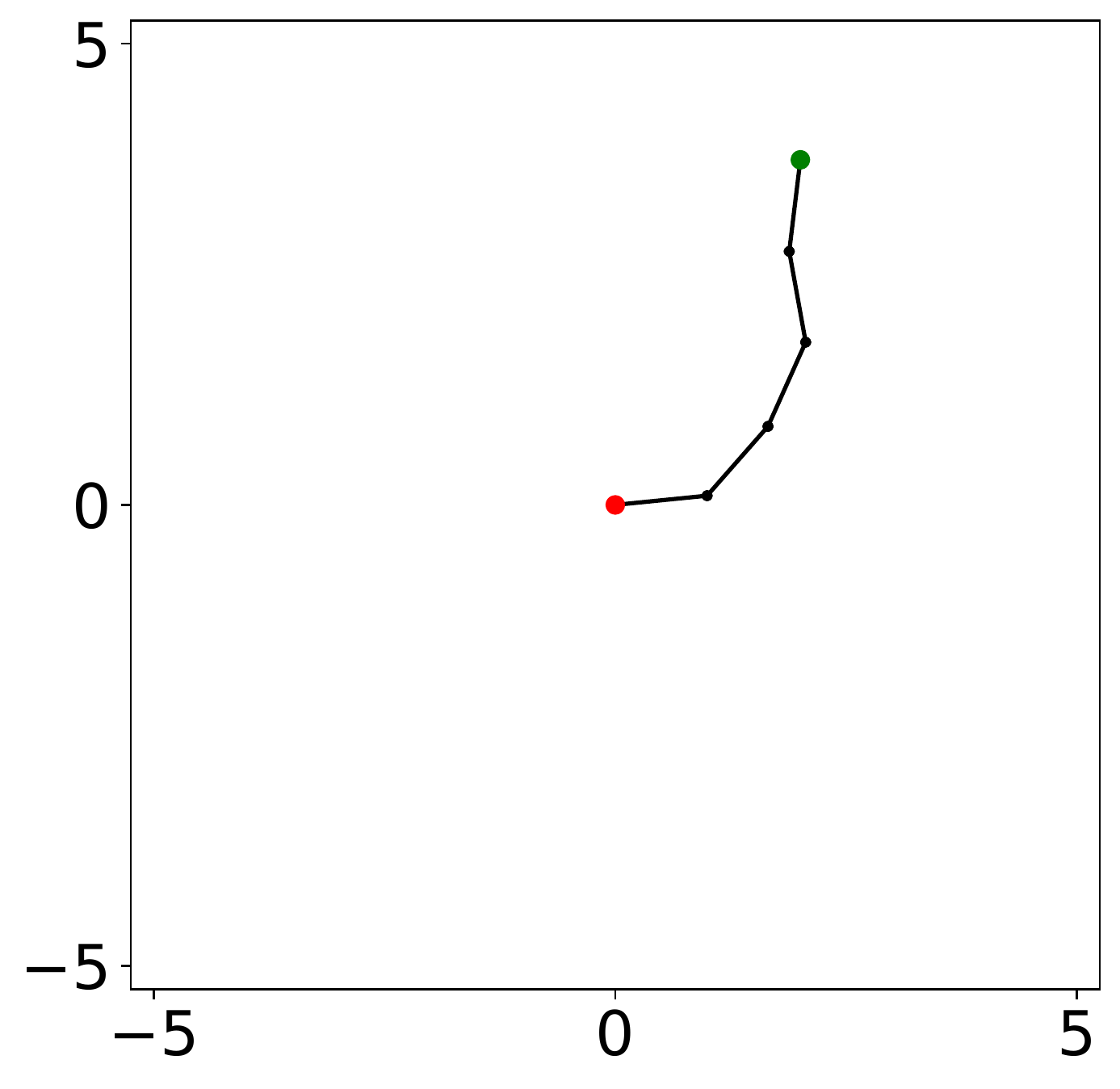}
      \caption{}
      \end{subfigure}&
      \begin{subfigure}{0.3\linewidth}
      \includegraphics[width = 1.0\linewidth]{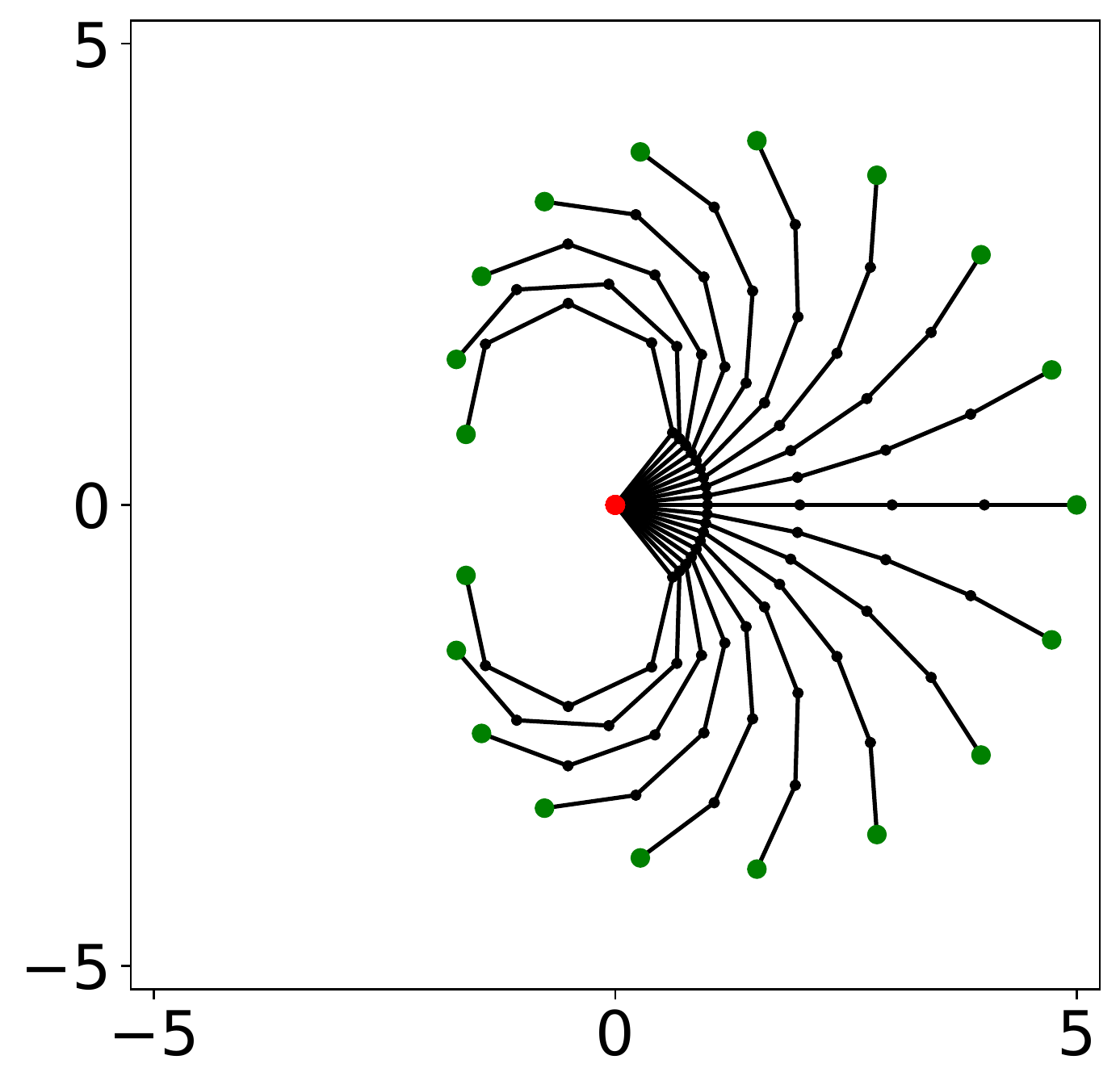}
      \caption{}
      \end{subfigure}
     \end{tabular}
   \caption{Examples of the Arm Repertoire domain for $n=5$. The figures are reproduced from previous work~\citep{fontaine2021differentiable} with the authors' permission.  (a) An optimal grasp with 0 variance between joint angles. (b) A sub-optimal grasp. (c) An ensemble of 0 variance optimal grasps.}
    \label{fig:arm_examples}
\end{figure}

\noindent\textbf{Arm Repertoire.} The arm repertoire domain~\cite{cully2017quality,vassiliades:gecco18} tasks QD algorithms to find a diverse collection of arm positions for an $n$-dimensional planar robotic arm with revolute joints. The measures in this domain are the 2D coordinates of the robot's end-effector and the objective is to minimize the variance of the joint angles.

In Fig.~\ref{fig:arm_examples}, we visualize example arms for $n=5$ (5-DOF). The optimal solutions in this domain have 0 variance between all joint angles. The measure functions are bounded to the range $[-n, n]$ as each arm segment has a unit length. The reachable cells form a circle of radius $n$. Therefore, the optimal archive coverage is approximately $\frac{\pi n^2}{4n^2} \approx 78.5\%$. An archive can achieve an upper-bound of this ratio that becomes tighter at higher resolutions.  We select $n=100$ (100-DOF) arms for the experiments.

\noindent\textbf{Latent Space Illumination.} Prior work introduced the latent space illumination problem~\citep{fontaine2020illuminating}, the problem of searching the latent space of a generative model with a quality diversity algorithm. We evaluate on the StyleGAN+CLIP version of this problem~\citep{fontaine2021differentiable}, by searching the latent space of StyleGAN~\citep{karras2019style} with a QD algorithm. We form the differentiable objective and measures in this domain by specifying text prompts to the CLIP model~\citep{radford2021learning} that can determine the similarity of an image and text. We specify an objective prompt of ``A photo of Beyonce''. For measures, we would like to have CLIP quantify abstract concepts like the hair length or age of the person in the photo. However, CLIP can only determine similarity of an image and a text prompt. As surrogates for age and hair length, we specify the measure prompts of ``A small child'' and ``A woman with long blonde hair''. The objective and measure functions guide the QD algorithms towards discovering a collection of photos of Beyonc\'e with varying age and hair length.

For our additional LSI experiment on StyleGAN2 with setup improvements, see Appendix~\ref{sec:improving-lsi}.

\noindent\textbf{Transformations of the Objective Function.} We highlight two issues that must be addressed by transforming the objective in each domain. First, we note that the problem definition in each of our domains contains an objective $f$ that must be minimized. In contrast, the QD problem definition specifies an objective $f$ that must be maximized. Second, the QD-score metric, which measures the performance of QD algorithms, requires a non-negative objective function. Following prior work~\citep{fontaine2020covariance, fontaine2021differentiable}, we transform the objective $f$ via a linear transformation: $f' = af+b$. The linear transformation maps function outputs to the range $[0, 100]$.

In the linear projection domain, we estimate the largest objective value for the sphere and Rastrigin function within the region $[-5.12, 5.12]$ for each solution component $\theta_i$. We compute \mbox{$f(-5.12, ...)$} for each objective as the maximum. The minimum of each function is $0$. We calculate the linear transformation as:

\begin{equation}
f'(\bm{\theta}) = 100 \cdot \frac{f(\bm{\theta}) - f_{max}}{f_{min} - f_{max}}
\label{eq:obj_transform}
\end{equation}

For our new plateau objective, all solution points within the region $[-5.12, 5.12]^n$ have objective value of $0$. For this objective we set $f_{min} = 0$ and $f_{max} = 100$ and apply the transformation in Eq.~\ref{eq:obj_transform}.

For the arm domain we select $f_{min}=0$ and $f_{max}=1$, and in the LSI domain we select $f_{min} = 0$ and $f_{max} = 10$. We select these values to match prior work \cite{fontaine2021differentiable}.

\section{Implementation}
\label{sec:implementation}

We replicate the implementation details of prior work~\citep{fontaine2021differentiable}.

\noindent\textbf{Archives.} For the linear projection and arm repertoire domains, we initialize an archive of $100 \times 100$ cells for all algorithms. For latent space illumination we initialize an archive of $200 \times 200$ cells for all algorithms, following prior work~\citep{fontaine2021differentiable}.

\noindent\textbf{Metrics.}  We use the sum of $f$ values of all cells in the archive, defined as the \mbox{QD-score}~\citep{pugh2015confronting}, as a metric for the quality and diversity of solutions.  Following prior work~\citep{fontaine2021differentiable}, we normalize the QD-score by the total number of cells, both occupied and unoccupied, to make QD-score invariant to the resolution of the archive. We additionally compute the coverage, defined as the number of occupied cells in the archive divided by the total number of cells. 

\noindent\textbf{Computational Resources.} We ran all trials of the linear projection and arm repertoire domains on an AMD Ryzen Threadripper 32-core (64 threads) processor. A run of 20 trials in parallel takes about 20 minutes for the linear projection domain and 25 minutes for the arm repertoire domain. For the latent space illumination domain, we accelerate the StyleGAN+CLIP pipeline on a GeForce RTX 3090 Nvidia GPU. One trial for latent space illumination takes approximately 2 hours and 30 minutes for StyleGAN and approximately 3 hours and 30 minutes for StyleGAN2. In all domains, runtime increases when an algorithm obtains better coverage, because we iterate over the archive when QD statistics are calculated.

\noindent\textbf{Software Implementation.} We use the open source pyribs~\citep{pyribs} library for all algorithms. We implemented the CMA-MAE and CMA-MAEGA algorithms using the same library.

\section{Covariance Matrix Adaptation MAP-Elites via a Gradient Arborescence (CMA-MAEGA)}
\label{sec:cma-maega}

In this section, we provide information of the CMA-MEGA differentiable quality diversity (DQD) algorithm, and we derive CMA-MAE's DQD counterpart: CMA-MAEGA.

\noindent\textbf{CMA-MEGA.} Covariance Matrix Adaptation MAP-Elites via Gradient Arborescence (CMA-MEGA) solves the DQD problem, where the objective $f$ and measures $\bm{m}$ are first-order differentiable. Like \mbox{CMA-ME}, the algorithm maintains a solution point $\bm{\theta} \in \mathbb{R}^n$ and a MAP-Elites archive. \mbox{CMA-MEGA} samples new solutions by perturbing the search point $\bm{\theta}$ via the objective and measure gradients. However, the contribution of each gradient is balanced by gradient coefficients $\bm{c}$: \mbox{$\bm{\theta_i} = \bm{\theta} + c_0 \bm{\nabla} f(\bm{\theta}) + \sum_{j=1}^{k} c_j \bm{\nabla} m_j(\bm{\theta})$}. These coefficients are sampled from a multivariate Gaussian distribution $N(\bm{\mu},\Sigma)$ maintained by the algorithm. After sampling new candidate solutions $\bm{\theta_i}$, the solutions are ranked via the improvement ranking from \mbox{CMA-ME}. \mbox{CMA-MEGA} updates $N(\bm{\mu},\Sigma)$ via the CMA-ES update rules and the algorithm steps $\bm{\theta}$ also in the direction of largest archive improvement. The authors showed that CMA-MEGA approximates a natural gradient step of the QD objective (Eq.~\ref{eq:objective}), but with respect to the gradient coefficients.

\begin{algorithm}[t!]
\SetAlgoLined
\setcounter{AlgoLine}{0}
\caption{Covariance Matrix Adaptation MAP-Annealing via a Gradient Arborescence (\mbox{CMA-MAEGA})}
\label{alg:cma-maega}
\SetKwInOut{Input}{input}
\SetKwInOut{Result}{result}
\SetKwProg{CMAMAEGA}{CMA-MAEGA}{}{}
\DontPrintSemicolon
\CMAMAEGA{$(evaluate, \bm{\theta_0}, N, \lambda, \eta, \sigma_g,$ \hl{$min_f,\alpha$})}
{
\Input{An evaluation function $evaluate$ that computes the objective, the measures, and the gradients of the objective and measures, an initial solution $\bm{\theta_0}$, a desired number of iterations $N$, a branching population size $\lambda$, a learning rate $\eta$, an initial step size for CMA-ES $\sigma_g$, \hl{a minimal acceptable solution quality $min_f$, and an archive learning rate $\alpha$}.}
\Result{Generate $N (\lambda + 1)$ solutions storing elites in an archive $A$.}

\BlankLine
Initialize solution parameters $\bm{\theta}$ to $\bm{\theta_0}$, CMA-ES parameters $\bm{\mu}=\bm{0}$, $\Sigma=\sigma_g I$, and $\bm{p}$, where we let $\bm{p}$ be the CMA-ES internal parameters.

\hl{Initialize the archive $A$ and the acceptance threshold $t_e$ with $min_f$ for each cell $e$.}

\For{$iter\leftarrow 1$ \KwTo $N$}{
$f, \bm{\nabla}_{f}, \bm{m}, \bm{\nabla_m} \gets  \mbox{evaluate}(\bm{\theta})$\; \label{maega:initeval}

$\bm{\nabla}_{f} \gets \mbox{normalize}(\bm{\nabla}_{f}),\bm{\nabla}_{m} \gets \mbox{normalize}(\bm{\nabla}_{m})$\;\label{maega:norm}

\If{\hl{$f > t_e$}}{
    Replace the current elite in cell $e$ of the archive $A$ with $\bm{\theta_i}$\;
    
    \hl{$t_e \gets (1 - \alpha) t_e + \alpha f$}\; 
 }

\For{$i\leftarrow 1$ \KwTo $\lambda$}{ \label{maega:startforloop} 
  $\bm{c} \sim \mathcal{N}(\bm{\mu},\Sigma)$\;

  $\bm{\nabla}_i \gets c_0 \bm{\nabla}_f + \sum_{j=1}^{k} c_j \bm{\nabla}_{m_j}$\;\label{lst:line2}

  $\bm{\theta_i}' \gets \bm{\theta} + \bm{\nabla}_i$\;

 $f', *, \bm{m}',* \gets \mbox{evaluate}(\bm{\theta_i}')$\;

 \hl{$\Delta_i \gets f' - t_e$} \label{maega:calc_imp}

 \If{\hl{$f' > t_e$}}{
    \label{maega:start_arc_update}
    Replace the current occupant in cell $e$ of the archive $A$ with $\bm{\theta_i}$\;
    
    \hl{$t_e \gets (1 - \alpha) t_e + \alpha f'$}\; \label{maega:inc_thresh}
 }  %

}  \label{maega:endforloop}
\hl{rank $\bm{\nabla}_i$ by $\Delta_i$ \;} \label{maega:ranking}

$\bm{\nabla}_{\textrm{step}} 
\gets \sum_{i=1}^{\lambda}w_i  \bm{\nabla}_{\textrm{rank[i]}}$\; \label{maega:gradient_linear_comb}

$\bm{\theta} \gets \bm{\theta} +  \eta \bm{\nabla}_{\textrm{step}}$\; \label{maega:gradient_step}

Adapt CMA-ES parameters $\bm{\mu},\Sigma,\bm{p}$ based on improvement ranking $\Delta_i$\label{maega:adaptation}

\If{\mbox{there is no change in the archive}}{
Restart CMA-ES with $\bm{\mu}=0, \Sigma = \sigma_g I$.

Set $\bm{\theta}$ to a randomly selected existing cell $\bm{\theta_i}$  from the archive

}
}

}

\end{algorithm}

\noindent\textbf{CMA-MAEGA.} We note that our augmentations to the CMA-ME algorithm only affects how we replace solutions in the archive and how we calculate $\Delta_i$. Both CMA-ME and CMA-MAEGA replace solutions and calculate $\Delta_i$ identically, so we apply the same augmentations from CMA-ME to CMA-MEGA to form a new DQD algorithm, CMA-MAEGA. Algorithm~\ref{alg:cma-maega} shows the pseudo-code for CMA-MAEGA with the differences from CMA-MEGA highlighted in yellow.

\section{Theoretical Properties of CMA-MAE}
\label{sec:cmame-theory}

\begin{theorem}
\label{thm_apx:alpha0}
The CMA-ES algorithm is equivalent to CMA-MAE when $\alpha=0$, if CMA-ES restarts from an archive solution.
\end{theorem}

\begin{proof}
CMA-ES and CMA-MAE differ only on how they rank solutions.
CMA-ES ranks solutions purely based on the objective $f$, while CMA-MAE ranks solutions by $f-t_e$, where $t_e$ is the acceptance threshold initialized by $min_f$. Thus, to show that CMA-ES is equivalent to CMA-MAE for $\alpha=0$, we only need to show that they result in identical rankings. 

In CMA-MAE, $t_e$ is updated as follows: $t_e \gets (1-\alpha) t_e + \alpha f$. For $\alpha=0$, $t_e = min_f$ is invariant for the whole algorithm: $t_e \leftarrow 1 t_e + 0 f = t_e$. Therefore, CMA-MAE ranks solutions based on $f-min_f$. However, comparison-based sorting is invariant to order-preserving transformations of the values being sorted~\cite{hansen:cma16}. Thus, CMA-ES and CMA-MAE rank solutions identically.
\end{proof}

Next, we prove that CMA-ME is equivalent to CMA-MAE with the following caveats. First, we assume that CMA-ME restarts only with the CMA-ES restart rules, rather than the additional ``no improvement'' restart condition from prior work~\citep{fontaine2020covariance}. Second, we assume that both CMA-ME and CMA-MAE leverage $\mu$ selection rather than filtering selection.

\begin{lemma}
During execution of the CMA-MAE algorithm with $\alpha=1$, the threshold $t_e$ is equal to $f(\bm{\theta_e})$ for cells that are occupied by a solution $\bm{\theta_e}$ and to $min_f$ for all empty cells. 
\label{Le_1}
\end{lemma}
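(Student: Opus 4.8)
The plan is to establish, by induction on the number of candidate solutions processed by the algorithm, the invariant that after each candidate has been handled every empty cell $e$ satisfies $t_e = min_f$ and every occupied cell $e$ satisfies $t_e = f(\bm{\theta_e})$, where $\bm{\theta_e}$ denotes the current occupant of $e$. The observation that makes the case $\alpha=1$ special is that the threshold update rule $t_e \gets (1-\alpha) t_e + \alpha f$ collapses to $t_e \gets f$ when $\alpha = 1$; hence, whenever the threshold of a cell changes, it is set to exactly the objective value of the solution that triggered the change. A second useful remark is that the only event that can alter $t_e$ is the acceptance of a solution into cell $e$, since the update on line~\ref{mae:inc_thresh} lies strictly inside the conditional block guarded by the test on line~\ref{mae:start_arc_update}.

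For the base case I would invoke the initialization on line~\ref{mae:initialize_t}, which sets $t_e = min_f$ for every cell while the archive is still empty, so the invariant holds before any candidate is processed. For the inductive step I would fix a single candidate $\bm{\theta_i}$ mapping to a cell $e$ and split on the outcome of the acceptance test $f(\bm{\theta_i}) > t_e$. If the test fails, neither the occupant of $e$ nor $t_e$ is modified, and all other cells are untouched, so the invariant is preserved verbatim. If the test succeeds, the algorithm performs two actions in lockstep within the same block: it replaces the occupant of $e$ by $\bm{\theta_i}$, making the new occupant $\bm{\theta_e} = \bm{\theta_i}$, and it applies the threshold update, which for $\alpha=1$ gives $t_e \gets f(\bm{\theta_i}) = f(\bm{\theta_e})$. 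Thus the cell becomes occupied with its threshold equal to the new occupant's objective value, matching the invariant; cells other than $e$ are unchanged and their part of the invariant carries over directly.

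This argument has no genuine obstacle, and the only point requiring care is precisely the synchronization just noted: because the occupant replacement and the threshold assignment sit inside the same conditional, they are always executed together, which is what prevents the threshold from ever drifting away from $f(\bm{\theta_e})$ for an occupied cell or away from $min_f$ for an empty one. Reading off the invariant at any point of execution then yields the lemma.
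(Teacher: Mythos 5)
Your proof is correct and follows essentially the same route as the paper's: an induction over archive updates using the fact that with $\alpha=1$ the threshold update collapses to a direct assignment $t_e \gets f(\bm{\theta_i})$ performed in lockstep with the occupant replacement. The only cosmetic difference is that you case-split directly on the acceptance test $f(\bm{\theta_i}) > t_e$ rather than on whether the cell is empty or occupied, which covers the same ground slightly more uniformly.
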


\begin{proof}
We will prove the lemma by induction. All empty cells are initialized with $t_e=min_f$, satisfying the basis step. Then, we will show that if the statement holds after $k$ archive updates, it will hold after a subsequent update $k+1$. 

Assume that at step $k$ we generate a new solution $\bm{\theta_i}$ mapped to a cell $e$.
We consider two cases: 

\noindent\textbf{Case 1}: The archive cell $e$ is empty. Then, $f(\bm{\theta_i})>min_f$ and both CMA-ME and CMA-MAE will place $\bm{\theta_i}$ in the archive as the new cell occupant $\bm{\theta_e}$.  The threshold $t_e$ is updated as $t_e = (1-\alpha) t_e + \alpha f(\bm{\theta_e}) = 0 min_f + 1 f(\bm{\theta_e}) = f(\bm{\theta_e})$. 

\noindent\textbf{Case 2}: The archive cell $e$ contains an incumbent solution $\bm{\theta_e}$. Then, either $f(\bm{\theta_i})\leq f(\bm{\theta_e})$ or $f(\bm{\theta_i})>f(\bm{\theta_e})$. If $f(\bm{\theta_i})\leq f(\bm{\theta_e})$, then the archive does not change and the inductive step holds via the inductive hypothesis. If $f(\bm{\theta_i})>f(\bm{\theta_e})$, then $\bm{\theta_i}$ becomes the new cell occupant $\bm{\theta_e}$ and $t_e$ is updated as $t_e = (1-\alpha) t_e + \alpha f(\bm{\theta_e}) = 0 t_e + 1 f(\bm{\theta_e}) = f(\bm{\theta_e})$. 
\end{proof}

\begin{theorem}
\label{thm_apx:alpha1}
The CMA-ME algorithm is equivalent to CMA-MAE when $\alpha = 1$ and $min_f$ is an arbitrarily large negative number.
\end{theorem}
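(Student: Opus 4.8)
The plan is to show that, driven by the same randomness, CMA-ME and CMA-MAE (run at $\alpha=1$ with $min_f$ sufficiently negative) maintain identical internal state at every iteration, hence are indistinguishable. Since the two algorithms share the same loop structure and differ only in the highlighted lines (the score $\Delta_i$ and the threshold/archive update), it suffices to reconcile two things: (i) the archive replacement mechanics and (ii) the ranking of the $\lambda$ samples. I would argue both by induction over iterations, assuming that at the start of an iteration $\bm{\theta}$, $\Sigma$, the CMA-ES internal parameters $\bm{p}$, and the archive contents agree, with the thresholds satisfying Lemma~\ref{Le_1}.

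For the archive mechanics, Lemma~\ref{Le_1} gives that at $\alpha=1$ every occupied cell has $t_e = f(\bm{\theta_e})$ and every empty cell has $t_e = min_f$. Hence CMA-MAE's test $f(\bm{\theta_i}) > t_e$ reduces, on occupied cells, to $f(\bm{\theta_i}) > f(\bm{\theta_e})$ -- exactly CMA-ME's elitist replacement -- and, because $min_f$ lies below every objective value generated in the (finite) run, to $f(\bm{\theta_i}) > min_f$, i.e.\ unconditional acceptance, on empty cells -- exactly CMA-ME's treatment of newly discovered cells. The within-batch sequential updates then coincide: once a sample fills or improves a cell, Lemma~\ref{Le_1} keeps $t_e$ equal to the current occupant's value, so a later same-batch sample in that cell is measured against the same incumbent under both algorithms.

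For the ranking, at $\alpha=1$ CMA-MAE assigns $\Delta_i = f(\bm{\theta_i}) - min_f$ to a new-cell sample and $\Delta_i = f(\bm{\theta_i}) - f(\bm{\theta_e})$ to an existing-cell sample, whereas CMA-ME assigns $f(\bm{\theta_i})$ and $f(\bm{\theta_i}) - f(\bm{\theta_e})$ respectively while placing \emph{all} new-cell samples above \emph{all} existing-cell samples. Within each group the orderings already agree: subtracting the constant $min_f$ is order preserving on the new-cell group, and the existing-cell scores are literally identical. The remaining task is the cross-group priority. Over a single run only finitely many solutions are produced, so the objective values encountered lie in a bounded interval $[f_{\min}, f_{\max}]$; choosing any $min_f < 2 f_{\min} - f_{\max}$ forces $f(\bm{\theta_i}) - min_f > f_{\max} - f_{\min} \geq f(\bm{\theta_j}) - f(\bm{\theta_e})$ for every new-cell sample $i$ and existing-cell sample $j$, so CMA-MAE's single continuous ranking reproduces CMA-ME's two-stage priority. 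Both algorithms then apply $\mu$ selection, the identical CMA-ES adaptation, and (by the first caveat) the identical restart rule to identical archives, so the updated $\bm{\theta}$, $\Sigma$, and $\bm{p}$ agree, closing the induction.

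The main obstacle is precisely this cross-group step: CMA-ME enforces a \emph{discrete} priority (new cells always outrank improvements, irrespective of magnitude) while CMA-MAE uses a single \emph{continuous} key $f - t_e$. The reconciliation hinges on reading ``arbitrarily large negative $min_f$'' quantitatively -- as $min_f < 2 f_{\min} - f_{\max}$ for the run's bounded objective range -- so that the gap $f - min_f$ on every freshly discovered cell dwarfs any attainable improvement on an occupied cell. Making this bound uniform across the whole run, rather than within one batch, is exactly what the finiteness of the generated sample set buys us; everything else follows mechanically from Lemma~\ref{Le_1} together with the invariance of comparison-based selection to order-preserving transformations invoked in Theorem~\ref{thm_apx:alpha0}.
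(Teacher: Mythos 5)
Your proposal is correct and follows essentially the same route as the paper: it rests on Lemma~\ref{Le_1} to identify the thresholds at $\alpha=1$, matches the two within-group rankings via order preservation and literal equality, and uses the arbitrarily negative $min_f$ to recover CMA-ME's two-stage priority. Your quantitative choice $min_f < 2f_{\min} - f_{\max}$ and the explicit cross-indexed comparison ($i$ a new-cell sample, $j$ an occupied-cell sample) actually tighten a step the paper states somewhat loosely, but the underlying argument is the same.
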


\begin{proof}
Both CMA-ME and CMA-MAE rank candidate solutions $\bm{\theta_i}$ based on improvement values  $\Delta_i$. While CMA-ME and CMA-MAE compute $\Delta_i$ differently, we will show that for $\alpha=1$, the rankings are identical for the two algorithms.

We assume a new candidate solution  mapped to a cell $e$.  We describe first the computation of $\Delta_i$ for CMA-ME. CMA-ME ranks solutions that discover an empty cell based on their objective value. Thus, if $\bm{\theta_i}$ discovers an empty cell, $\Delta_i = f(\bm{\theta_i})$. On the other hand, if $\bm{\theta_i}$ is mapped to a cell occupied by another solution $\bm{\theta_e}$, it will rank $\bm{\theta_i}$ based on the improvement $\Delta_i = f(\bm{\theta_i})-f(\bm{\theta_e})$. CMA-ME performs a two-stage ranking, where it ranks all solutions that discover empty cells before solutions that improve occupied cells.  

We now show the computation of $\Delta_i$ for CMA-MAE with $\alpha = 1$. If $\bm{\theta_i}$ discovers an empty cell
$\Delta_i = f(\bm{\theta_i}) - t_e$ and by Lemma~\ref{Le_1} $\Delta_i = f(\bm{\theta_i}) - min_{f}$.
If $\bm{\theta_i}$ is mapped to a cell occupied by another solution $\bm{\theta_e}$, $\Delta_i = f(\bm{\theta_i}) - t_e$ and by Lemma~\ref{Le_1} $\Delta_i = f(\bm{\theta_i}) - f(\bm{\theta_e})$.

Comparing the values $\Delta_i$ between the two algorithms we observe the following: (1) If $\bm{\theta_i}$ discovers an empty cell, $\Delta_i = f(\bm{\theta_i}) - min_{f}$ for CMA-MAE. However, $min_{f}$ is a constant and comparison-based sorting is invariant to order preserving transformations~\citep{hansen:cma16}, thus ranking by $\Delta_i = f(\bm{\theta_i}) - min_{f}$ is identical to ranking by $\Delta_i = f(\bm{\theta_i})$ performed by CMA-ME. (2) If $\bm{\theta_i}$ is mapped to a cell occupied by another solution $\bm{\theta_e}$, $\Delta_i = f(\bm{\theta_i}) - f(\bm{\theta_e})$ for both algorithms. (3) Because $min_f$ is an arbitrarily large negative number $f(\bm{\theta}_i)-min_f > f(\bm{\theta_i})-f(\bm{\theta_e})$. Thus, CMA-MAE will always rank solutions that discover empty cells  before solutions that are mapped to occupied cells, identically to CMA-ME. 
\end{proof}

We next provide theoretical insights on how the discount function $f_A$ smoothly increases from a constant function $min_f$ to CMA-ME's discount function as $\alpha$ increases from 0 to 1. We show this for the special case of a fixed sequence of candidate solutions.

\begin{theorem}
\label{thm_apx:monot}
Let $\alpha_i$ and $\alpha_j$ be two archive learning rates for archives $A_i$ and $A_j$ such that $0 \leq \alpha_i < \alpha_j \leq 1$. For two runs of CMA-MAE that generate the same sequence of $m$ candidate solutions $\{S\} = \bm{\theta_1}, \bm{\theta_2}, ..., \bm{\theta_m}$, it follows that $f_{A_i}(\bm{\theta}) \leq f_{A_j}(\bm{\theta})$ for all $\bm{\theta}\in \mathbb{R}^n$.
\end{theorem}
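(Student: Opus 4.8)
The plan is to reduce the statement to a per-cell claim about the acceptance thresholds and then prove it by induction over the shared candidate sequence. Recall that the discount function satisfies $f_A(\bm{\theta}) = t_e$, where $e$ is the cell containing $\bm{m}(\bm{\theta})$. Since both runs process the same sequence $\{S\}$ under the same measure function, every candidate is assigned to the same cell in both archives, and a cell's threshold is modified only when a candidate mapping to it is processed. Hence the cells evolve independently, and it suffices to establish the invariant that, after any prefix of $\{S\}$ has been processed, every cell $e$ satisfies $t_e^{(i)} \le t_e^{(j)}$, where $t_e^{(i)}$ and $t_e^{(j)}$ denote the thresholds maintained under $\alpha_i$ and $\alpha_j$. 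Evaluating this at the end of the run, at the cell of an arbitrary $\bm{\theta} \in \mathbb{R}^n$, then yields $f_{A_i}(\bm{\theta}) \le f_{A_j}(\bm{\theta})$.

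For the induction, the base case is immediate: every cell is initialized with $t_e^{(i)} = t_e^{(j)} = min_f$. For the inductive step I would fix a candidate $\bm{\theta_l}$ with objective $f = f(\bm{\theta_l})$ mapping to a cell $e$, write $t_i = t_e^{(i)}$ and $t_j = t_e^{(j)}$ with $t_i \le t_j$ by hypothesis, and split on how $f$ compares to the two current thresholds. The assumption $t_i \le t_j$ leaves exactly three reachable cases, since the scenario in which only the $\alpha_j$-archive updates (which would require $f > t_j$ while $f \le t_i$) cannot occur. If $f \le t_i$, neither threshold updates and the inequality persists. If $t_i < f \le t_j$, only the $\alpha_i$-archive updates, and its new value $(1-\alpha_i)t_i + \alpha_i f$ is a convex combination of $t_i$ and $f$, hence at most $f \le t_j = t_j^{\mathrm{new}}$.

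The one case demanding genuine work is $f > t_j$, where both thresholds update; this is the crux of the argument and the step I expect to be the main obstacle, because a priori the larger rate $\alpha_j$ acts on a possibly larger base $t_j$, and one must confirm that these two competing effects do not conspire to reverse the inequality. Here I would compute the gap directly and rewrite it in terms of $f - t_i$ and $t_j - t_i$:
\[ t_j^{\mathrm{new}} - t_i^{\mathrm{new}} = (\alpha_j - \alpha_i)(f - t_i) + (1-\alpha_j)(t_j - t_i). \]
Both summands are nonnegative: the first because $\alpha_j > \alpha_i$ and $f > t_j \ge t_i$, the second because $\alpha_j \le 1$ and $t_j \ge t_i$. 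Thus $t_i^{\mathrm{new}} \le t_j^{\mathrm{new}}$, and the invariant is preserved in every case.

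Since the invariant holds at initialization and survives the processing of each candidate, it holds after the entire sequence $\{S\}$ has been consumed. Finally, for any $\bm{\theta} \in \mathbb{R}^n$, taking $e$ to be the cell of $\bm{m}(\bm{\theta})$ gives $f_{A_i}(\bm{\theta}) = t_e^{(i)} \le t_e^{(j)} = f_{A_j}(\bm{\theta})$, which is the desired conclusion. Everything outside the two-update case is routine bookkeeping, so the displayed identity is effectively the whole mathematical content of the theorem.
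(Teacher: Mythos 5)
Your proposal is correct and follows essentially the same route as the paper's proof: an induction over the shared candidate sequence establishing the per-cell invariant $t_e(A_i) \le t_e(A_j)$, with the same three-case split on how $f(\bm{\theta_{k+1}})$ compares to the two thresholds. Your handling of the two-update case via the identity $t_j^{\mathrm{new}} - t_i^{\mathrm{new}} = (\alpha_j - \alpha_i)(f - t_i) + (1-\alpha_j)(t_j - t_i)$ is a slightly cleaner packaging of the same algebra the paper carries out as a chain of inequalities.
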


\begin{proof}
We prove the theorem via induction over the sequence of solution additions. $f_A$ is the histogram formed by the thresholds $t_e$ over all archive cells $e$ in the archive. Thus, we prove $f_{A_i} \leq f_{A_j}$ by showing that $t_e(A_i) \leq t_e(A_j)$ for all archive cells $e$ after $m$ archive additions.

As a basis step, we note that $A_i$ equals $A_j$ as both archives are initialized with $min_f$.

Our inductive hypothesis states that after $k$ archive additions we have $t_e(A_i) \leq t_e(A_j)$, and we need to show that $t_e(A_i) \leq t_e(A_j)$ after solution $\bm{\theta_{k+1}}$ is added to each archive.

Our solution $\bm{\theta_{k+1}}$ has three cases with respect to the acceptance thresholds:

\textbf{Case 1:} $f(\bm{\theta_{k+1}}) \leq t_e(A_i) \leq t_e(A_j)$. The solution is not added to either archive and our property holds from the inductive hypothesis.

\textbf{Case 2:} $t_e(A_i) \leq f(\bm{\theta_{k+1}}) \leq t_e(A_j)$. The solution is added to $A_i$, but not $A_j$, thus $t'_e(A_j) = t_e(A_j)$. We follow the threshold update: $t'_e(A_i) = (1-\alpha_i) t_e(A_i) + \alpha_i f(\bm{\theta_{k+1}})$. Next, we need to show that $t'_e(A_i) \leq t'_e(A_j)$ to complete the inductive step: 

\begin{align*}
(1-\alpha_i) t_e(A_i) + \alpha_i f(\bm{\theta_{k+1}}) &\leq f(\bm{\theta_{k+1}})&\iff&\\
(1-\alpha_i) t_e(A_i) &\leq (1-\alpha_i) f(\bm{\theta_{k+1}})&\iff&\\
t_e(A_i) &\leq f(\bm{\theta_{k+1}})~~~\text{as}~ 1-\alpha_i \geq 0 &   &
\end{align*}

The last inequality holds true per our initial assumption for Case 2. From the inductive hypothesis, we have $f(\bm{\theta_{k+1}}) \leq t_e(A_j) =t'_e(A_j)$.

\textbf{Case 3:} $t_e(A_i) \leq t_e(A_j) \leq f(\bm{\theta_{k+1}})$. The solution $\bm{\theta_{k+1}}$ is added to both archives. We need to show that $t'_e(A_i) \leq t'_e(A_j)$:

\begin{align}
t'_e(A_i) &\leq t'_e(A_j) &\iff \nonumber\\
(1-\alpha_i) t_e(A_i) + \alpha_i f(\bm{\theta_{k+1}}) &\leq (1-\alpha_j) t_e(A_j) + \alpha_j f(\bm{\theta_{k+1}}) 
 \label{eq:case_3a}
\end{align}

We can rewrite Eq.~\ref{eq:case_3a} as:
\begin{align}
(1-\alpha_j) t_e(A_j) - (1-\alpha_i) t_e(A_i) + \alpha_j f(\bm{\theta_{k+1}}) - \alpha_i f(\bm{\theta_{k+1}}) \geq 0
\label{eq:case_3b}
\end{align}

 First, note that:
 \begin{align*}
 (1-\alpha_j) t_e(A_j) - (1-\alpha_i) t_e(A_i) &\geq (1-\alpha_j) t_e(A_i) - (1-\alpha_i) t_e(A_i)\\
 &= (1-\alpha_j - 1 + \alpha_i)t_e(A_i)
 \\  &= (\alpha_i - \alpha_j) t_e(A_i).
 \end{align*}
 Thus:
 \begin{equation}
 (1-\alpha_j) t_e(A_j) - (1-\alpha_i) t_e(A_i) \geq (\alpha_i - \alpha_j) t_e(A_i)
 \label{eq:case_3c}
 \end{equation}

From Eq.~\ref{eq:case_3b} and~\ref{eq:case_3c} we have: 

\begin{align*}
  (1-\alpha_j) t_e(A_j)& + \alpha_j f(\bm{\theta_{k+1}}) - (1-\alpha_i) t_e(A_i) - \alpha_i f(\bm{\theta_{k+1}})\\ &\geq (\alpha_i - \alpha_j) t_e(A_i) + (\alpha_j - \alpha_i) f(\bm{\theta_{k+1}})\\
    &=(\alpha_j - \alpha_i)(f(\bm{\theta_{k+1}}) - t_e(A_i))
 \end{align*}
 
  As $\alpha_j > \alpha_i$ and $f(\bm{\theta_{k+1}}) \geq t_e(A_i)$, we have $(\alpha_j - \alpha_i)(f(\bm{\theta_{k+1}}) - t_e(A_i)) \geq 0$. This completes the proof that Eq.~\ref{eq:case_3b} holds.

As all cases in our inductive step hold, our proof by induction is complete.
\end{proof}

Finally, we wish to address the limitation that CMA-ME performs poorly on flat objectives, where all solutions have the same objective value. Consider how CMA-ME behaves on a flat objective $f(\bm{\theta}) = C$ for all $\bm{\theta} \in \mathbb{R}^n$, where $C$ is an arbitrary constant. CMA-ME will only discover each new cell once and will not receive any further feedback from that cell, since any future solution cannot replace the incumbent elite. This hinders the CMA-ME's movement in measure space, which is based on feedback from \textit{changes} in the archive. Future candidate solutions will only fall into occupied cells, triggering repeated restarts caused by CMA-ES's restart rule. 

When the objective function plateaus, we still want CMA-ME to perform well and benefit from the CMA-ES adaptation mechanisms. One reasonable approach would be to keep track of the frequency $o_e$ that each cell $e$ has been visited in the archive, where $o_e$ represents the number of times a solution was generated in that cell. Then, when a flat objective occurs, we rank solutions by descending frequency counts. Conceptually, CMA-ME would descend the density histogram defined by the archive, pushing the search towards regions of the measure space that have been sampled less frequently. Theorem.~\ref{thm:density_descent} shows that we obtain the density descent behavior on flat objectives \textit{without additional changes to the CMA-MAE algorithm}. 

To formalize our theory, we define the approximate density descent algorithm that is identical to CMA-ME, but differs by how solutions are ranked. Specifically, the algorithm maintains a histogram of occupancy counts $o_e$ for each cell $e$, with $o_e$ representing the number of times a solution was generated in that cell. This algorithm descends the density histogram in measure space by ranking solutions based on the occupancy count of the cell that the solution maps to, where solutions that discover less frequently visited cells are ranked higher than more frequently visited cells. We will prove CMA-MAE and the approximate density descent algorithm are equivalent for flat objectives.

\begin{lemma}
The threshold $t_e$ after $k$ additions to cell $e$ forms a strictly increasing  sequence for a constant objective function $f(\bm{\theta})=C$ for all $\bm{\theta} \in \mathbb{R}^n$, when $0<\alpha<1$ and $min_f < C$.
\label{Le_2}
\end{lemma}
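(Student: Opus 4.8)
The plan is to reduce the lemma to a one-dimensional linear recurrence and control it with a single invariant. Since $f \equiv C$, every candidate solution mapping to cell $e$ carries objective value exactly $C$, so an addition to $e$ happens precisely when the acceptance test $C > t_e$ fires, at which point the update rule $t_e \gets (1-\alpha)t_e + \alpha f$ becomes $t_e \gets (1-\alpha)t_e + \alpha C$. Writing $t_k$ for the threshold after the $k$-th addition (with $t_0 = min_f$), I would record this as the scalar recurrence $t_{k+1} = (1-\alpha)t_k + \alpha C$, which is exactly a convex combination of $t_k$ and $C$ with weights $1-\alpha$ and $\alpha$.

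The heart of the argument is the invariant $t_k < C$ for all $k \geq 0$, which I would establish by induction. The base case is $t_0 = min_f < C$, which is precisely the hypothesis $min_f < C$. For the inductive step, assuming $t_k < C$, the convex-combination form gives $t_{k+1} = (1-\alpha)t_k + \alpha C < (1-\alpha)C + \alpha C = C$, where the strict inequality uses $1-\alpha > 0$, i.e.\ $\alpha < 1$. This invariant does double duty: besides bounding the sequence it guarantees that the acceptance test $C > t_e$ stays satisfied at every step, so the sequence of additions never stalls (a point that is also what later licenses the identification of addition count with occupancy count in Theorem~\ref{thm:density_descent}).

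With the invariant in hand, strict monotonicity is immediate from the one-step difference $t_{k+1} - t_k = \alpha(C - t_k)$. Here $\alpha > 0$ and, by the invariant, $C - t_k > 0$, so $t_{k+1} > t_k$ for every $k$, establishing that the thresholds form a strictly increasing sequence.

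The only real obstacle is the invariant $t_k < C$: without it the difference $\alpha(C - t_k)$ could vanish, and if the threshold ever reached $C$ the sequence would become constant rather than strictly increasing. Both hypotheses enter exactly here --- $min_f < C$ anchors the base case and $0 < \alpha < 1$ makes the update a genuine convex combination that stays strictly below $C$ --- so I would emphasize that the argument is purely one-dimensional in $t_e$ and is independent of the CMA-ES sampling and adaptation machinery.
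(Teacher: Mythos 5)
Your proof is correct, but it carries a different invariant through the induction than the paper does. The paper inducts directly on the monotonicity claim itself: the base case $t_e(1) > t_e(0)$ reduces to $\alpha C > \alpha\, min_f$, and the inductive step reduces $t_e(k+1) > t_e(k)$ to $t_e(k) > t_e(k-1)$ by cancelling the common $\alpha C$ term and dividing by $1-\alpha > 0$. You instead induct on the upper bound $t_k < C$ and then read off strict monotonicity from the one-step difference $t_{k+1} - t_k = \alpha(C - t_k) > 0$. Both arguments are elementary and equally valid; yours buys two things the paper's proof leaves implicit: (i) the sequence is bounded above by $C$ (which is consistent with, and foreshadows, the closed form $t_e(k) = C - (C - min_f)(1-\alpha)^k$ proved later in Theorem~\ref{thm:closed_form_te}), and (ii) the acceptance test $C > t_e$ never stops firing, so additions to the cell never stall --- a point that matters when Lemma~\ref{Le_2} is invoked in Theorem~\ref{thm_apx:density_descent} to identify the number of additions with the occupancy count. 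The paper's version is marginally shorter; yours is marginally more informative.
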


\begin{proof}
To show that $t_e$ after $k$ additions to cell $e$ forms a strictly increasing sequence, we write a recurrence relation for $t_e$ after $k$ solutions have been added to cell $e$. Let $t_e(k) = (1-\alpha) t_e(k-1) + \alpha f(\theta_i)$ and $t_e(0) = min_f$ be that recurrence relation. To show the recurrence is an increasing function, we need to show that $t_e(k) > t_e(k-1)$ for all $k \geq 0$.

We prove the inequality via induction over cell additions $k$. As a basis step, we show $t_e(1) > t_e(0)$: $(1-\alpha) min_f + \alpha C > min_f \iff min_f - min_f - \alpha \cdot min_f + \alpha C \iff \alpha C > \alpha \cdot min_f$. As $C > min_f$ and $\alpha > 0$, the basis step holds.

For the inductive step, we assume that $t_e(k) > t_e(k-1)$ and need to show that $t_e(k+1) > t_e(k)$: $t_e(k+1) > t_e(k) \iff (1-\alpha) t_e(k) + \alpha C > (1-\alpha) t_e(k-1) + \alpha C \iff (1-\alpha) t_e(k) > (1-\alpha) t_e(k-1) \iff t_e(k) > t_e(k-1)$.
\end{proof}

\begin{theorem}
\label{thm_apx:density_descent}
The CMA-MAE algorithm optimizing a constant objective function $f(\bm{\theta}) = C$ for all $\bm{\theta} \in \mathbb{R}^n$ is equivalent to the approximate density descent algorithm, when $0 < \alpha < 1$ and $min_f < C$.
\end{theorem}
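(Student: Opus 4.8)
The plan is to reduce the claim to a statement about rankings. By construction the approximate density descent algorithm and CMA-MAE are identical in every component (sampling, CMA-ES adaptation, restart rule, archive replacement) except for the rule used to rank the $\lambda$ candidates in each batch. Hence I would argue by induction over iterations that the two algorithms stay in lockstep: assuming they have maintained identical CMA-ES state ($\bm{\theta}, \Sigma, \bm{p}$) and identical archives through a given iteration, I would show their rankings of the current batch coincide, so their subsequent CMA-ES updates and restarts coincide and they remain identical at the next iteration. The base case is immediate since both start from the same initialization. Thus the whole proof collapses to the single fact that, for a constant objective with $0<\alpha<1$ and $min_f<C$, ranking the candidates of a batch by CMA-MAE's $\Delta_i$ induces the same total order as ranking them by ascending occupancy count $o_e$.

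The engine of this is Lemma~\ref{Le_2} together with the observation that the threshold never catches up to $C$. First I would note that with $f(\bm{\theta})=C$ everywhere, whenever a candidate maps to a cell $e$ we have $f=C>t_e$: the recurrence $t_e \gets (1-\alpha)t_e+\alpha C$ with $t_e=min_f$ solves, after $k$ additions, to $t_e = C-(C-min_f)(1-\alpha)^{k}$, which is strictly below $C$ for every $k$ because $C-min_f>0$ and $0<1-\alpha<1$. Consequently every generated solution crosses its cell threshold and is added, so the number of additions to cell $e$ equals its occupancy count $o_e$, and the threshold used when scoring a candidate in cell $e$ is exactly $t_e = C-(C-min_f)(1-\alpha)^{o_e}$, a quantity depending only on $o_e$ and not otherwise on $e$. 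Lemma~\ref{Le_2} already guarantees this sequence is strictly increasing in $o_e$; the explicit closed form is convenient but not essential.

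The conclusion then follows directly: the CMA-MAE score is $\Delta_i = C - t_e = (C-min_f)(1-\alpha)^{o_e}$, a strictly decreasing function of the occupancy count $o_e$ of the candidate's cell. Therefore ranking candidates by descending $\Delta_i$ is identical to ranking them by ascending $o_e$, which is precisely the approximate density descent rule (cells visited less frequently are ranked higher). Ties match as well: two candidates whose target cells share the same occupancy count receive equal $\Delta_i$ and equal $o_e$, so any fixed tie-break produces the same order in both algorithms. This establishes the ranking equivalence required by the induction and completes the argument.

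The main obstacle is purely the bookkeeping inside a single batch rather than any analytic difficulty. Within one batch several candidates may map to the same cell, so I must fix the convention that $\Delta_i$ (line~\ref{mae:calc_imp} of Algorithm~\ref{alg:cma-mae}) is evaluated against the threshold reflecting all \emph{earlier} additions in the batch, and correspondingly that $o_e$ is read before being incremented. With this matched ``score-before-update'' convention the strictly monotone map $o_e \mapsto C - t_e$ preserves the within-batch order too, so the earlier of two same-cell candidates is ranked above the later one in both algorithms. Ensuring this ordering convention is identical on both sides, and that the always-added property legitimately equates additions with visits, is the only place where care is genuinely needed.
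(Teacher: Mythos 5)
Your proof is correct and follows essentially the same route as the paper's: both reduce the claim to the equivalence of per-batch rankings and derive it from the strictly increasing threshold sequence of Lemma~\ref{Le_2}, so that $\Delta_i = C - t_e$ is a decreasing function of the occupancy count. You additionally make explicit two points the paper's proof leaves implicit --- that every candidate crosses its threshold (so additions equal visits, legitimately identifying the number of threshold updates with $o_e$) and that $t_e$ depends only on the count and not on the cell identity --- which strengthens rather than changes the argument.
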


\begin{proof}
We will prove that for an arbitrary archive $A$ with both the occupancy count for each cell $o_e$ and the threshold value $t_e$ computed with arbitrary learning rate $0 < \alpha < 1$, CMA-MAE results in the same ranking for an arbitrary batch of solutions $\{ \bm{\theta_i} \}$ as the approximate density descent algorithm.

We let $\bm{\theta_i}$ and $\bm{\theta_j}$ be two arbitrary solutions in the batch mapped to cells $e_i$ and $e_j$. Without of loss of generality, we let $o_{e_i} \leq o_{e_j}$. The approximate density descent algorithm will thus rank  $\bm{\theta_i}$ before $\bm{\theta_j}$. We will show that CMA-MAE results in the same ranking. 

If $o_{e_i} \leq o_{e_j}$, and since $t_e$ is a strictly increasing function from Lemma~\ref{Le_2}: $t_{e_i}(o_{e_i})\leq t_{e_j}(o_{e_j})$. We have
$t_{e_i}(o_{e_i})\leq t_{e_j}(o_{e_j}) \iff C-t_{e_i}(o_{e_i})\geq C- t_{e_j}(o_{e_j})$. Thus, the archive improvement by adding $\bm{\theta_i}$ to the archive is larger than the improvement by adding $\bm{\theta_j}$ and CMA-MAE will rank $\bm{\theta_i}$ higher than $\bm{\theta_j}$, identically with density descent. 
\end{proof}

While Theorem~\ref{thm_apx:density_descent} assumes a constant objective $f$, we conjecture that the theorem holds true generally when threshold $t_e$ in each cell $e$ approaches the local optimum within the cell boundaries. 

\begin{conjecture}
\label{conj:density_descent_convex}
The CMA-MAE algorithm becomes equivalent to the density descent algorithm for a subset of archive cells for an arbitrary convex objective $f$, where the cardinality of the subset of cells increases as the number of iterations increases.
\end{conjecture}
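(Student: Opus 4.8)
The plan is to upgrade the two ingredients behind Theorem~\ref{thm_apx:density_descent} — the strictly increasing threshold sequence of Lemma~\ref{Le_2} and CMA-ES's invariance to order-preserving transformations — from the constant objective $f \equiv C$ to a convex $f$, but now restricted to a subset of cells whose thresholds have essentially saturated. For each cell $e$ I would first define the cell optimum $f^*_e = \sup\{ f(\bm{\theta}) : \bm{m}(\bm{\theta}) \in e \}$, which plays the role that the global constant $C$ played in the constant-objective proof.

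First I would generalize Lemma~\ref{Le_2}: for any cell $e$ that continues to receive accepted solutions, the threshold sequence $t_e(0) = min_f$, $t_e(k) = (1-\alpha) t_e(k-1) + \alpha f(\bm{\theta'}_k)$ is strictly increasing (each acceptance requires $f(\bm{\theta'}_k) > t_e(k-1)$) and bounded above by $f^*_e$ (since every accepted value satisfies $f(\bm{\theta'}_k) \leq f^*_e$). Hence $t_e(k)$ converges to a limit in $(min_f, f^*_e]$, and if solutions with objective value approaching $f^*_e$ keep being sampled in $e$, then $t_e(k) \to f^*_e$. Convexity of $f$ enters here to guarantee such near-optimal samples are eventually produced: once covariance adaptation localizes the search near a cell's optimizer, a convex landscape has no spurious interior traps, so the per-cell optimum is approached rather than stalled at.

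Next I would make the growing subset precise. Fix a tolerance $\epsilon > 0$ and let $A_\epsilon^{(T)}$ be the set of cells with $f^*_e - t_e \leq \epsilon$ after $T$ iterations. Because thresholds are monotone nondecreasing, once a cell enters $A_\epsilon^{(T)}$ it never leaves, so $|A_\epsilon^{(T)}|$ is nondecreasing in $T$; and by the convergence above, every cell that continues to be sampled eventually joins the set, giving the claimed growth in cardinality. On $A_\epsilon^{(T)}$ I would then replay the argument of Theorem~\ref{thm_apx:density_descent}: for a fresh candidate mapped to a saturated cell $e$ the improvement is $\Delta = f(\bm{\theta'}) - t_e \approx f^*_e - t_e$, and, unrolling the recurrence as in Lemma~\ref{Le_2}, this residual gap behaves like $(1-\alpha)^{o_e}(f^*_e - min_f)$, i.e.\ it is strictly decreasing in the occupancy count $o_e$. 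Ranking by $\Delta$ would therefore order the saturated cells by ascending $o_e$, which is exactly density descent.

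The main obstacle — and the reason this remains a conjecture rather than a theorem — is the \emph{cross-cell} comparison. The constant-objective proof had a single global value $C$, so the prefactor $f^*_e - min_f$ was identical for every cell and occupancy count alone fixed the ranking. For convex $f$ the prefactors differ between cells, so a more-visited cell can still present a larger improvement than a less-visited cell with a much higher optimum, breaking the clean equivalence. To close this gap I would have to use convexity quantitatively: bound the inter-cell variation $|f^*_{e_i} - f^*_{e_j}|$ by the cell diameter (convex functions are locally Lipschitz on compact sets) and pass to a high-resolution (small-cell) limit with $\epsilon \to 0$, so that among comparable neighboring cells the occupancy-induced geometric gaps dominate the residual differences in the optima. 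Controlling this interaction uniformly — compounded by the fact that the sampled objective values are random rather than exactly $f^*_e$, so $t_e$ is not a deterministic function of $o_e$ — is the delicate step, and I expect it to require either such a small-cell limit or an explicit separation assumption between the optimum gaps and the occupancy gaps.
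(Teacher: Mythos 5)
The statement you are addressing is stated in the paper as a \emph{conjecture}: the paper offers no proof, only an informal sketch (via the elite hypervolume hypothesis and CMA-ES convergence arguments) and explicitly leaves the formal proof to future work. Your plan follows essentially the same route as that sketch --- thresholds saturate toward per-cell optima, $f-f_A$ flattens into a plateau on a growing set of cells, and Theorem~\ref{thm_apx:density_descent} then yields density descent on that plateau --- but you make the mechanism more concrete than the paper does: you define the per-cell optimum $f^*_e$, derive the residual $(1-\alpha)^{k}(f^*_e - min_f)$ from the closed form behind Lemma~\ref{Le_2}, and define the monotonically growing saturated set $A_\epsilon^{(T)}$ explicitly, whereas the paper instead appeals to the elite hypervolume hypothesis to argue the plateau nucleates at the global optimum and spreads outward. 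Your identification of the cross-cell obstruction --- that the prefactor $f^*_e - min_f$ differs across cells, so occupancy alone no longer determines the ranking --- is exactly the step the paper's sketch glosses over, and it is the correct reason the statement remains a conjecture.

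One additional gap you should fold into your list of obstacles: the approximate density descent algorithm ranks by the occupancy count $o_e$, defined as the number of solutions \emph{generated} in cell $e$, while your residual $(1-\alpha)^{k}(f^*_e - min_f)$ is governed by the number of \emph{accepted} additions $k$. For the constant objective these coincide because Lemma~\ref{Le_2} keeps $t_e$ strictly below $C$, so every generated solution is accepted; for convex $f$ many generated solutions fall below the current threshold and leave $t_e$ unchanged, so $k \neq o_e$ in general and the correspondence between threshold decay and visit frequency --- the very identification that drives Theorem~\ref{thm_apx:density_descent} --- breaks even within a single cell, not only across cells. Any completed proof would need to control this discrepancy (e.g., by showing the acceptance rate is asymptotically cell-independent on the saturated set) in addition to the inter-cell prefactor issue you already name.
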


We provide intuition for our conjecture through the lense of the elite hypervolume hypothesis~\citep{vassiliades:gecco18}. The elite hypervolume hypothesis states that optimal solutions for the MAP-Elites archive form a connected region in search space. Later work~\citep{rakicevic2021policy} connected the elite hypervolume hypothesis to the manifold hypothesis~\citep{fefferman2016testing} in machine learning, stating that the elite hypervolume can be represented by a low dimensional manifold in search space.

For our conjecture, we assume that the elite hypervolume hypothesis holds and there exists a smooth manifold that represents the hypervolume. Next, we assume in the conjecture that $f$ is an arbitrary convex function. As $f$ is convex, early in the CMA-MAE search the discount function $f_A$ will be flat and the search point $\bm{\theta}$ will approach the global optimum following CMA-ES's convergence properties~\citep{hansen1997convergence, hansen2003reducing}, where the precision of convergence is controlled by archive learning rate $\alpha$. By definition, the global optimum $\bm{\theta^*}$ is within the elite hypervolume as no other solution of higher quality exists within its archive cell. Assuming the elite hypervolume hypothesis holds, a subset of adjacent solutions in search space will also be in the hypervolume due to the connectedness of the hypervolume. As $f_A$ increases around the global optimum, we conjecture that the function $f(\bm{\theta^*})-f_A(\bm{\theta^*})$ will form a plateau around the optimum, since it will approach the value $f(\bm{\theta_i})-f_A(\bm{\theta_i})$ of adjacent solutions $\bm{\theta_i}$. By Theorem~\ref{thm_apx:density_descent} we have a density descent algorithm within the plateau, pushing CMA-MAE to discover solutions on the frontier of the known hypervolume.

Finally, we remark that our conjecture implies that $f-f_A$ tends towards a constant function in the limit, resulting in a density descent algorithm across the elite hypervolume manifold as the number of generated solutions approaches infinity. We leave a formal proof of this conjecture for future work.

\section{Theoretical Properties of CMA-MAEGA}
\label{sec:cmamaega-theory}

In this section, we investigate how the theoretical properties of CMA-MAE apply to CMA-MAEGA. While many of the properties are nearly a direct mapping, we note that, while CMA-MAE is equivalent to the single-objective optimization algorithm CMA-ES for $\alpha=0$, there is no single-objective counterpart to CMA-MAEGA. To make the direct mapping easier, we introduce a counterpart: the gradient arborescence ascent algorithm.

The gradient arborescence ascent algorithm is similar to CMA-MEGA, but without an archive. Like CMA-MEGA, the algorithm assumes a differentiable objective $f$ and differentiable measures $\bm{m}$. However, the algorithm leverages the objective and measure function gradients only to improve the optimization of the objective $f$, rather than to find solutions that are diverse with respect to measures $\bm{m}$. As with CMA-MEGA, the gradient arborescence algorithm branches in objective-measure space. However, the algorithm ranks solutions purely by the objective function $f$ and adapts the coefficient distribution $N(\bm{\mu},\Sigma)$ towards the natural gradient of the objective $f$.

Next, we prove properties of CMA-MAEGA that directly follow from the properties of CMA-MAE.

\begin{theorem}
The gradient arborescence ascent algorithm is equivalent to CMA-MAEGA when $\alpha=0$, if gradient arborescence ascent restarts from an archive elite.
\end{theorem}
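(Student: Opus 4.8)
The plan is to mirror the proof of Theorem~\ref{thm_apx:alpha0}, since gradient arborescence ascent stands to CMA-MAEGA exactly as CMA-ES stands to CMA-MAE. First I would observe that the two algorithms are identical in every moving part: both branch in objective-measure space by sampling coefficients $\bm{c} \sim \mathcal{N}(\bm{\mu},\Sigma)$, forming search directions $\bm{\nabla}_i = c_0 \bm{\nabla}_f + \sum_{j=1}^{k} c_j \bm{\nabla}_{m_j}$ and candidate points $\bm{\theta_i}' = \bm{\theta} + \bm{\nabla}_i$; both take a gradient-ascent step on $\bm{\theta}$; and both adapt $\mathcal{N}(\bm{\mu},\Sigma)$ through the CMA-ES update rules. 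The only difference is the ranking criterion: CMA-MAEGA ranks the $\bm{\nabla}_i$ by $\Delta_i = f(\bm{\theta_i}') - t_e$, whereas gradient arborescence ascent ranks purely by $f(\bm{\theta_i}')$. Hence it suffices to prove that, at $\alpha = 0$, these two criteria induce the same ordering on any batch of sampled directions.

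Next I would analyze the threshold under $\alpha = 0$. The update $t_e \gets (1-\alpha) t_e + \alpha f$ collapses to $t_e \gets t_e$, so every threshold stays frozen at its initialized value $min_f$ for the entire run, even though CMA-MAEGA continues to replace archive occupants whenever $f' > min_f$; the archive is still populated for output, but its contents never feed back into the ranking. The point worth emphasizing is that at $\alpha = 0$ \emph{every} cell carries the identical threshold $min_f$ --- unlike the general $0 < \alpha < 1$ case, where distinct cells accumulate distinct thresholds --- so the quantity subtracted from $f(\bm{\theta_i}')$ is one global constant, independent of which cell $\bm{\theta_i}'$ maps to.

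From there the argument closes exactly as before: $\Delta_i = f(\bm{\theta_i}') - min_f$ is a uniform order-preserving transformation of $f(\bm{\theta_i}')$, and because the CMA-ES adaptation and the gradient step depend only on the comparison-based ranking of the directions, which is invariant to order-preserving transformations of the ranked values~\citep{hansen:cma16}, CMA-MAEGA and gradient arborescence ascent produce identical rankings, and therefore identical updates to $\bm{\theta}$, $\bm{\mu}$, $\Sigma$, and $\bm{p}$ at every iteration. To carry this through a full run I would induct over iterations, invoking the restart caveat at the inductive step: since CMA-MAEGA restarts from a randomly selected archive elite, the hypothesis that gradient arborescence ascent also restarts from an archive elite keeps the two trajectories coincident across restarts.

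I do not anticipate a serious obstacle, as the structure is a direct transcription of the derivative-free case. The only care required is the bookkeeping observation that at $\alpha = 0$ all cells share the single threshold $min_f$, so subtracting $t_e$ is a single global shift rather than a cell-dependent transformation --- precisely the property that guarantees the ranking is preserved.
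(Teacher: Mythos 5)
Your proposal is correct and follows essentially the same route as the paper: the paper's proof simply notes that the two algorithms differ only in their ranking rule and then directly adapts the proof of the CMA-ES/CMA-MAE case, which is exactly the argument you spell out (frozen threshold $t_e = min_f$ for all cells at $\alpha=0$, hence ranking by $f - min_f$ is an order-preserving transformation of ranking by $f$, plus the restart caveat). Your version is merely more explicit than the paper's one-line reduction.
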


\begin{proof}
We note that CMA-MAEGA and the gradient arborescence ascent algorithm differ only in how they rank solutions, and we note that the differences between CMA-MAE and CMA-ES mirror the differences between CMA-MAEGA and gradient arborescence ascent algorithm. So by directly adapting the proof of Theorem~\ref{thm_apx:alpha0}, we complete our proof.
\end{proof}

\begin{theorem}
The CMA-MEGA algorithm is equivalent to CMA-MAEGA when $\alpha = 1$ and $min_f$ is an arbitrarily large negative number.
\end{theorem}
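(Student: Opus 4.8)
The plan is to mirror the proof of Theorem~\ref{thm_apx:alpha1} almost verbatim, exploiting the observation---already used for the $\alpha=0$ case---that CMA-MAEGA and CMA-MEGA differ only in how they replace solutions in the archive and how they compute the improvement values $\Delta_i$, and that these two differences coincide exactly with the differences between CMA-MAE and CMA-ME. Both DQD algorithms generate candidates identically, by sampling coefficients $\bm{c}\sim\mathcal{N}(\bm{\mu},\Sigma)$ and forming $\bm{\theta_i}' = \bm{\theta} + c_0\bm{\nabla}_f + \sum_{j=1}^{k} c_j \bm{\nabla}_{m_j}$, and both update the coefficient distribution $N(\bm{\mu},\Sigma)$ and the search point $\bm{\theta}$ solely as a function of the improvement ranking of the gradients $\bm{\nabla}_i$. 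Hence it suffices to show that the two algorithms induce identical rankings at every iteration.

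First I would establish the analog of Lemma~\ref{Le_1} for CMA-MAEGA. Since the threshold update $t_e \gets (1-\alpha)t_e + \alpha f'$ and the replacement condition $f' > t_e$ are syntactically identical to those in CMA-MAE, the same induction shows that for $\alpha=1$ the threshold $t_e$ equals $f(\bm{\theta_e})$ for every occupied cell and $min_f$ for every empty cell. The only new bookkeeping is the extra archive update applied to the central solution $\bm{\theta}$ at the top of each CMA-MAEGA iteration (the block following line~\ref{maega:initeval}); because CMA-MEGA performs the corresponding elitist insertion of $\bm{\theta}$ as well, this update preserves the synchronization of the two archives and their thresholds.

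Next I would rerun the case analysis of Theorem~\ref{thm_apx:alpha1} on the ranking. For a gradient $\bm{\nabla}_i$ whose solution $\bm{\theta_i}'$ maps to an empty cell, CMA-MAEGA assigns $\Delta_i = f' - t_e = f' - min_f$, an order-preserving shift of the value $f'$ that CMA-MEGA's first-stage ranking uses; when $\bm{\theta_i}'$ maps to an occupied cell, the lemma gives $\Delta_i = f' - f(\bm{\theta_e})$, exactly CMA-MEGA's improvement value. Taking $min_f$ to be an arbitrarily large negative number forces $f' - min_f > f' - f(\bm{\theta_e})$, so CMA-MAEGA ranks all empty-cell discoveries above all occupied-cell improvements, reproducing CMA-MEGA's two-stage ordering. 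Because comparison-based ranking is invariant to order-preserving transformations, the induced permutation of the $\bm{\nabla}_i$ is identical, and therefore the weighted step $\bm{\nabla}_{\textrm{step}}$, the update to $\bm{\theta}$, and the adaptation of $N(\bm{\mu},\Sigma)$ all coincide.

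The main obstacle I anticipate is organizational rather than mathematical: I must verify that the archive is updated in the same places and in the same order in both algorithms---in particular the pre-loop insertion of the central point---so that the inductive hypothesis ``the two archives and threshold histograms agree at the start of every iteration'' is genuinely maintained. Once that synchronization is confirmed, the ranking equivalence is immediate, and the theorem follows by induction over iterations.
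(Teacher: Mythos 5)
Your proposal is correct and follows essentially the same route as the paper, which simply observes that the CMA-MEGA/CMA-MAEGA differences mirror the CMA-ME/CMA-MAE differences and adapts the proof of the $\alpha=1$ theorem directly. Your version is more explicit — carrying over the lemma on thresholds, rerunning the two-stage ranking case analysis, and checking the synchronization of the pre-loop insertion of the central point — but these are exactly the details the paper's one-line adaptation implicitly relies on.
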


\begin{proof}
We note that CMA-MAEGA and the CMA-MEGA algorithm differ only in how they rank solutions and how they update the archive $A$, and we note that the differences between CMA-MAE and CMA-ME mirror the differences between CMA-MAEGA and CMA-MEGA. So by directly adapting the proof of Theorem~\ref{thm_apx:alpha1}, we complete our proof.
\end{proof}

\begin{theorem}
Let $\alpha_i$ and $\alpha_j$ be two archive learning rates for archives $A_i$ and $A_j$ such that $0 \leq \alpha_i < \alpha_j \leq 1$. For two runs of CMA-MAEGA that generate the same sequence of $m$ candidate solutions $\{S\} = \bm{\theta_1}, \bm{\theta_2}, ..., \bm{\theta_m}$, it follows that $f_{A_i}(\bm{\theta}) \leq f_{A_j}(\bm{\theta})$ for all $\bm{\theta}\in \mathbb{R}^n$.
\end{theorem}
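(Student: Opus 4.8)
The plan is to reduce this statement to Theorem~\ref{thm_apx:monot}, the corresponding monotonicity result for CMA-MAE, by observing that the discount function $f_A$ in CMA-MAEGA is governed by exactly the same threshold dynamics as in CMA-MAE. First I would note that $f_A$ is fully determined by the collection of per-cell thresholds $t_e$, and that in both Algorithm~\ref{alg:cma-mae} and Algorithm~\ref{alg:cma-maega} a threshold is modified only when a candidate $\bm{\theta}$ mapped to cell $e$ satisfies $f(\bm{\theta}) > t_e$, in which case it is updated by the identical rule $t_e \gets (1-\alpha)t_e + \alpha f(\bm{\theta})$, and is otherwise left unchanged. This is the only point at which $\alpha$ enters the archive dynamics; the gradient-arborescence sampling of CMA-MAEGA affects only which candidates are produced, not how a given candidate updates the thresholds.

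Next, I would invoke the hypothesis that the two runs generate the same sequence of $m$ candidate solutions $\{S\}$. Because the threshold update depends only on the candidate's objective value, its mapped cell, the current threshold, and $\alpha$, conditioning on a common candidate sequence eliminates every source of variation other than $\alpha$ itself: the evolution of each $t_e$ becomes a deterministic function of $\{S\}$ and $\alpha$, exactly as in the CMA-MAE setting. I would also remark that it is immaterial whether the flattened sequence $\{S\}$ interleaves the central search-point evaluation (line~\ref{maega:initeval}) with the branched candidates (the inner loop of lines~\ref{maega:startforloop}--\ref{maega:endforloop}); the only data relevant to the threshold recurrence is the ordered list of candidates together with their objective values.

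With this reduction in hand, the induction from the proof of Theorem~\ref{thm_apx:monot} transfers verbatim. Concretely, I would establish $t_e(A_i) \leq t_e(A_j)$ for every cell $e$ by induction over the $m$ additions: the base case holds because both archives initialize every threshold at $min_f$, and the inductive step splits into the three cases according to whether $f(\bm{\theta_{k+1}})$ lies below both thresholds, between them, or above both, with the algebra in each case identical to that of Theorem~\ref{thm_apx:monot} precisely because the update rule is identical. Summing this pointwise inequality over the histogram of thresholds then yields $f_{A_i}(\bm{\theta}) \leq f_{A_j}(\bm{\theta})$ for all $\bm{\theta} \in \mathbb{R}^n$.

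I expect the only genuine subtlety — and hence the step to verify most carefully — is that CMA-MAEGA introduces no additional, $\alpha$-dependent archive update beyond the shared rule: it can update a threshold both after evaluating the central point and after each branched candidate, so one must confirm that \emph{both} of these updates use precisely $t_e \gets (1-\alpha)t_e + \alpha f$ under the same acceptance test $f > t_e$. Once this is checked against line~\ref{maega:inc_thresh} and the analogous update following line~\ref{maega:initeval}, the reduction is clean and the CMA-MAE induction applies without modification.
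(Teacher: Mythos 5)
Your proposal is correct and follows essentially the same route as the paper, which likewise proves this result by observing that CMA-MAE and CMA-MAEGA update the archive (and hence the thresholds $t_e$) identically and then transferring the induction from Theorem~\ref{thm_apx:monot} verbatim. Your additional check that the central search-point evaluation in CMA-MAEGA also uses the same acceptance test and update rule is a worthwhile detail the paper leaves implicit, but it does not change the argument.
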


\begin{proof}
We note that CMA-MAE and CMA-MAEGA update the archive $A$ in exactly the same way. Therefore, the proof follows directly by adapting the proof of Theorem~\ref{thm_apx:monot} to CMA-MAEGA.
\end{proof}

Next, we wish to show that CMA-MAEGA results in density descent in measure space. However, we need a counterpart to the approximate density descent algorithm we defined in Theorem~\ref{thm_apx:density_descent}. 

Consider an approximate density descending arborescence algorithm that is identical to CMA-MEGA, but differs by how solutions are ranked. Specifically, we assume that this  algorithm maintains an occupancy count $o_e$ for each cell $e$, which is the number of times a solution was generated in that cell. The density descent algorithm ranks solutions based on the occupancy count of the cell that the solution maps to, where solutions that discover less frequently visited cells are ranked higher. The algorithm takes steps in search space $\mathbb{R}^n$ that minimize the approximate density function defined by the archive and adapts the coefficient distribution $N(\bm{\mu},\Sigma)$ towards coefficients that minimize the density function.

\begin{theorem}
The CMA-MAEGA algorithm optimizing a constant objective function $f(\bm{\theta}) = C$ for all $\bm{\theta} \in \mathbb{R}^n$ is equivalent to the approximate density descending arborescence algorithm, when $0 < \alpha < 1$ and $min_f < C$.
\end{theorem}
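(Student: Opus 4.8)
The plan is to mirror the structure of the three preceding CMA-MAEGA theorems and reduce the claim to Theorem~\ref{thm_apx:density_descent} for CMA-MAE. The crucial observation is that CMA-MAE and CMA-MAEGA maintain and update the archive $A$ in exactly the same way: both accept a candidate into cell $e$ precisely when its objective value exceeds $t_e$, and both then update the threshold via $t_e \gets (1-\alpha) t_e + \alpha f$. Consequently, Lemma~\ref{Le_2} applies verbatim to CMA-MAEGA, so for $0<\alpha<1$ and $min_f < C$ the sequence of thresholds $t_e$ after $k$ additions to a cell is strictly increasing in $k$.

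First I would fix an arbitrary iteration and an arbitrary batch of candidate solutions $\{\bm{\theta_i}'\}$ produced by the gradient arborescence, where each $\bm{\theta_i}' = \bm{\theta} + c_0 \bm{\nabla}_f + \sum_{j=1}^{k} c_j \bm{\nabla}_{m_j}$. The essential point is that both CMA-MAEGA and the approximate density descending arborescence algorithm rank the associated gradient directions $\bm{\nabla}_i$ solely by a scalar attached to the cell $e_i$ to which $\bm{\theta_i}'$ maps---namely $\Delta_i = f' - t_{e_i} = C - t_{e_i}$ for CMA-MAEGA, and the occupancy count $o_{e_i}$ for the density descending counterpart---and that neither ranking depends on the gradient-arborescence geometry itself. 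This isolates the entire content of the equivalence in a one-dimensional comparison of cell statistics.

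Next I would establish that these two rankings coincide. Since the objective is constant and $min_f < C$, every generated solution clears the current threshold of its cell (each $t_e$ remains a convex combination of $min_f$ and $C$, hence strictly below $C$), so the number of accepted solutions in a cell equals its occupancy count $o_e$; hence $t_e$ is a well-defined strictly increasing function of $o_e$ by Lemma~\ref{Le_2}. Taking two solutions with $o_{e_i} \leq o_{e_j}$ then yields $t_{e_i}(o_{e_i}) \leq t_{e_j}(o_{e_j})$ and therefore $\Delta_i = C - t_{e_i}(o_{e_i}) \geq C - t_{e_j}(o_{e_j}) = \Delta_j$, so CMA-MAEGA ranks $\bm{\nabla}_i$ at least as high as $\bm{\nabla}_j$, exactly as the density descending arborescence does. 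Because the two algorithms agree on the ranking of every batch and are otherwise identical in their gradient step $\bm{\nabla}_{\textrm{step}}$, their update of $\bm{\theta}$, and their adaptation of $N(\bm{\mu},\Sigma)$, they generate identical trajectories.

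The main obstacle---really the only place requiring care---is confirming that the archive evolves identically in the two settings despite CMA-MAEGA performing an extra archive update at the incumbent search point $\bm{\theta}$ at the start of each iteration (line~\ref{maega:initeval}), which has no analogue in the derivative-free CMA-MAE proof. I would note that this pre-loop update uses the same acceptance rule and the same threshold recurrence, and that the density descending arborescence is defined to differ from CMA-MEGA only in ranking, so the extra update is shared by both algorithms. It therefore preserves the occupancy-count/threshold correspondence and does not disturb the equivalence. With that subtlety dispatched, the result follows by directly adapting the proof of Theorem~\ref{thm_apx:density_descent} to the arborescence setting.
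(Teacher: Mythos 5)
Your proposal is correct and follows essentially the same route as the paper, which simply observes that the proof of Theorem~\ref{thm_apx:density_descent} depends only on the archive and threshold update mechanism (shared verbatim by CMA-MAEGA) and adapts it directly. Your additional care regarding the pre-loop archive update at the incumbent search point is a sensible elaboration the paper leaves implicit, but it does not change the argument.
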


\begin{proof}
The proof of Theorem~\ref{thm_apx:density_descent} relies only on how CMA-MAE updates the archive $A$ and acceptance threshold $t_e$. The proof of this theorem follows directly by adapting the proof of Theorem~\ref{thm_apx:density_descent} to CMA-MAEGA.
\end{proof}

\section{Derivation of the Conversion Formula for the Archive Learning Rate}
\label{sec:conversion}

In this section, we derive the archive learning rate conversion formula $\alpha_2 = 1 - (1 - \alpha_1)^r$ mentioned in Section~\ref{sec:robustness} of the main paper, where $r$ is the ratio between archive cell counts, and $\alpha_1$ and $\alpha_2$ are archive learning rates for two archives $A_1$ and $A_2$.

Given an archive learning rate $\alpha_1$ for $A_1$, we want to derive an equivalent archive learning rate $\alpha_2$ for $A_2$ that results in robust performance when CMA-MAE is run with either $A_1$ or $A_2$. A principled way to derive a conversion formula for $\alpha_2$ is to look for an invariance property that affects the performance of CMA-MAE and that holds when CMA-MAE generates solutions in archives $A_1$ and $A_2$.

Since CMA-MAE ranks solutions by $f-f_A$, we wish for $f_A$ to increase at the same rate in the two archives. Since $f_A(\bm{\theta})=t_e$, where $t_e$ is the cell that a solution $\bm{\theta}$ maps to, we select the \textit{average value} of the acceptance thresholds $t_e$ over all cells in each archive as our invariant property.

We assume an arbitrary sequence of $N$ solution additions $\bm{\theta_1}, \bm{\theta_2}, ..., \bm{\theta_N}$, evenly dispersed across the archive cells. We then specify $t_e$ as a function that maps $k$ cell additions to a value $t_e$ in archive cell $e$.\footnote{Here we abuse notation and view $t_e$ as a function instead of threshold for simplicity and to highlight the connection to the threshold value $t_e$.}  Equation~\ref{eq:avg_te} then defines the average value of $t_e$ across the archive after $N$ additions to an archive $A$ with $M$ cells.

\begin{equation} \label{eq:avg_te}
 \frac{1}{M} \sum_{i=1}^{M}  {t_e \left( \frac{N}{M} \right)}
\end{equation}

Then, equation~\ref{eq:alr_invariance} defines the invariance we want to guarantee between archives $A_1$ and $A_2$.

\begin{equation} \label{eq:alr_invariance}
 \frac{1}{M_1} \sum_{i=1}^{M_1}  {t_e \left( \frac{N}{M_1} \right)} =  \frac{1}{M_2} \sum_{i=1}^{M_2}  {t_e \left( \frac{N}{M_2} \right)}
\end{equation}

In Eq.~\ref{eq:alr_invariance}, we let $M_1$ and $M_2$ the number of cells in archives $A_1$ and $A_2$, and we assume that $M_1$ and $M_2$ divide $N$.
To solve for a closed form of $\alpha_2$ subject to our invariance, we need a formula for the function $t_e$. Similar to Lemma~\ref{Le_2}, we can represent the function $t_e$ as a recurrence relation after adding $k$ solutions to cell $e$ of an archive $A$. 

\begin{align}
t_e(0) &= min_f \nonumber \\
t_e(k) &= (1-\alpha) t_e(k-1) + \alpha f(\bm{\theta_k})
\end{align}

Next, we look to derive a closed form for $t_e(k)$ for an archive $A$ as a way to manipulate Equation~\ref{eq:alr_invariance}. However, solving for $t_e(k)$ when $f$ is an arbitrary function is difficult, because different regions of the archive will change at different rates. Instead, we solve for the special case when $f(\bm{\theta}) = C$ and $min_f < C$, where $C\in \mathbb{R}$ is a constant scalar. To solve for a closed form of the recurrence $t_e(k)$, we leverage the recurrence unrolling method~\citep{graham1989concrete}, allowing us to guess the closed form in Equation~\ref{eq:closed_te}.

\begin{align}
 \label{eq:closed_te}
t_e(1) &= (1-\alpha) t_e(0) + \alpha C = (1-\alpha) min_f + \alpha C \nonumber \\
t_e(2) &= (1-\alpha) t_e(1) + \alpha C = (1-\alpha)[(1-\alpha) min_f + \alpha C] + \alpha C  \nonumber \\
&= (1-\alpha)^2 min_f + (1-\alpha) \alpha C + \alpha C \nonumber \\
t_e(3) &= (1-\alpha) t_e(2) + \alpha C \nonumber \\ 
&= (1-\alpha)[(1-\alpha)^2 min_f + (1-\alpha) \alpha C+ \alpha C] + \alpha C \nonumber \\
&= (1-\alpha)^3 min_f + (1-\alpha)^2 \alpha C + (1-\alpha) \alpha C + \alpha C \nonumber\\
&~~\vdots\nonumber\\
t_e(k) &= (1-\alpha)^k min_f + \sum_{i=0}^{k-1} (1-\alpha)^i \alpha C
\end{align}

We recognize the summation in Equation~\ref{eq:closed_te} as a geometric series. As $0 < \alpha < 1$, we rewrite the summation as follows.

\begin{align}
t_e(k) &= (1-\alpha)^k min_f + \sum_{i=0}^{k-1} (1-\alpha)^i \alpha C \nonumber \\
&= (1-\alpha)^k min_f + \alpha C \left( \frac{1-(1-\alpha)^k}{1-(1-\alpha)} \right) \nonumber \\
&= (1-\alpha)^k min_f + \alpha C \left( \frac{1-(1-\alpha)^k}{\alpha} \right) \nonumber \\
&= (1-\alpha)^k min_f + C-C(1-\alpha)^k \nonumber \\
&= (min_f - C) (1 - \alpha)^k + C \nonumber \\
&= C - (C - min_f)(1 - \alpha)^k
\end{align}

Next, we prove that the closed form we guessed is the closed form of the recurrence relation.

\begin{theorem}
\label{thm:closed_form_te}
The recurrence relation $t_e(0) = min_f$ and $t_e(k) = (1-\alpha) t_e(k-1) + \alpha C$ has the closed form $t_e(k) = C - (C - min_f)(1 - \alpha)^k$, where $0 < \alpha < 1$ and $min_f < C$.
\end{theorem}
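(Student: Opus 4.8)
The plan is to prove the closed form by induction on the number of cell additions $k$, since the closed form has already been obtained heuristically via recurrence unrolling and the only remaining task is verification. This is the standard route for confirming a guessed solution to a linear first-order recurrence: check the base case against the initial condition, then show the closed form is preserved by one application of the recurrence relation.

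First I would verify the base case $k=0$. Evaluating the candidate formula at $k=0$ gives $C - (C - min_f)(1-\alpha)^0 = C - (C - min_f) = min_f$, which agrees with the stipulated initial condition $t_e(0) = min_f$. For the inductive step, I would assume the formula holds at step $k$, namely $t_e(k) = C - (C - min_f)(1-\alpha)^k$, and substitute this into the recurrence $t_e(k+1) = (1-\alpha) t_e(k) + \alpha C$. The computation proceeds as
\begin{align*}
t_e(k+1) &= (1-\alpha)\left[C - (C - min_f)(1-\alpha)^k\right] + \alpha C \\
&= (1-\alpha)C - (C - min_f)(1-\alpha)^{k+1} + \alpha C \\
&= C - (C - min_f)(1-\alpha)^{k+1},
\end{align*}
where the final line uses $(1-\alpha)C + \alpha C = C$. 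This is exactly the candidate formula evaluated at $k+1$, completing the inductive step and hence the proof.

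There is no genuine obstacle here: the argument is a routine induction, and the only place requiring care is the algebraic simplification in the inductive step, specifically recognizing that the two terms linear in $C$ collapse via $(1-\alpha) + \alpha = 1$. The hypotheses $0 < \alpha < 1$ and $min_f < C$ are not actually needed for the closed-form identity itself (the recurrence and its solution are valid for any scalars), but they are the standing assumptions from the surrounding derivation and ensure the geometric-series manipulation used to guess the form in Equation~\ref{eq:closed_te} is well defined; I would simply carry them along and not invoke them in the verification.
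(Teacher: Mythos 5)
Your proof is correct and follows essentially the same route as the paper: verify the base case $t_e(0)=min_f$ and then an induction step substituting the closed form into the recurrence and collapsing $(1-\alpha)C+\alpha C = C$. Your side remark that the hypotheses $0<\alpha<1$ and $min_f<C$ are not needed for the identity itself is accurate but does not change the argument.
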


\begin{proof}
We show the closed form holds via induction over cell additions $k$.

As a basis step we show that $t_e(0) = C - (C - min_f)(1 - \alpha)^0 = C - (C - min_f) = min_f$. 

For the inductive step, suppose after $j$ insertions into the archive $A$ in cell $e$ our closed form holds. We show that the closed form holds for $j+1$ insertions.

\begin{align}
t_e(j+1) &= (1-\alpha) t_e(j) + \alpha C \nonumber \\
&= (1-\alpha) [C - (C-min_f)(1-\alpha)^j] + \alpha C \nonumber \\
&= C(1-\alpha) - (C-min_f)(1-\alpha)^{j+1} + \alpha C \nonumber \\
&= C - \alpha C + \alpha C - (C-min_f)(1-\alpha)^{j+1} \nonumber \\
&= C - (C-min_f)(1-\alpha)^{j+1} 
\end{align}

As our basis and inductive steps hold, our proof is complete.

\end{proof}

The closed form from Theorem~\ref{thm:closed_form_te} allows us to derive a conversion formula for $\alpha_2$ via our invariance formula in Equation~\ref{eq:alr_invariance}.

\begin{align}
\frac{1}{M_1} \sum_{i=1}^{M_1}  {t_e \left( \frac{N}{M_1} \right)} &=  \frac{1}{M_2} \sum_{i=1}^{M_2}  {t_e \left( \frac{N}{M_2} \right)} \nonumber \\
\frac{M_1}{M_1} \left( C - (C - min_f)(1 - \alpha_1)^\frac{N}{M_1} \right) &= \frac{M_2}{M_2} \left( C - (C - min_f)(1 - \alpha_2)^\frac{N}{M_2} \right) \nonumber \\
(C - min_f)(1 - \alpha_1)^\frac{N}{M_1} &= (C - min_f)(1 - \alpha_2)^\frac{N}{M_2} \nonumber \\
(1 - \alpha_1)^\frac{N}{M_1} &= (1 - \alpha_2)^\frac{N}{M_2} \nonumber \\
(1 - \alpha_1)^\frac{M_2}{M_1} &= (1 - \alpha_2) \nonumber \\
\alpha_2 &= 1 - (1 - \alpha_1)^\frac{M_2}{M_1}
\end{align}

We remark that our conversion formula is not dependent on the number of archive additions $N$. Although our conversion formula assumes $f$ to be a constant objective, we conjecture that the formula holds generally for a convex objective $f$. %

\begin{conjecture}
The archive learning rate conversion formula results in invariant behavior of CMA-MAE for two arbitrary archives $A_1$ and $A_2$ with archive resolutions $M_1$ and $M_2$, for a convex objective $f$.
\end{conjecture}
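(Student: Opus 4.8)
The plan is to lift the constant-objective derivation of $\alpha_2 = 1 - (1-\alpha_1)^{M_2/M_1}$ to the convex case by replacing the single global target $C$ with a \emph{per-cell} target, and then showing that the average-threshold invariance of Eq.~\ref{eq:alr_invariance} still forces the same conversion formula, at least asymptotically in the resolution. First I would define, for each cell $e$, the local target $f^*_e = \max\{f(\bm{\theta}) : \bm{m}(\bm{\theta}) \in e\}$, the best objective value attainable within the cell. Leveraging Conjecture~\ref{conj:density_descent_convex}, which asserts that $t_e$ approaches the local optimum within the cell boundaries, I would model the accepted objective values streaming into cell $e$ as converging to $f^*_e$, so that the threshold recurrence becomes $t_e(k) = (1-\alpha)\,t_e(k-1) + \alpha f^*_e + \epsilon_k$ with a perturbation $\epsilon_k \to 0$.

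Next I would solve this perturbed recurrence. Dropping the $\epsilon_k$ reproduces exactly the recurrence of Theorem~\ref{thm:closed_form_te} with $C$ replaced by $f^*_e$, giving the closed form $t_e(k) = f^*_e - (f^*_e - min_f)(1-\alpha)^k$; the accumulated perturbation $\sum_{i=1}^{k}(1-\alpha)^{k-i}\epsilon_i$ is absolutely summable because $0 < 1-\alpha < 1$, so it contributes only a vanishing correction. Substituting this per-cell closed form into Eq.~\ref{eq:alr_invariance} and collecting terms, the crucial feature carried over from the constant case is that the dependence on $min_f$ factors as $(f^*_e - min_f)(1-\alpha)^{N/M}$, so the invariance reduces to comparing two weighted averages of the form $\frac{1}{M}\sum_e \big[f^*_e - (f^*_e - min_f)(1-\alpha)^{N/M}\big]$ across the coarse and fine archives.

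The decisive step is then a refinement argument. Since $A_2$ subdivides each cell of $A_1$ into $r = M_2/M_1$ subcells and $f$ is convex, hence continuous, both $\frac{1}{M_1}\sum_{e} f^*_e$ and $\frac{1}{M_2}\sum_{e'} f^*_{e'}$ are Riemann-type approximations of the common limiting quantity $\frac{1}{\mathrm{vol}(S)}\int_S f^*(\bm{s})\,d\bm{s}$, where $f^*(\bm{s})$ is the optimal objective at measure point $\bm{s}$; they therefore agree up to an error controlled by the cell diameter. Matching the two averages and using $(1-\alpha_1)^{N/M_1} = (1-\alpha_2)^{N/M_2}$ recovers $\alpha_2 = 1 - (1-\alpha_1)^{M_2/M_1}$ independently of $N$, exactly as in the constant case.

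I expect the main obstacle to be the interaction between the per-cell target modeling and the refinement argument: unlike the constant objective, the targets $f^*_e$ genuinely differ across resolutions, so the two averages coincide only in the limit of fine discretization rather than exactly. Making this rigorous requires (i) a quantitative rate at which the accepted-value stream converges to $f^*_e$ --- which rests on CMA-ES's convergence guarantees and is itself the unresolved content of Conjecture~\ref{conj:density_descent_convex} --- and (ii) uniform control, across all cells simultaneously, of the coupling between the annealing timescale $1/\alpha$ and the shrinking cell diameter. Because the exact cancellation that drives the constant-objective derivation is available only when every cell shares a common target, I anticipate that the strongest honest conclusion is \emph{asymptotic} invariance, with error vanishing as $M_1, M_2 \to \infty$ at fixed ratio $r$, which is presumably why the authors record the statement as a conjecture rather than a theorem.
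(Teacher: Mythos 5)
This statement is recorded in the paper as a conjecture, and the paper offers no proof of it --- only an informal intuition: it appeals to the elite hypervolume hypothesis, argues that $f-f_A$ forms an expanding plateau around the global optimum, and asserts that the invariance of Eq.~\ref{eq:alr_invariance} makes the plateau expand at the same rate in both archives. Your plan takes a genuinely different and more concrete route: you replace the single target $C$ by per-cell targets $f^*_e$, re-derive the closed form of Theorem~\ref{thm:closed_form_te} cell by cell, and then argue that the coarse and fine archive averages agree via a Riemann-refinement limit. This buys something the paper's sketch does not --- an explicit mechanism by which the conversion formula could survive a non-constant objective, and an honest identification of where exactness is lost (the targets $f^*_e$ genuinely differ across resolutions, so only asymptotic invariance as $M_1, M_2 \to \infty$ at fixed ratio is plausible). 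Your conclusion that the statement can only be a conjecture is consistent with the paper.

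That said, there are concrete weak points in your sketch beyond the one you flag. First, the derivation of Eq.~\ref{eq:alr_invariance} assumes solutions are evenly dispersed across cells ($N/M$ additions per cell); under density descent this is plausible but is itself part of what must be established, and it interacts with your per-cell recurrence since different cells may receive different numbers of additions before their plateaus merge. Second, continuity of the map $\bm{s} \mapsto f^*(\bm{s}) = \max\{f(\bm{\theta}) : \bm{m}(\bm{\theta}) = \bm{s}\}$ does not follow from convexity of $f$ alone --- it requires regularity of the measure map $\bm{m}$ (e.g., that preimages vary continuously), without which the Riemann-refinement step can fail on a set of cells of non-vanishing mass. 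Third, the perturbation term $\sum_{i=1}^{k}(1-\alpha)^{k-i}\epsilon_i$ vanishes only as $k \to \infty$ for each fixed cell (a Toeplitz-type averaging argument, not absolute summability), and for the cross-resolution comparison you need this uniformly over all cells while $k = N/M$ itself changes with the resolution; controlling that coupling is, as you say, the unresolved content of Conjecture~\ref{conj:density_descent_convex}. None of these defeat the plan as a heuristic, but they mark the places where a rigorous proof would have to do real work that neither you nor the paper currently supplies.
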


Our intuition is similar to the intuition behind Conjecture~\ref{conj:density_descent_convex}, where we assume the elite hypervolume hypothesis holds~\citep{vassiliades:gecco18}. At the beginning of the CMA-MAE search, $f_A$ is a constant function and \mbox{CMA-MAE} optimizes for the global optimum, following the convergence properties of \mbox{CMA-ES}~\citep{hansen1997convergence, hansen2003reducing}. Eventually, the cells around the global optimum become saturated and the function $f-f_A$ forms a plateau around the global optimum. The invariance described in Eq.~\ref{eq:alr_invariance} implies that the $f_{A_1}$ and $f_{A_2}$ will increase at the same rate within the flat region of the plateau. Let $\bm{\theta_p}$ be an arbitrary solution in the plateau and $\bm{\theta'}$ be a solution on the frontier of the known hypervolume. The plateau of each archive $A_i$ expands when the solutions on the frontier of the elite hypervolume achieve a larger $f(\bm{\theta'})-f_{A_i}(\bm{\theta'})$ than the plateau $f(\bm{\theta_p})-f_{A_i}(\bm{\theta_p})$. We conjecture that the plateau will expand at the same rate in the two archives as $f_{A_1}$ and $f_{A_2}$ increase at the same rate for the plateau region, due to our invariance in Eq.~\ref{eq:alr_invariance}. 

We speculate that our conjecture explains why we observe invariant behavior across archive resolutions in the experiments of Section~\ref{sec:robustness}, even though $f$ is not a constant function in the linear projection and arm repertoire domains.

\section{A Batch Threshold Update Rule for MAP-Annealing} \label{sec:batch_update}

In this paper, we presented an annealing method for updating a QD archive in the CMA-MAE algorithm, following the standard QD formulation where we add a single solution to the archive at a time. However, the recently developed QDax library~\citep{lim2022accelerated} assumes that the updates to the archive happen in batch. In this section, we show that the archive update within a batch is dependent on the order that the solutions are processed. We then propose a candidate threshold update rule that is invariant to the order the solutions are processed within a batch update.

First, we show that the order solutions are added to the archive affects the current threshold update. Consider two solutions $\bm{\theta_1}$ and $\bm{\theta_2}$ that we add to the archive in a single batch. If $\bm{\theta_1}$ is added before $\bm{\theta_2}$, then the threshold update becomes $t_e' = (1-\alpha) [(1-\alpha) t_e + \alpha f(\bm{\theta_1})] + \alpha f(\bm{\theta_2}) = (1-\alpha)^2 t_e + (1-\alpha) \alpha f(\bm{\theta_1}) + \alpha f(\bm{\theta_2})$. If $\bm{\theta_2}$ is added before $\bm{\theta_1}$, then the threshold update becomes $t_e'' = (1-\alpha) [(1-\alpha) t_e + \alpha f(\bm{\theta_2})] + \alpha f(\bm{\theta_1}) = (1-\alpha)^2 t_e + (1-\alpha) \alpha f(\bm{\theta_2}) + \alpha f(\bm{\theta_1})$.

To compare $t_e'$ to $t_e''$, we compute $t_e' - t_e''$:

\begin{align}
t_e' - t_e'' &= (1-\alpha)^2 t_e + (1-\alpha) \alpha f(\bm{\theta_1}) + \alpha f(\bm{\theta_2}) \nonumber \\ 
&= [(1-\alpha)^2 t_e + (1-\alpha) \alpha f(\bm{\theta_2}) + \alpha f(\bm{\theta_1})] \nonumber \\
&= (1-\alpha) \alpha [f(\bm{\theta_1}) - f(\bm{\theta_2})] + \alpha [f(\bm{\theta_2}) - f(\bm{\theta_1})] \nonumber \\
&= [(1-\alpha) \alpha - \alpha] [f(\bm{\theta_1}) - f(\bm{\theta_2})] \nonumber \\
&= - \alpha ^ 2 [f(\bm{\theta_1}) - f(\bm{\theta_2})] \nonumber
\end{align}

From the above derivation, we see that the difference between thresholds is dependent on the solution's objective values when added to the archive in different order. This means when adding solutions to the archive in a batch, the update is dependent on the solution order in the batch.

Our goal is to make the threshold update invariant to the order the solutions are added to the archive. First, consider a subset of the batch that contains $c$ solutions all landing in the same cell of the archive and exceeding the current threshold $t_e$. Adding the solutions in batch order results in the following threshold update:

\begin{align}
t_e' &= (1-\alpha)^c t_e + \sum_{j=1}^{c}{(1-\alpha)^{c-j} \alpha f(\bm{\theta_j})}   \nonumber
\end{align}

Let $X$ be a random variable corresponding to the threshold for a given permutation of the batch. To become invariant to the batch addition, we will change the threshold update to be $\mathbb{E}[X]$, the expected value of $t_e'$ across all random permutations of the batch.

Let $X_i$ be a random variable corresponding to the contribution of only $f(\bm{\theta_i})$ to the threshold update and $Y$ be a constant random variable corresponding to the contribution of the previous threshold. As expectation is linear, we have $\mathbb{E}[X] = \mathbb{E}[Y] + \sum_{j=1}^{c}{\mathbb{E}[X_j]}$. Next, we compute the value of $\mathbb{E}[X_i]$ for an arbitrary solution $\bm{\theta_i}$ in the batch:

\begin{align}
\mathbb{E}[X_i] &= \sum_{j=1}^{c}{\Pr(\bm{\theta_i} \textrm{ is at position } j \textrm{ in the batch}) (1-\alpha)^{c-j} \alpha f(\bm{\theta_i})} \nonumber \\
&= \sum_{j=1}^{c}{\frac{(c-1)!}{c!} (1-\alpha)^{c-j} \alpha f(\bm{\theta_i})} \nonumber \\
&= \sum_{j=1}^{c}{\frac{1}{c} (1-\alpha)^{c-j} \alpha f(\bm{\theta_i})} \nonumber
\end{align}

Next, we rework $\mathbb{E}[X]$ into a simpler formula, where $f^* = \frac{\sum_{k=1}^{c}{f(\bm{\theta_k})}}{c}$:

\begin{align}
\mathbb{E}[X] &= \mathbb{E}[Y] + \sum_{j=1}^{c}{\mathbb{E}[X_j]} \nonumber \\
&= (1-\alpha)^c t_e + \sum_{k=1}^{c}{\sum_{j=1}^{c}{\frac{1}{c} (1-\alpha)^{c-j} \alpha f(\bm{\theta_k})}} \nonumber \\
&= (1-\alpha)^c t_e + \sum_{j=1}^{c}{ (1-\alpha)^{c-j} \alpha \frac{1}{c} \sum_{k=1}^{c}{f(\bm{\theta_k})}} \nonumber \\
&= (1-\alpha)^c t_e + \sum_{j=1}^{c}{ (1-\alpha)^{c-j} \alpha f^*} \nonumber \\
&= (1-\alpha)^c t_e + \alpha f^* \sum_{j=0}^{c-1}{ (1-\alpha)^{j}} \nonumber \\
&= (1-\alpha)^c t_e + \alpha f^* \frac{1-(1-\alpha)^c}{\alpha} \nonumber \\
&= (1-\alpha)^c t_e + f^* (1-(1-\alpha)^c) \nonumber
\end{align}

We propose the above expectation as the batch threshold update rule, where $f^*$ is the average objective value for all solutions in the batch that exceed the threshold $t_e$ for a given cell. We observe that the rule is independent of the solution order. Furthermore, if $\alpha=0$, the update becomes $t_e$, and Theorem~\ref{thm:CMA-ES} still holds. If $\alpha=1$, the update becomes $f^*$, which is the average of solutions that increase the threshold. We view this update as a smooth parallel addition compared to CMA-ME, which would add the best solution from the batch for any solution order. On flat objectives, we note that all permutations of a batch result in the same threshold after processing the batch, and a sequential threshold update is equivalent to the expected threshold update of our batch update. Therefore, Theorem~\ref{thm:density_descent} holds for our batch threshold update rule due to the batch update being the same as the sequential update in this case. We leave exploring alternative batch update rules for future work.

\section{On Improving the Quality of Latent Space Illumination}
\label{sec:improving-lsi}

We describe the limitations arising from the exact problem setup for LSI (StyleGAN) domain, adopted from previous work~\citep{fontaine2021differentiable}, on producing high-quality images. We then discuss ideas from the generative art community for improving the setup and an additional experiment that incorporates these ideas to generate high-quality and diverse images.

\subsection{Main LSI Experiments}

In the main latent space illumination (LSI) experiments in section~\ref{sec:experiments}, we showed that CMA-MAE outperformed the other QD algorithms according to standard QD metrics following the exact setup of prior work~\citep{fontaine2021differentiable}.

In the these experiments, we used latent space illumination as purely an optimization benchmark. However, obtaining high performance on LSI as a benchmark can be a competing objective with producing high quality images.

First, finding solutions that result in a high objective value does not always result in high quality images that match the text prompt. For example, a QD algorithm can find images that result in CLIP reporting a high similarity score by leaving the training distribution of StyleGAN.

Furthermore, we use the CLIP loss as a measure function, thus a QD algorithm attempts to both decrease and increase the loss function to cover the measure space. Increasing the loss function results in minimizing similarity with the text prompt, which can be attained by unrealistic images.  

In the main LSI experiments all derivative-free QD algorithms would drift out of the latent distribution and produce archives of low image quality. We found that CMA-MAE would stay in the latent distribution longer before drifting out of distribution during exploration, due to the low archive learning rate $\alpha$ prioritizing the objective.

To address drifting out of distribution, we adopt a ``timeout'' restart rule proposed by other evolution strategy-based quality diversity algorithms~\citep{colas2020scaling, paolo2021sparse}. A timeout restart rule runs for a fixed number of iterations before restarting. We add to the basic restart rule of CMA-MAE (Appendix~\ref{sec:hyperparams}) an additional criterion for restarting based on the timeout restart rule. To generate all the LSI collages in section~\ref{sec:additional-results}, we use a timeout of 50 iterations for both CMA-MAE and CMA-MAEGA.

While we retained the same setup as in previous work for comparison purposes, we can change the setup by adopting ideas from the generative art community to produce very high quality images. We describe these ideas in section~\ref{subsec:innovations}.

\subsection{Innovations from the Generative AI Art Community} \label{subsec:innovations}

Beyond the specifics of the QD optimization algorithm, many aspects of latent space illumination can be improved. For example, prior work~\cite{frans2021clipdraw} on guiding single-objective optimization with CLIP notes that the gradients that CLIP provides can be noisy and recommends data augmentations of the generated images, such as tiling or translating each image, before being passed to CLIP. This change can help smooth the gradients for gradient descent optimizers like Adam and can make the generated images retained by the archive match their text prompts more accurately.

Prior work on optimizing the latent space of VQ-GAN~\citep{crowson2022vqgan} also notes that CLIP will not always provide smooth optimization gradients, nor accurate objective values. The authors recommend a different data augmentation, by creating a batch of random cutouts of the generated image and passing those images to CLIP, which produces smoother gradients and objective values. The paper also recommends regularizing the latent codes so that they become attracted to the Gaussian ball that captures the training disribution of the GAN. Both these techniques could improve the qualitative performance of latent space illumination.

Finally, we used the first version of StyleGAN~\citep{karras2019style} that was used in prior work~\citep{fontaine2021differentiable}. Recent versions of StyleGAN~\citep{karras:stylegan2,karras:stylegan2ada,karras:stylegan3} can further improve the quality of the generated images.

We describe details of the improved setup in section~\ref{subsec:improving_quality}.
 
\subsection{Improving Quality of Generated Images} \label{subsec:improving_quality}

To improve image quality, we include an additional experiment where we run each QD algorithm with a configuration inspired by the above findings from the generative art community.

First, we replace StyleGAN~\citep{karras2019style} with StyleGAN2~\citep{karras:stylegan2}, which produces better images and has a well-conditioned latent space for optimization. Next, we change the latent space being optimized by QD. First note, that the StyleGAN architecture has multiple latent spaces to be optimized. StyleGAN consists of both a $z$-space latent space of size 512 and a mapping network that maps to $18$ latent codes of size $512$ at different levels of detail in the final image. This $18 \times 512$ tensor is known as $w$-space. The original LSI experiments  \cite{fontaine2021differentiable} were based of a blogpost~\citep{styleclip} that respresented the search space for LSI as a single 512 dimensional vector whose weights were shared for each level of detail in $w$-space. In this experiment, we will optimize the full $n = 18 \times 512 = 9216$ $w$-space with each QD algorithm for fine grain control of the generated images.

Instead of using restarts in the StyleGAN experiments to keep the search within latent space, we adopt the $w$-space regularization  \cite{crowson2022vqgan}. We compute an average $w$-space position by sampling $10^4$ points sampled from $\mathcal{N}(0, I)$ in $z$-space, then passing these points through the mapping network to find their position in $w$-space. We compute the standard error across each dimension. To regularize the latent space, we compute the distance from this $w$-space Gaussian distribution. If the distance from mean exceeds the Gaussian ball of highest density, we apply an L2 penalty to the objective $f$ to move the search back into the training distribution.

The LSI experiments from prior work~\citep{fontaine2021differentiable} downsample from the \mbox{$1024 \times 1024$} images produced by StyleGAN to the $224 \times 224$ images required for input to the CLIP model. Following prior work~\citep{crowson2022vqgan}, we adopt the cutout technique that clips 32 images from the StyleGAN output of varying sizes, downsamples each cutout to $224 \times 224$, and passes each of these $224\times 224$ cutouts to CLIP for evaluation. The loss becomes the average of the CLIP loss for all cutouts. This technique has been shown to smooth gradients for CLIP in single-objective latent space optimization.

Instead of starting the search at the latent code $\bm{0}$, we sample 512 latent codes from $\mathcal{N}(0, I)$ in $z$-space then select the image resulting in the highest objective value as the starting $w$-space latent code.

Finally, the prior LSI experiments leveraged the text prompt ``A small child.'' as a proxy for age. However, this text prompt only specifies one end of the age measure. To correct this issue, we can pair a positive text prompt ``A small child.'' with a negative text prompt ``An elderly person.'' as a proxy for age. We compute the measure output by subtracting one CLIP loss from the other.

We run the improved LSI experiment with the text descriptor ``A photo of the face of Tom Cruise.'' as an objective, the text pair ``A photo of Tom Cruise as a small child.'' and ``A photo of Tom Cruise as an elderly person.'' as a proxy measure for age, and ``A photo of Tom Cruise with short hair.'' and ``A photo of Tom Cruise with long hair.'' as a proxy measure for hair length.

Fig.~\ref{fig:collage_cruise} shows photos of Tom Cruise at varying hair lengths and ages, generated by the CMA-MAEGA algorithm in a single run.

\section{On the effect of threshold initialization}  \label{subsec:min_thresh_init}

In this paper we introduce two hyperparameters for our proposed \mbox{CMA-MAE} algorithm: the archive learning rate $\alpha$ and a threshold initialization $min_f$. In this section we discuss the effect of different $min_f$ initializations on the performance and behavior of \mbox{CMA-MAE}. Finally, we run an ablation on $min_f$, similar to the ablation on archive learning rate $\alpha$ in Section~\ref{sec:robustness}.

First, consider the effect of $min_f$ on the extreme cases of the CMA-MAE. When $\alpha = 0$, then according to Theorem~\ref{thm_apx:alpha0}, CMA-MAE behaves identically to CMA-ES, and $min_f$ has no effect on the behavior of CMA-MAE. Conversely when $\alpha = 1$, then according to Theorem~\ref{thm_apx:alpha1}, CMA-MAE behaves identically to CMA-ME when $min_f$ approaches an arbitrarily large negative number. As $min_f$ increases for $\alpha=1$, CMA-MAE will rank some solutions that discover existing cells higher than solutions that discover new, empty cells, thus it will behave differently than CMA-ME. 

Next, we discuss the behavior for $0<\alpha<1$.
Recall the elite hypervolume hypothesis~\citep{fefferman2016testing}, which states that optimal solutions for the MAP-Elites archive form a connected region in search space, the elite hypervolume. According to the proof sketch of Conjecture~\ref{conj:density_descent_convex}, early in the search CMA-MAE behaves identically to CMA-ES to find a solution point on the elite hypervolume. As the thresholds of cells around this solution point become saturated, the objective $f-f_A$ forms a plateau around the local optimum. Within the plateau, CMA-MAE triggers the density descent property of Theorem~\ref{thm_apx:density_descent} and evenly explores the known elite hypervolume until the plateau in $f-f_A$ dips below the frontier of the known hypervolume. This causes the known hypervolume to expand until all cells of the archive are filled.

Next, we discuss how the selection of $min_f$ affects the rate of expansion of the elite hypervolume. First, we consider two solution points: $\bm{\theta_1}$ represents a local optimum in the elite hypervolume and $\bm{\theta_2}$ represents a nearby point mapped to a different archive cell with a smaller objective value, or formally $f(\bm{\theta_1})>f(\bm{\theta_2})$ and $\lVert \bm{\theta_1} - \bm{\theta_2} \rVert_2 \leq \epsilon$.  We let $f_A(\bm{\theta_1})=t_{e_1}$ and $f_A(\bm{\theta_2})=t_{e_2}$, where $t_{e_1}$ and $t_{e_2}$ represent the thresholds of the cells that $\bm{\theta_1}$ and $\bm{\theta_2}$ map to, respectively. Both thresholds $t_{e_1}$ and $t_{e_2}$ are initialized with $min_f$.

We first examine the case where $min_f > f(\bm{\theta_1}) > f(\bm{\theta_2})$. Here, the thresholds $t_{e_1}$ and $t_{e_2}$ will not change and none of the two solutions points will be added to the archive. CMA-MAE will then only optimize for the objective value and behave identically to CMA-ES.

The second case is $f(\bm{\theta_1}) > min_f > f(\bm{\theta_2})$. Here, $\bm{\theta_2}$ will not get added to the archive and $t_{e_2}=min_f$ will not change, while $t_{e_1}$ will increase based on the update rule $t_{e_1} \gets (1-\alpha)t_{e_1} + f(\bm{\theta_1})$. Recall that CMA-MAE ranks solutions based on improvement $\Delta_i =f(\bm{\theta_i}) - t_{e_i}$. We observe that $\Delta_2 =  f(\bm{\theta_2}) - min_f < 0$, while $\Delta_1 =  f(\bm{\theta_1}) - t_{e_1} > 0$, thus there is no incentive for CMA-MAE to optimize for $\bm{\theta_2}$ and it will instead optimize for the solution point $\bm{\theta_1}$ that has the highest objective value. We observe that $min_f$ then acts as a constraint that prevent exploration of measure space regions with objective values below $min_f$.

Finally, we let $f(\bm{\theta_1}) > f(\bm{\theta_2}) > min_f$. Initially, $\bm{\theta_1}$ will be ranked higher than $\bm{\theta_2}$, since $\Delta_1 = f(\bm{\theta_1}) - t_{e_1} = f(\bm{\theta_1}) - min_f$ and $\Delta_2 = f(\bm{\theta_2}) - t_{e_2} = f(\bm{\theta_2}) - min_f$, thus $\Delta_1 > \Delta_2$. CMA-MAE will thus optimize for $\bm{\theta_1}$, but $\Delta_1$ will decrease because of the update rule $t_{e_1} \gets (1-\alpha)t_{e_1} + f(\bm{\theta_1})$. 

Next, we compute how many steps it will take for $\Delta_1 = \Delta_2$. When $\Delta_1 = \Delta_2$, a plateau forms for $f-f_A$ and CMA-MAE transitions from optimizing like CMA-ES to expanding the frontier of the known hypervolume via density descent. We leverage Theorem~\ref{thm:closed_form_te} that yields a closed form for updating a cell $k$ times for a fixed objective value $C$: $t_e(k) = C - (C - min_f)(1 - \alpha)^k$. 

Let $k_1$ and $k_2$ be the number of times the cells containing $\bm{\theta_1}$ and $\bm{\theta_1}$ are sampled, respectively. We note that  CMA-MAE behaves like CMA-ES until we reach the density descent property, therefore the cell containing $\bm{\theta_1}$ will be sampled more times than the cell containing $\bm{\theta_2}$ and $k_1 > k_2$, where the gap $k_1 - k_2$ grows as more optimization steps are taken.

\begin{align}
\Delta_1 &= \Delta_2 \nonumber \\
f(\bm{\theta_1}) - t_{e_1}  &=  f(\bm{\theta_2}) - t_{e_2}  \nonumber \\
f(\bm{\theta_1}) -[f(\bm{\theta_1}) -(f(\bm{\theta_1}) - min_f)(1 - \alpha)^{k_1}]
&= f(\bm{\theta_2}) - [f(\bm{\theta_2}) -(f(\bm{\theta_2}) - min_f)(1 - \alpha)^{k_2}]\nonumber \\
(f(\bm{\theta_1}) - min_f)(1 - \alpha)^{k_1} &= (f(\bm{\theta_2}) - min_f)(1 - \alpha)^{k_2}\nonumber \\
\frac{(1 - \alpha)^{k_1}}{(1 - \alpha)^{k_2}} &= \frac{f(\bm{\theta_2}) - min_f}{f(\bm{\theta_1}) - min_f} \nonumber \\
(1 - \alpha)^{k_1 - k_2} &= \frac{f(\bm{\theta_2}) - min_f}{f(\bm{\theta_1}) - min_f} \nonumber \\
k_1 - k_2 &= \frac{\log \frac{f(\bm{\theta_2}) - min_f}{f(\bm{\theta_1}) - min_f}}{\log\ (1-\alpha)} \nonumber \\
k_1 - k_2 &= \frac{\log \frac{f(\bm{\theta_1}) - min_f}{f(\bm{\theta_2}) - min_f}}{-\log\ (1-\alpha)} 
\end{align}

We note that $-\log\ (1-\alpha)$ is a positive value as $1-\alpha < 1$. We see that the number of times that $\bm{\theta_1}$ needs to be sampled more than $\bm{\theta_2}$ depends on the log ratio of the gaps between the objective values and $min_f$ and on the learning rate $\alpha$. 

As $min_f$ decreases,  number of optimization steps required to reach the plateau property approaches 0 asymptotically. While this shows that $min_f$ does have an effect on the behavior of the algorithm, since $min_f$ appears on the log ratio, we expect the effect of changing $min_f$ to be small.

We ran an ablations study by varying $min_f$ on the linear projection and arm repertoire domains. We explore different values of $min_f \in \{-80, -40, 0, 40, 80\}$. We ran each experimental setup for 20 trials each and report the results in Table~\ref{tab:min_f_results}. 

We note that each domain remaps the objective values to the range $[0,100]$. For $min_f$ smaller than the range, we observe that changing $min_f$ has a negligible effect on performance. On the other hand, positive values for $min_f$ constrain the search to solutions with $f \geq min_f$ (see Fig.~\ref{fig:heatmaps_min_f}), thus coverage decreases. These results match our theoretical analysis. 

We note that in the LP (plateau) all optimal solutions for each cell are 100 and Arm Repertoire domain all optimal solutions for each cell are close to 100. Since all $min_f$ values in our range are below 100, we do not observe any effects on performance, even for positive values of $min_f$.

\begin{table*}[t]
\centering
\resizebox{1.0\linewidth}{!}{
\begin{tabular}{r|rr|rr|rr|rr}
\hline
             & \multicolumn{2}{l|}{LP (sphere)}     & \multicolumn{2}{l|}{LP (Rastrigin)}   &
             \multicolumn{2}{l|}{LP (plateau)}        & \multicolumn{2}{l}{Arm Repertoire} \\ \toprule
$\min_f$ (CMA-MAE)  & QD-score & Coverage & QD-score & Coverage  & QD-score & Coverage   & QD-score & Coverage \\
    \midrule
-80 & 64.74 & 83.73\%  & 52.23 & 81.65\% &  77.20 & 77.24\% & 78.97 & 79.25\% \\
-40  & 64.94 & 83.83\% & 52.53 & 81.40\% & 78.25 &  78.28\% & 79.02 & 79.26\% \\
0  & 64.99 & 83.52\% & 52.69 & 80.56\% & 79.29 & 79.31\% & 79.06 & 79.27\% \\
40  & 63.82 & 80.08\% & 48.61 & 68.45\% & 80.18  & 80.19\% & 79.06 & 79.23\% \\
80 & 39.41 & 43.92\% &  10.03 & 11.04\% &  81.42 & 81.42\% & 78.99 & 79.11\%\\
  \bottomrule
\end{tabular}
 }
\caption{Mean QD metrics after 10,000 iterations for CMA-MAE with varying $\min_f$ initialization.}
\label{tab:min_f_results}
\end{table*}

\begin{figure*}[t!]
\includegraphics[width=1.0\linewidth]{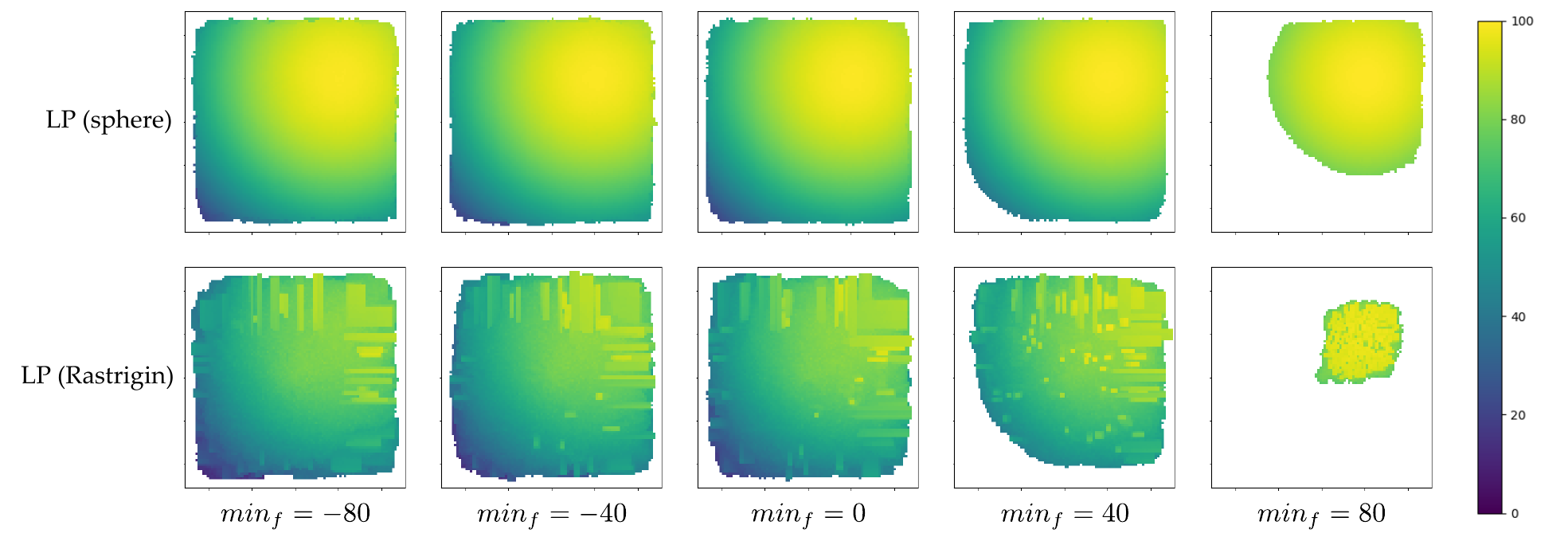}
\caption{Example heatmaps from the $min_f$ ablation. When $min_f$ exceeds the objective value for solutions in the elite hypervolume, $min_f$ acts as a constraint on exploration and CMA-MAE focuses on regions of the elite hypervolume that exceed $min_f$ in objective value.}
\label{fig:heatmaps_min_f}
\end{figure*}

\section{On the effects of high search space dimensionality and heavy distortion}

In this section, we investigate the performance of CMA-MAE on high-dimensional versions of the linear projection and arm repertoire domains and discuss the performance of CMA-MAE when the projection from search space to measure space is heavily distorted. 
In the main experiments of the paper, we ran the linear projection and arm repertoire domains following the configuration from the CMA-ME paper~\citep{fontaine2020covariance}. In the CMA-ME paper each domain was run on $n=20$ and $n=100$ search spaces, and we adopted the more challenging $n=100$ search spaces. However, later work~\citep{fontaine2021differentiable} replicated these domains with $n=1000$ search spaces to show the benefits of DQD algorithms on high-dimensional problems. The authors noted the poor performance of CMA-ME on the 1000-dimension version of the linear projection domain. In Appendix G of the DQD paper~\citep{fontaine2021differentiable}, the authors explored running the linear projection experiments with a $500 \times 500$ archive resolution. For a higher resolution archive, CMA-ME outperformed MAP-Elites and MAP-Elites (line).

\noindent\textbf{Invariance to Archive Resolution.}  This paper proposes CMA-MAE, the first QD algorithm invariant to archive resolution. As a result, we wish to understand how CMA-MAE performs \textit{at different archive resolutions} on high-dimensional search spaces, such as the 1000-dimensional linear projection domain. As an initial experiment, we run each derivative-free QD algorithm on the linear projection and arm repertoire domain varying from $50 \times 50$ resolution to $500 \times 500$ resolution. We linearly vary the cells along each dimension with 10 different archive resolutions. Like the experiments for $n=100$, we set the archive learning rate $\alpha = 0.01$ and apply the conversion formula from Appendix~\ref{sec:conversion} to find equivalent $\alpha$ for different archive resolutions. We run one trial for each archive resolution and algorithm combination.

Figure~\ref{fig:res_exp_1000} visualizes the results of the resolution experiment. On the linear projection domains, we observe a performance cliff for CMA-MAE for archive resolutions under $200 \times 200$. For resolutions $200 \times 200$ and above, we observe the archive resolution invariant behavior of CMA-MAE. MAP-Elites (line) performs best on $100 \times 100$ archives with a performance drop for smaller or larger archives. CMA-ME and MAP-Elites both struggle on these domains at any archive resolution, but improve performance as archive resolution increases. For the arm repertoire domain, we observe invariant behavior for CMA-MAE. Both MAP-Elites variants decline in performance as archive resolution increases. We observe erratic performance in CMA-ME across different archive resolutions.

To explain the performance cliff for CMA-MAE with low resolution archives on the linear projection domain, we must first consider the search space distortion caused by the Bates distribution (see Figure~\ref{fig:lp_examples}) of that domain, where the measure functions are equally dependent on each parameter of the search vector $\bm{\theta} \in \mathbb{R}^n$. The Bates distribution shows that as $n$ grows, the search space becomes compressed to an increasingly narrow region of measure space, causing heavy distortion. For $n=1000$, large perturbations to search parameters results in small perturbations to measure space values. For low resolution archives, this causes a sampled population of solutions from a search space Gaussian to be projected to the same cell of the archive, even for a Gaussian of large variance. This violates a fundamental assumption in CMA-MAE and CMA-ME: sampled solutions fall into different cells in the archive. By occupying the same cell, CMA-MAE receives no information about how the search mean $\bm{\theta}$ is moving through measure space. Without this information, CMA-MAE will not fill the archive evenly, an assumption made when deriving the learning rate conversion formula in Appendix~\ref{sec:conversion}. A $200 \times 200$ archive is high enough resolution to ensure that solutions sampled from the initial Gaussian fall into different archive cells, thus this is the resolution at the edge of CMA-MAE's performance cliff.

For the arm repertoire domain, we do not observe a performance cliff for CMA-MAE. We attribute this behavior to the arm repertoire domain having weaker distortion than the linear projection domain. In the arm repertoire domain, the $(x,y)$ position of the robot gripper is not equally dependent on all joint angles of the robotic arm. Joint angles closer to the base of the arm have a large affect on the gripper than joint angles near the gripper. This results in weaker distortion in the arm repertoire domain than the linear projection domain, where sampled solutions occupy different archive cells even for low resolution archives.

\noindent\textbf{Performance.} For the second experiment, we evaluate each algorithm on the linear projection and arm repertoire domains for $300 \times 300$ resolution archives. We selected this resolution so it is beyond the performance cliff of CMA-MAE in Fig.~\ref{fig:res_exp_1000}. Table~\ref{tab:results_n1000} presents the results of this experiment. For each domain, CMA-MAE outperforms all other derivative-free QD algorithms in both coverage and QD-score.

The results of both experiments suggest that the CMA-MAE algorithm scales to high-dimensional search spaces better than CMA-ME. However, special care must be taken in domains with heavy distortion, such as the linear projection domain, where the archive resolution must be granular enough to overcome the distortion. The results highlight an important caveat to CMA-MAE's archive invariance: sampled solutions from the multivariate Gaussian must fall into \textit{different} archive cells to achieve archive resolution invariance.

\begin{figure*}[t]
\includegraphics[width=0.7\linewidth]{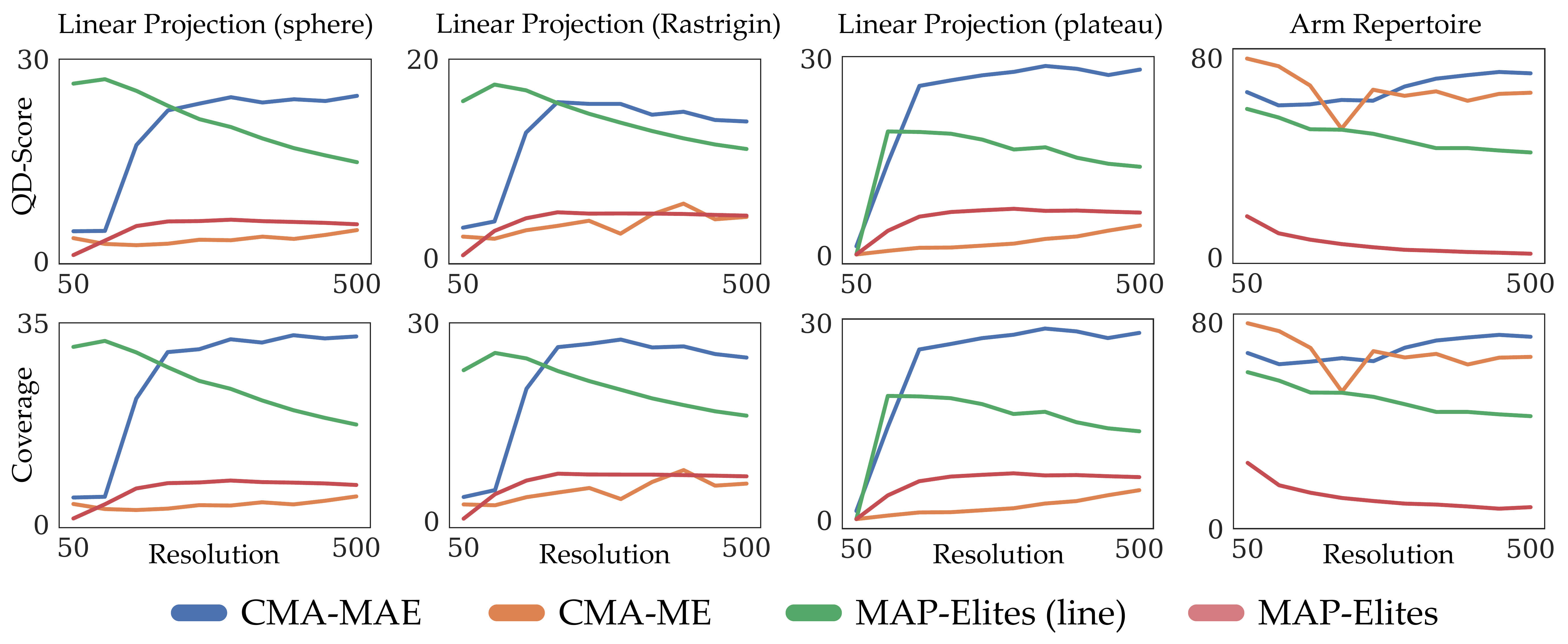}
\caption{Final QD-score and coverage of each algorithm for 10 different archive resolutions on $n=1000$ experiments.}
\label{fig:res_exp_1000}
\end{figure*}

\begin{table*}[t]
\centering
\resizebox{0.5\linewidth}{!}{
\begin{tabular}{l|rrr}
             & \multicolumn{3}{c}{Linear Projection (sphere)}   \ \\ 
    \toprule
Algorithm     & QD-score & Coverage  & Best \\
    \midrule
MAP-Elites  & 6.11 $\pm$ 0.02 & 7.53  $\pm$ 0.03\% &  90.00 $\pm$ 0.02 \\
MAP-Elites (line)  & 19.57 $\pm$ 0.04 & 23.13 $\pm$ 0.04\% &  96.61 $\pm$ 0.02\\
CMA-ME    &  3.26 $\pm$ 0.06 & 3.40 $\pm$ 0.06\%  & 100.0 $\pm$ 0.00\\
CMA-MAE    &  24.43 $\pm$ 0.09  & 32.34 $\pm$ 0.11\% & 95.11 $\pm$ 0.17\\
 \end{tabular}
}
\resizebox{0.5\linewidth}{!}{
\begin{tabular}{l|rrr}
\hline
             & \multicolumn{3}{c}{Linear Projection (Rastrigin)}   \ \\ 
    \toprule
Algorithm     & QD-score & Coverage  & Best \\
    \midrule
MAP-Elites  & 4.55 $\pm$ 0.01 & 7.18 $\pm$ 0.02\% &  74.62 $\pm$ 0.05 \\
MAP-Elites (line)  & 13.69 $\pm$ 0.03 &  19.98 $\pm$ 0.05\% & 79.27 $\pm$ 0.03 \\
CMA-ME    & 3.81 $\pm$ 0.12 & 5.17 $\pm$ 0.17\%  & 90.79 $\pm$ 0.10\\
CMA-MAE    & 15.54 $\pm$  0.15 &  27.46 $\pm$ 0.26\% & 89.93 $\pm$ 0.25\\
  \bottomrule
\end{tabular}
}

\resizebox{0.5\linewidth}{!}{
\begin{tabular}{l|rrr}
             & \multicolumn{3}{c}{Linear Projection (plateau)}   \ \\ 
    \toprule
Algorithm     & QD-score & Coverage  & Best \\
    \midrule
MAP-Elites  & 6.96 $\pm$ 0.03 & 6.96 $\pm$ 0.03\% &   100.00 $\pm$ 0.00\\
MAP-Elites (line)  & 16.51 $\pm$ 0.09 & 16.51 $\pm$ 0.09\% & 100.00  $\pm$ 0.00 \\
CMA-ME    & 1.94 $\pm$  0.07 & 1.94 $\pm$ 0.07\% & 100.00 $\pm$ 0.00 \\
CMA-MAE    & 28.29 $\pm$  0.14 & 28.53 $\pm$ 0.15\% & 100.00 $\pm$ 0.00 \\
  \bottomrule
\end{tabular}
}

\resizebox{0.5\linewidth}{!}{
\begin{tabular}{l|rrr}
             & \multicolumn{3}{c}{Arm Repertoire}   \ \\ 
    \toprule
Algorithm     & QD-score & Coverage  & Best \\
    \midrule
MAP-Elites  & 3.12 $\pm$ 0.04 & 9.76 $\pm$ 0.09\% &  39.77 $\pm$ 0.16 \\
MAP-Elites (line)  & 47.03 $\pm$ 0.18 & 48.62 $\pm$ 0.18\%  & 97.34 $\pm$ 0.01 \\
CMA-ME    &  65.66 $\pm$ 0.56 & 67.27 $\pm$ 0.57\% & 99.13 $\pm$ 0.00\\
CMA-MAE    & 68.85 $\pm$ 0.28 & 70.52 $\pm$ 0.29\%  & 99.13 $\pm$ 0.00\\
  \bottomrule
\end{tabular}
}

\caption{Results for $n=1000$ experiments: The QD-score, coverage, and best solution after 10,000 iterations for each algorithm and domain with standard errors. Larger values are better across all metrics. }
\label{tab:results_n1000}
\end{table*}

\section{Additional Results}
\label{sec:additional-results} 

\begin{table*}[t]
\centering
\resizebox{0.5\linewidth}{!}{
\begin{tabular}{l|rrr}
             & \multicolumn{3}{c}{Linear Projection (sphere)}   \ \\ 
    \toprule
Algorithm     & QD-score & Coverage  & Best \\
    \midrule
MAP-Elites  & 41.64 $\pm$ 0.06& 50.80  $\pm$ 0.09\% &  98.63 $\pm$ 0.01 \\
MAP-Elites (line)  & 49.07 $\pm$ 0.03& 60.42 $\pm$ 0.05\% &  99.43 $\pm$ 0.01\\
CMA-ME    &  36.50 $\pm$ 0.31  & 42.82 $\pm$ 0.40\%  & 100.00 $\pm$ 0.00\\
CMA-ME$^*$  & 45.07 $\pm$ 0.07 & 55.11 $\pm$ 0.10\%  & 99.23  $\pm$ 0.02 \\
CMA-ME (imp, opt)  & 37.10 $\pm$ 0.37 & 43.62 $\pm$ 0.45\%  & 100.00  $\pm$ 0.00 \\
CMA-MAE    &  64.86 $\pm$ 0.04  & 83.31 $\pm$ 0.07\% & 99.59 $\pm$ 0.01\\
CMA-MEGA &  75.32 $\pm$ 0.00& 100.00 $\pm$ 0.00\% & 100.00 $\pm$ 0.00 \\
CMA-MAEGA & 75.39 $\pm$ 0.00 & 100.00 $\pm$ 0.00\% & 100.00 $\pm$ 0.00\\
 \end{tabular}
}
\resizebox{0.5\linewidth}{!}{
\begin{tabular}{l|rrr}
\hline
             & \multicolumn{3}{c}{Linear Projection (Rastrigin)}   \ \\ 
    \toprule
Algorithm     & QD-score & Coverage  & Best \\
    \midrule
MAP-Elites  & 31.43 $\pm$ 0.07 & 47.88 $\pm$ 0.12\% &  82.16 $\pm$ 0.11 \\
MAP-Elites (line)  & 38.29 $\pm$ 0.05&  56.51 $\pm$ 0.09\% & 85.71 $\pm$ 0.07 \\
CMA-ME    & 38.02 $\pm$ 0.11 & 53.09 $\pm$ 0.16\%  & 97.59 $\pm$ 0.06\\
CMA-ME$^*$  & 35.06 $\pm$ 0.05 & 53.01 $\pm$ 0.12\%  & 83.47 $\pm$ 0.08 \\
CMA-ME (imp, opt)  & 34.87 $\pm$ 0.24 & 48.93 $\pm$ 0.40\%  & 98.18  $\pm$ 0.03 \\
CMA-MAE    & 52.65 $\pm$  0.06 &  80.46 $\pm$ 0.11\% & 95.90 $\pm$ 0.25\\
CMA-MEGA & 63.07 $\pm$ 0.00 & 100.00 $\pm$ 0.00\% & 100.00 $\pm$ 0.00\\
CMA-MAEGA & 63.06 $\pm$ 0.00 & 100.00 $\pm$ 0.00\% & 100.00 $\pm$ 0.00\\
  \bottomrule
\end{tabular}
}

\resizebox{0.5\linewidth}{!}{
\begin{tabular}{l|rrr}
             & \multicolumn{3}{c}{Linear Projection (plateau)}   \ \\ 
    \toprule
Algorithm     & QD-score & Coverage  & Best \\
    \midrule
MAP-Elites  & 47.07 $\pm$ 0.17 & 47.07 $\pm$ 0.17\% &   100.00 $\pm$ 0.00\\
MAP-Elites (line)  & 52.20 $\pm$ 0.19 & 52.20 $\pm$ 0.19\% & 100.00  $\pm$ 0.00 \\
CMA-ME    & 34.54 $\pm$  0.35 & 34.54 $\pm$ 0.35\% & 100.00 $\pm$ 0.00 \\
CMA-ME$^*$ & 51.11 $\pm$ 0.25 & 51.11 $\pm$ 0.25\%  &  100.00 $\pm$ 0.00 \\
CMA-ME (imp, opt)  & 31.91 $\pm$ 0.43 & 31.91 $\pm$ 0.43\%  & 100.00  $\pm$ 0.00 \\
CMA-MAE    & 79.27 $\pm$  0.21 & 79.29 $\pm$ 0.21\% & 100.00 $\pm$ 0.00 \\
CMA-MEGA & 100.00 $\pm$ 0.00 & 100.00 $\pm$ 0.00\% & 100.00 $\pm$ 0.00 \\
CMA-MAEGA & 100.00 $\pm$ 0.00  & 100.00  $\pm$ 0.00\% & 100.00 $\pm$ 0.00 \\
  \bottomrule
\end{tabular}
}

\resizebox{0.5\linewidth}{!}{
\begin{tabular}{l|rrr}
             & \multicolumn{3}{c}{Arm Repertoire}   \ \\ 
    \toprule
Algorithm     & QD-score & Coverage  & Best \\
    \midrule
MAP-Elites  & 71.40 $\pm$ 0.03 & 74.09 $\pm$ 0.04\% &  97.38 $\pm$ 0.03 \\
MAP-Elites (line)  & 74.55 $\pm$ 0.02 & 75.61 $\pm$ 0.02\%  & 99.16 $\pm$ 0.01 \\
CMA-ME    &  75.82 $\pm$ 0.11 &  75.89 $\pm$ 0.11\% & 100.00 $\pm$ 0.00\\
CMA-ME$^*$    & 75.68 $\pm$ 0.04 & 76.13 $\pm$ 0.03\%  & 99.78 $\pm$ 0.01 \\
CMA-ME (imp, opt)  & 75.91 $\pm$ 0.07 & 75.99 $\pm$ 0.07\%  & 100.00  $\pm$ 0.00 \\
CMA-MAE    & 79.03 $\pm$ 0.02 & 79.24 $\pm$ 0.02\%  & 99.93 $\pm$ 0.00\\
CMA-MEGA &  75.21 $\pm$ 0.13 & 75.25 $\pm$ 0.13\% & 100.00 $\pm$ 0.00\\
CMA-MAEGA & 79.27 $\pm$ 0.02 & 79.35 $\pm$ 0.02\% & 100.00 $\pm$ 0.00\\
  \bottomrule
\end{tabular}
}
\resizebox{0.5\linewidth}{!}{
\begin{tabular}{l|rrr}
             & \multicolumn{3}{c}{Latent Space Illumination (StyleGAN)}   \ \\ 
    \toprule
Algorithm     & QD-score & Coverage  & Best \\
    \midrule
MAP-Elites  & 12.85 $\pm$ 0.10 &  19.42 $\pm$ 0.16\% &  71.42  $\pm$ 0.14 \\
MAP-Elites (line)  & 14.40 $\pm$ 0.09 & 21.11 $\pm$ 0.11\% & 73.04 $\pm$ 0.05\\
CMA-ME    & 14.00  $\pm$  0.62 & 19.57 $\pm$ 0.90\% & 74.11 $\pm$ 0.08\\
CMA-MAE    & 17.67 $\pm$ 0.27  & 25.08  $\pm$ 0.40\% & 73.48 $\pm$ 0.18\\
CMA-MEGA &  16.08 $\pm$ 0.37 & 22.58  $\pm$ 0.57\% & 74.95  $\pm$ 0.27 \\
CMA-MAEGA & 16.20 $\pm$ 0.41 &  23.83 $\pm$ 0.46\% & 75.52 $\pm$ 0.22\\
  \bottomrule
\end{tabular}
}

\resizebox{0.5\linewidth}{!}{
\begin{tabular}{l|rrr}
             & \multicolumn{3}{c}{Latent Space Illumination (StyleGAN2)}   \ \\ 
    \toprule
Algorithm     & QD-score & Coverage  & Best \\
    \midrule
MAP-Elites  & -276.18 $\pm$ 32.00 &  4.48 $\pm$ 0.18\% & -936.96 $\pm$ 35.91 \\
MAP-Elites (line)  &  -827.25 $\pm$ 25.99 & 8.81 $\pm$ 0.04\% & -236.65 $\pm$ 13.35 \\
CMA-MEGA  & 9.18 $\pm$ 0.18~~ & 14.91 $\pm$ 0.12\% & 67.48 $\pm$ 0.09~~ \\
CMA-MAEGA   & 11.51 $\pm$ 0.09~~ & 18.62 $\pm$ 0.16\% & 66.17 $\pm$ 0.08~~~\\
  \bottomrule
\end{tabular}
}

\caption{Results: The QD-score, coverage, and best solution after 10,000 iterations for each algorithm and domain with standard errors.
Larger values are better across all metrics. }
\label{tab:results_detailed}
\end{table*}

\subsection{Generated Archives and Additional Metrics}

Table~\ref{tab:results_detailed} presents the values of the QD-score, coverage, and best solution for each algorithm and domain. We used $\alpha=0.01$ for CMA-MAE, identically to the main experiments. 
Similar to prior work~\citep{fontaine2021differentiable}, we disambiguate the quality of solutions found and coverage by showing for MAP-Elites, MAP-Elites (line), CMA-ME, and CMA-MAE the percentage of cells (y-axis) that have objective value greater than the threshold specified in the x-axis (Fig.~\ref{fig:cumulative_plots}).

\subsection{Example Archives}

Fig.~\ref{fig:results_all_sphere}  -\ref{fig:results_all_lsi} show example archives for each algorithm and domain. 

 \begin{figure}[!t]
\centering
\includegraphics[width=1.0\linewidth]{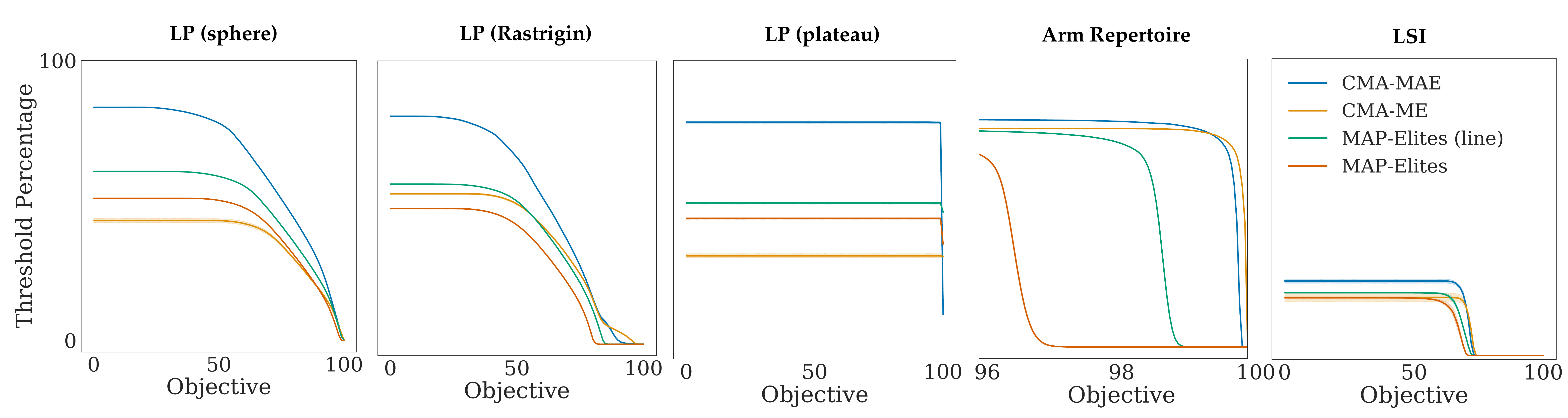}
\caption{The percentage of cells in the y-axis with objective values larger than or equal to a threshold specified in the x-axis, with 95$\%$ confidence intervals. The percentage is the number of filled cells (filtered by the threshold) over the archive size. A larger area under each curve indicates better performance.}
\label{fig:cumulative_plots}
\end{figure}

\begin{figure*}[t!]
\includegraphics[width=0.8\linewidth]{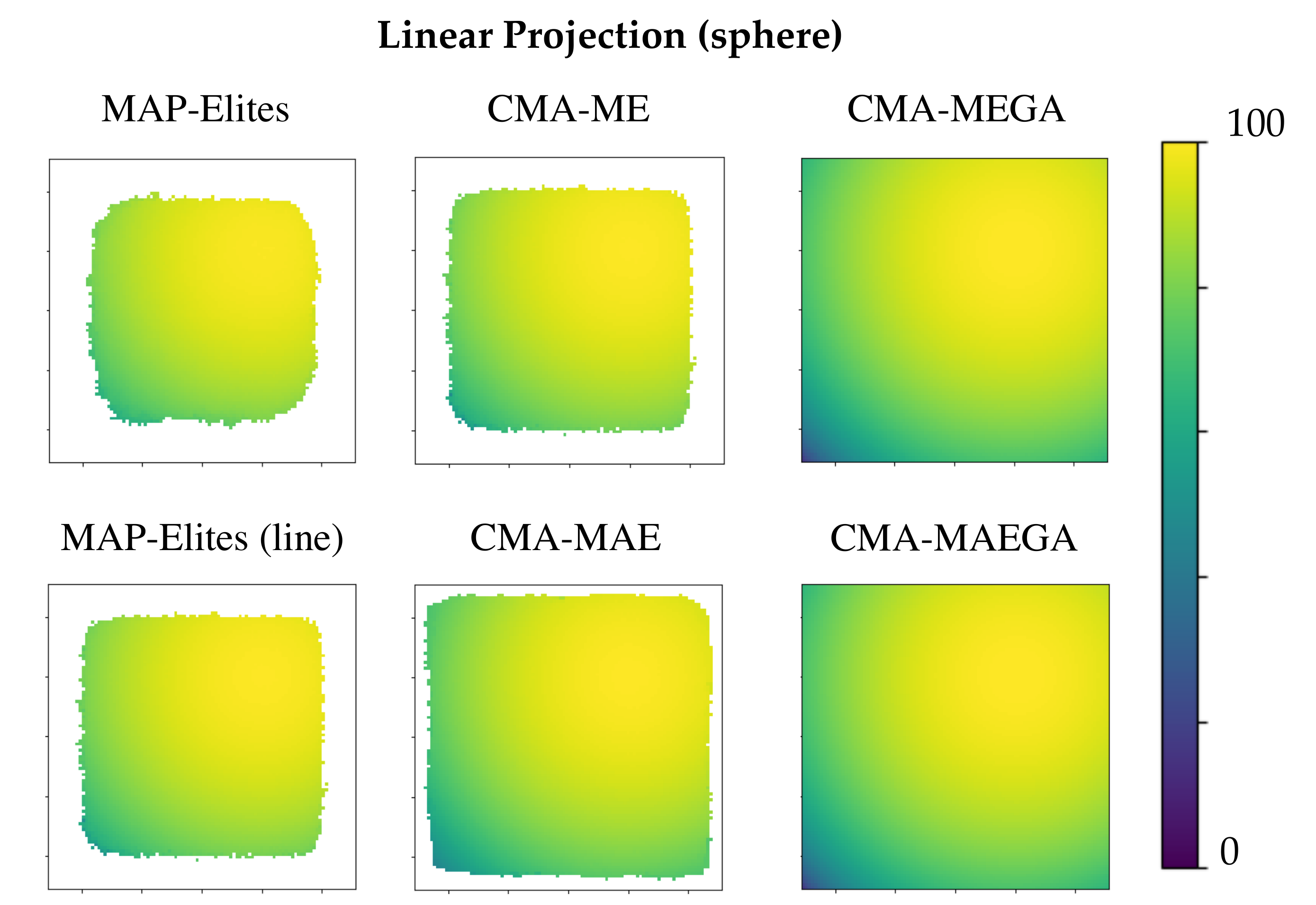}
\caption{Example archives for each algorithm for the linear projection (sphere) domain.}
\label{fig:results_all_sphere}
\end{figure*}

\begin{figure*}[t!]
\includegraphics[width=0.8\linewidth]{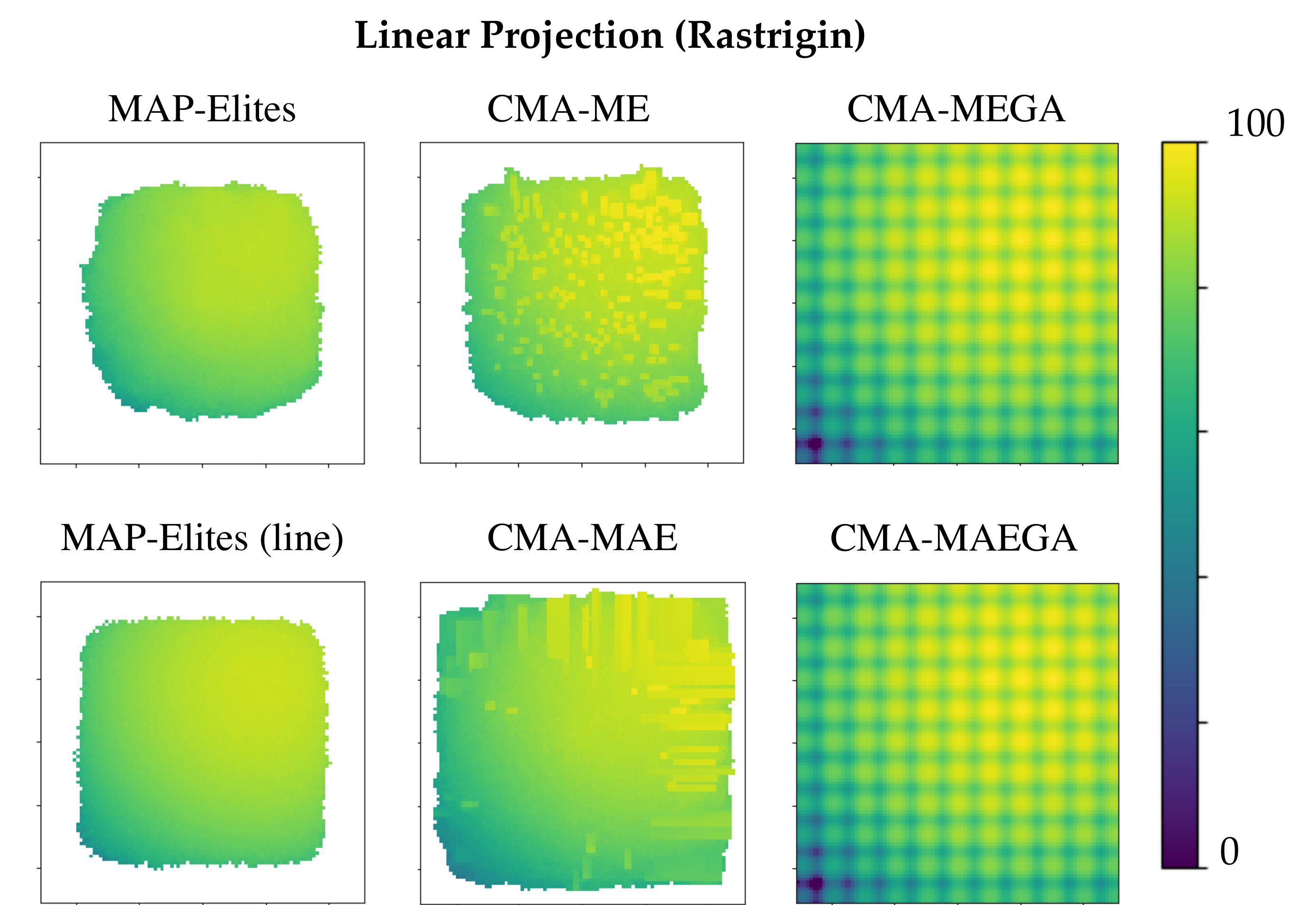}
\caption{Example archives for each algorithm for the linear projection (Rastrigin) domain.}
\label{fig:results_all_rast}
\end{figure*}

\begin{figure*}[t!]
\includegraphics[width=0.8\linewidth]{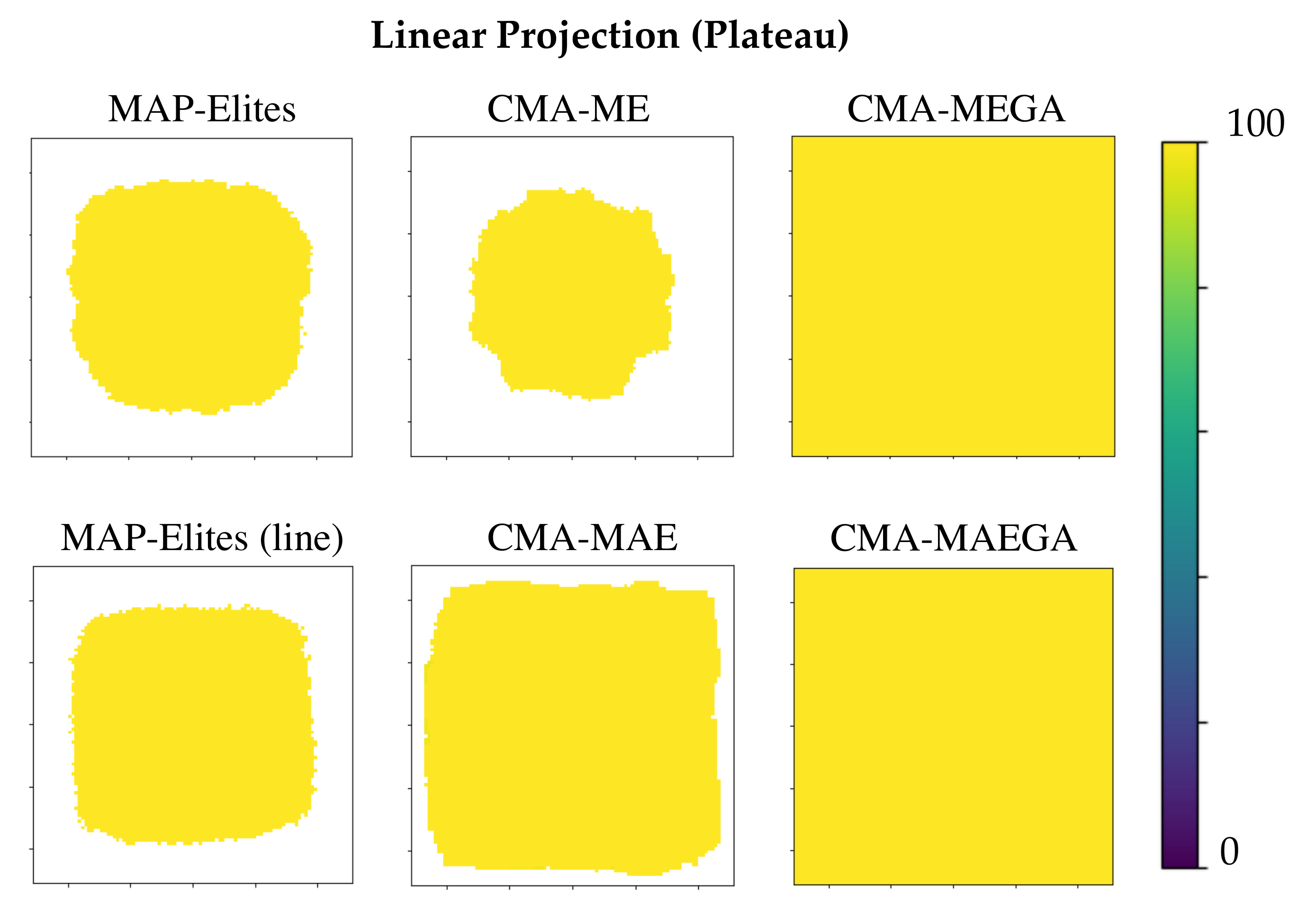}
\caption{Example archives for each algorithm for the linear projection (plateau) domain.}
\label{fig:results_all_plateau}
\end{figure*}

\begin{figure*}[t!]
\includegraphics[width=0.8\linewidth]{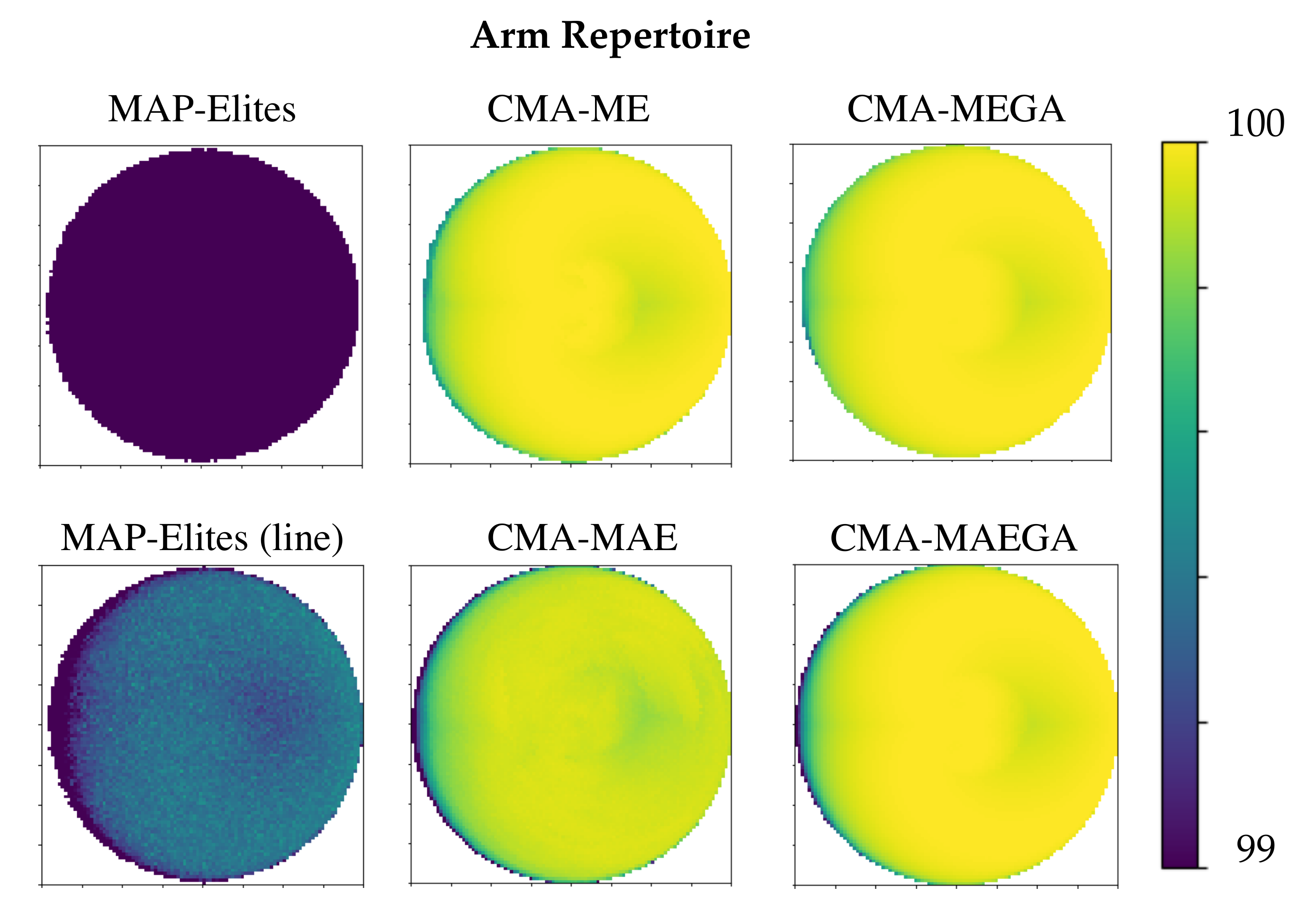}
\caption{Example archives for each algorithm for the arm repertoire domain.}
\label{fig:results_all_arm}
\end{figure*}

\begin{figure*}[t!]
\includegraphics[width=0.8\linewidth]{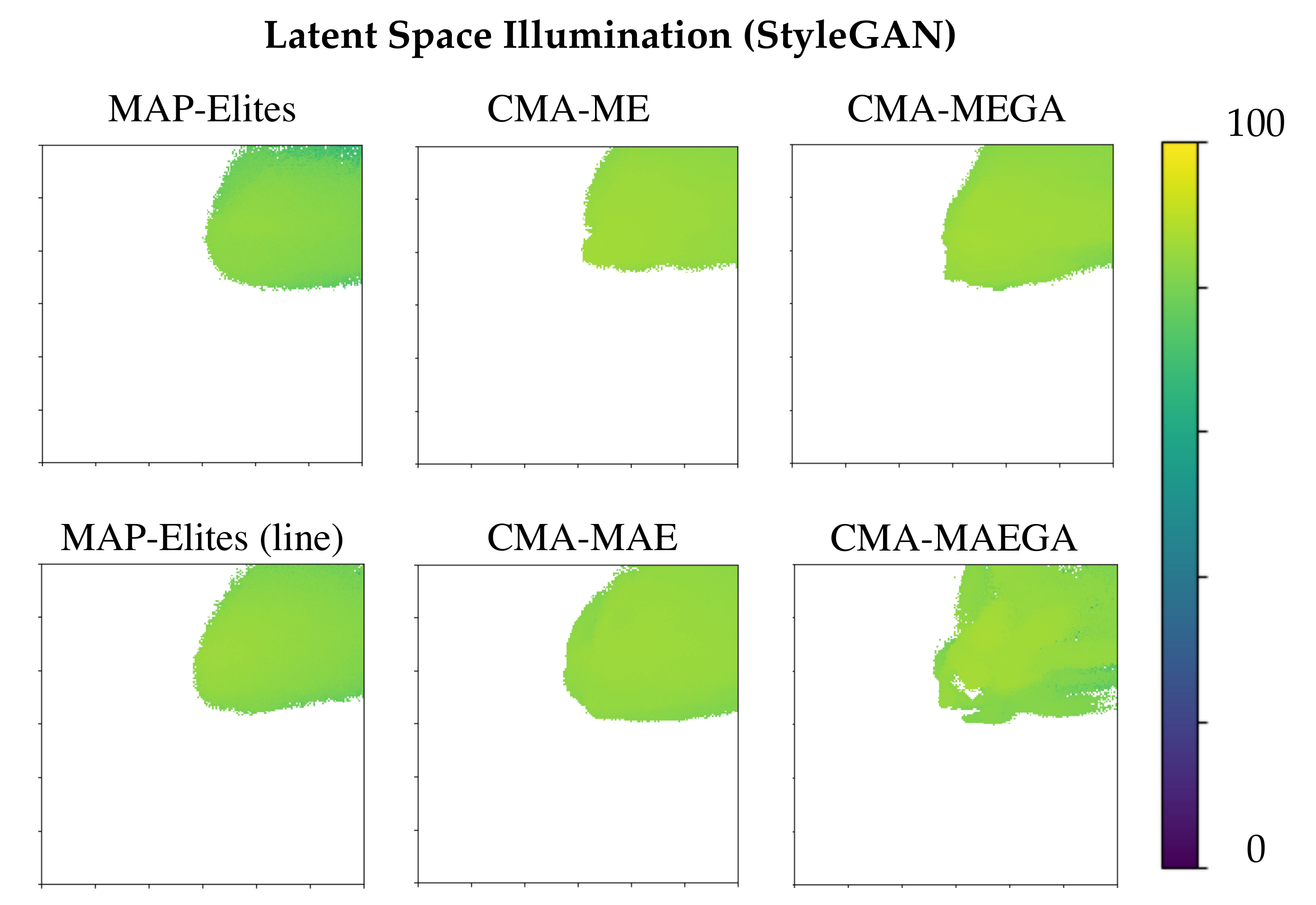}
\caption{Example archives for each algorithm for the LSI (StyleGAN) domain.}
\label{fig:results_all_lsi}
\end{figure*}

\begin{figure*}[t!]
\includegraphics[width=0.8\linewidth]{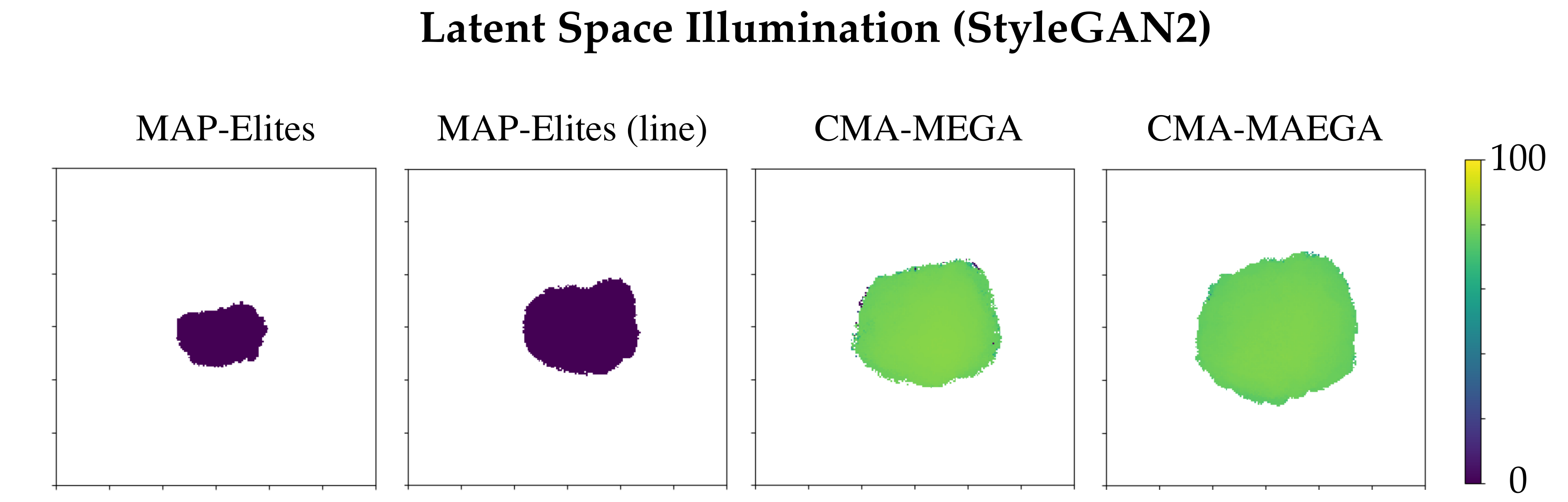}
\caption{Example archives for each algorithm for the LSI (StyleGAN2) domain.}
\label{fig:results_all_lsi_cruise}
\end{figure*}

\subsection{Additional Experiments in the LSI Domain}

We include the same additional experiments in the LSI (StyleGAN) domain as prior work~\citep{fontaine2021differentiable}. The first additional experiment has  objective prompt ``A photo of Jennifer Lopez'' and measure prompts ``A small child.'' and ``A woman with long blonde hair.'' The second has objective prompt ``A photo of Elon Musk'' and measure prompts ``A person with red hair.'' and ``A man with blue eyes.'' Table~\ref{tab:lsi_additional_runs} shows the results of the additional runs, as well as the Beyonc\'e run, with objective prompt ``A photo of Beyonce.'', from the main paper.

\subsection{Additional Results for Varying Resolutions}
Fig.~\ref{fig:alpha_results_all} shows the QD-score and coverage of CMA-MAE with resolution-dependent archive learning rate and the baselines, for each benchmark domain. For CMA-MAE, we set the resolution dependent archive learning rate $\alpha$ using the conversion formula from Appendix~\ref{sec:conversion}, with $\alpha_1 = 0.01$ for resolution $100 \times 100$.

\begin{figure*}[t!]
\includegraphics[width=0.7\linewidth]{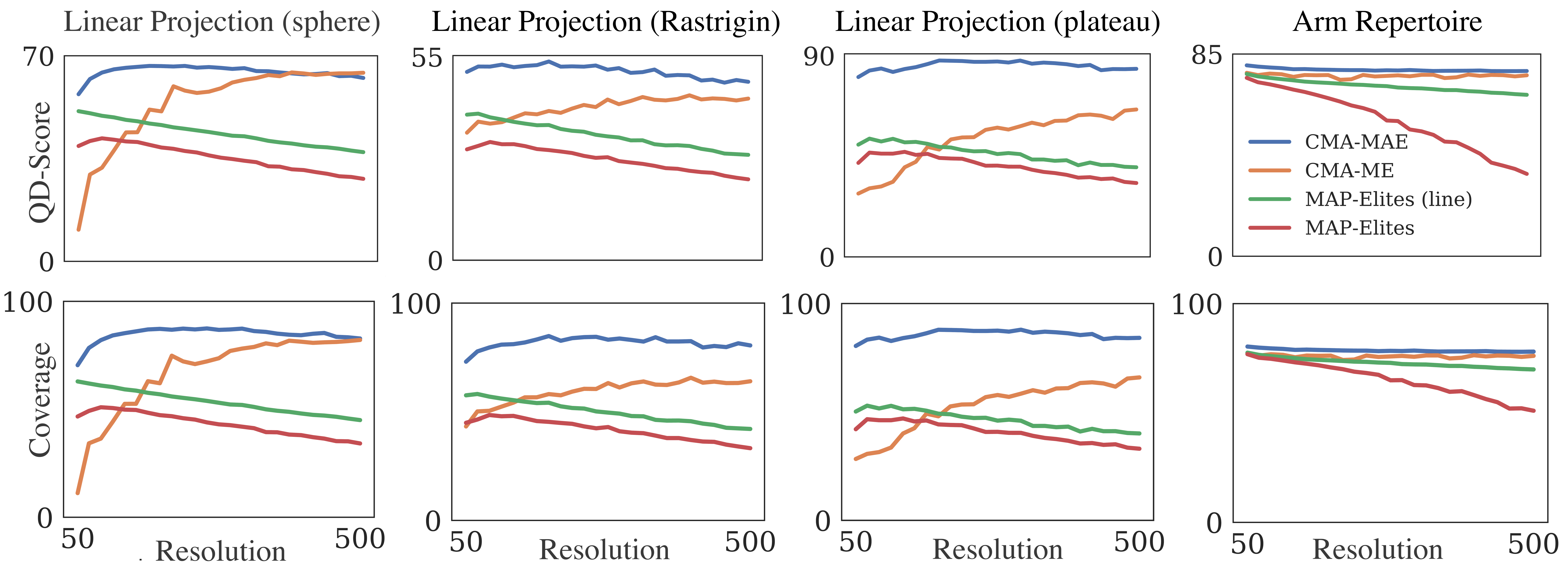}
\caption{Final QD-score and coverage of each algorithm for 25 different archive resolutions.}
\label{fig:alpha_results_all}
\end{figure*}

\begin{table}
\centering

\resizebox{0.7\linewidth}{!}{
\begin{tabular}{l|rrr}
             & \multicolumn{3}{c}{LSI (StyleGAN): Beyonc\'e}   \ \\ 
    \toprule
Algorithm     & QD-score & Coverage  & Best \\
    \midrule
MAP-Elites  & 12.85 $\pm$ 0.10 &  19.42 $\pm$ 0.16\% &  71.42  $\pm$ 0.14 \\
MAP-Elites (line)  & 14.40 $\pm$ 0.09 & 21.11 $\pm$ 0.11\% & 73.04 $\pm$ 0.05\\
CMA-ME    & 14.00  $\pm$  0.62 & 19.57 $\pm$ 0.90\% & 74.11 $\pm$ 0.08\\
CMA-MAE    & 17.67 $\pm$ 0.27  & 25.08  $\pm$ 0.40\% & 73.48 $\pm$ 0.18\\
CMA-MEGA &  16.08 $\pm$ 0.37 & 22.58  $\pm$ 0.57\% & 74.95  $\pm$ 0.27 \\
CMA-MAEGA & 16.20 $\pm$ 0.41 &  23.83 $\pm$ 0.46\% & 75.52 $\pm$ 0.22\\
  \bottomrule
\end{tabular}
}

\resizebox{0.7\linewidth}{!}{
\begin{tabular}{l|rrr}
             & \multicolumn{3}{c}{LSI (StyleGAN): Jennifer Lopez}   \ \\ 
    \toprule
Algorithm     & QD-score & Coverage  & Best \\
    \midrule
MAP-Elites  &  12.51 $\pm$ 0.28 & 19.18 $\pm$ 0.48\%    &  70.87 $\pm$ 0.27 \\
MAP-Elites (line)  &  14.73 $\pm$ 0.06 & 21.60 $\pm$ 0.08\%  &  73.50 $\pm$ 0.13\\
CMA-ME    &  15.24 $\pm$ 0.37 & 20.86 $\pm$ 0.50\% & 75.39 $\pm$ 0.09\\
CMA-MAE   & 18.33 $\pm$ 0.16 & 25.42 $\pm$ 0.24\% & 75.10 $\pm$ 0.17\\
CMA-MEGA  &  17.06 $\pm$  0.10 & 23.40 $\pm$ 0.14\% & 76.02 $\pm$ 0.08\\
CMA-MAEGA &  16.45 $\pm$ 0.27 & 23.60 $\pm$ 0.49\% & 76.42 $\pm$ 0.13\\
  \bottomrule
\end{tabular}
}~\\

\resizebox{0.7\linewidth}{!}{
\begin{tabular}{l|rrr}
             & \multicolumn{3}{c}{LSI (StyleGAN): Elon Musk}   \ \\ 
    \toprule
Algorithm     & QD-score & Coverage  & Best \\
    \midrule
MAP-Elites  &  13.88 $\pm$ 0.11 & 23.15 $\pm$ 0.14\%    &  69.76 $\pm$ 0.07\\
MAP-Elites (line)  & 16.54 $\pm$ 0.28& 25.73 $\pm$ 0.31\%  &  72.63 $\pm$ 0.28\\
CMA-ME    & 18.96 $\pm$ 0.17 & 26.18 $\pm$ 0.24\% & 75.84 $\pm$ 0.10\\
CMA-MAE & 22.10 $\pm$ 0.31 & 30.89 $\pm$ 0.44\% & 75.25 $\pm$ 0.20\\
CMA-MEGA   & 21.82 $\pm$ 0.18 & 30.73 $\pm$ 0.15\% & 76.89 $\pm$ 0.15\\
CMA-MAEGA & 19.99 $\pm$ 0.21 & 30.12 $\pm$ 0.42\% & 77.25 $\pm$ 0.18\\
  \bottomrule
\end{tabular}
}

\caption{Results from additional runs for Beyonc\'e, Jennifer Lopez,  and Elon Musk} 
\label{tab:lsi_additional_runs}
\end{table}

\subsection{Qualitative Results in the LSI Domain} 
We present example collages for CMA-MAE (Fig.~\ref{fig:mae_lopez},~\ref{fig:mae_musk}),~\ref{fig:mae_beyonce}) and for CMA-MAEGA (Fig.~\ref{fig:maega_lopez},~\ref{fig:maega_musk}) for the LSI (StyleGAN) domain. We also include collages of each run of all algorithms for all runs of LSI (StyleGAN) and LSI (StyleGAN2) in the anonymous Dropbox link: \url{https://www.dropbox.com/sh/7e22190k3p4zh69/AACcAKV7_Xgi4IMrhzxkCz5ca?dl=0}.

\begin{figure*}
\centering
\includegraphics[width=0.8\linewidth]{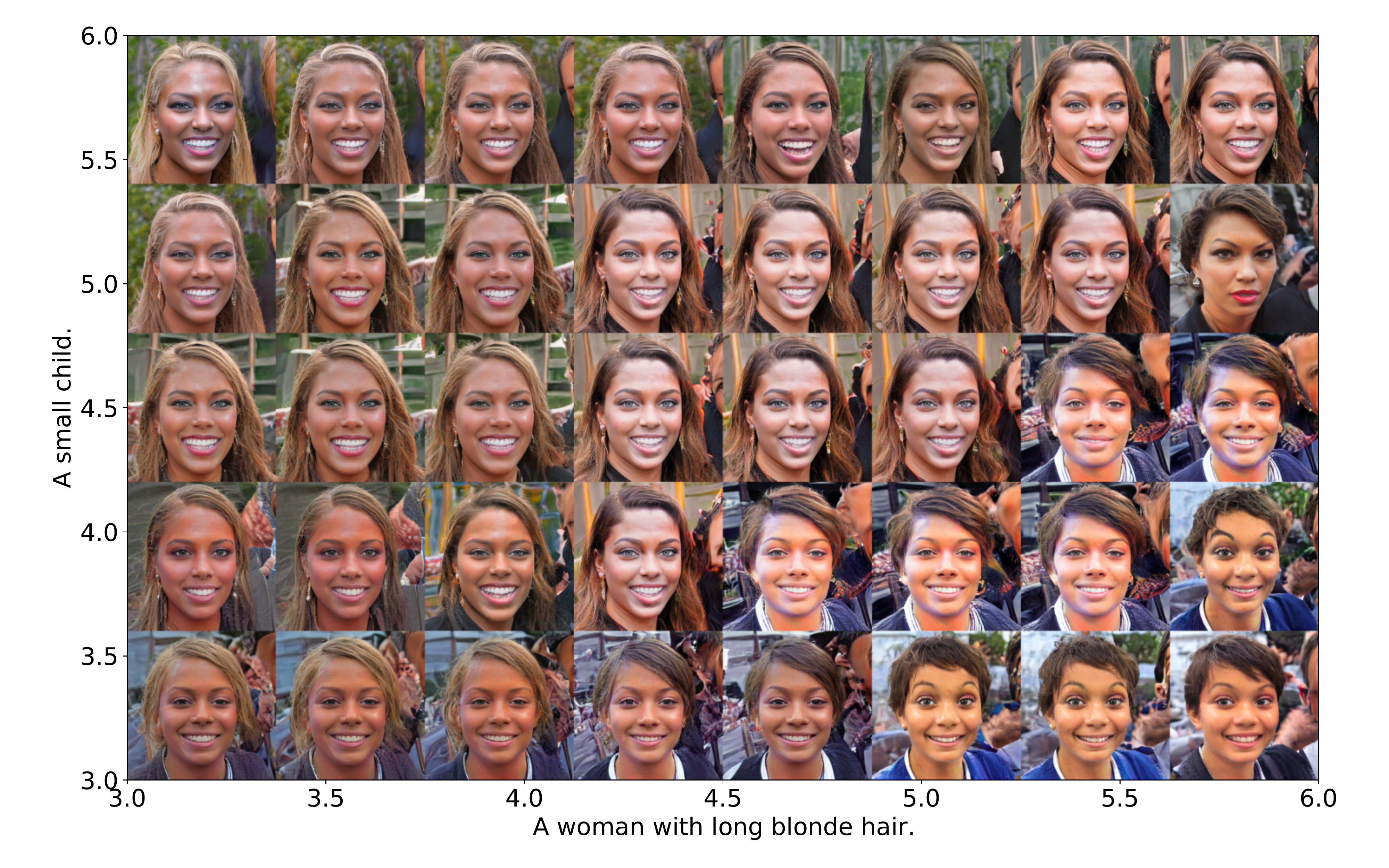}
\caption{A latent space illumination collage generated by CMA-MAE for the objective ``A photo of Beyonce.'' and for measures ``A small child.'' and ``A woman with long blonde hair.'' The axis values indicate the score returned by the CLIP model, where lower score indicates a better match.}

\label{fig:mae_beyonce}
\end{figure*}

\begin{figure*}
\centering
\includegraphics[width=0.8\linewidth]{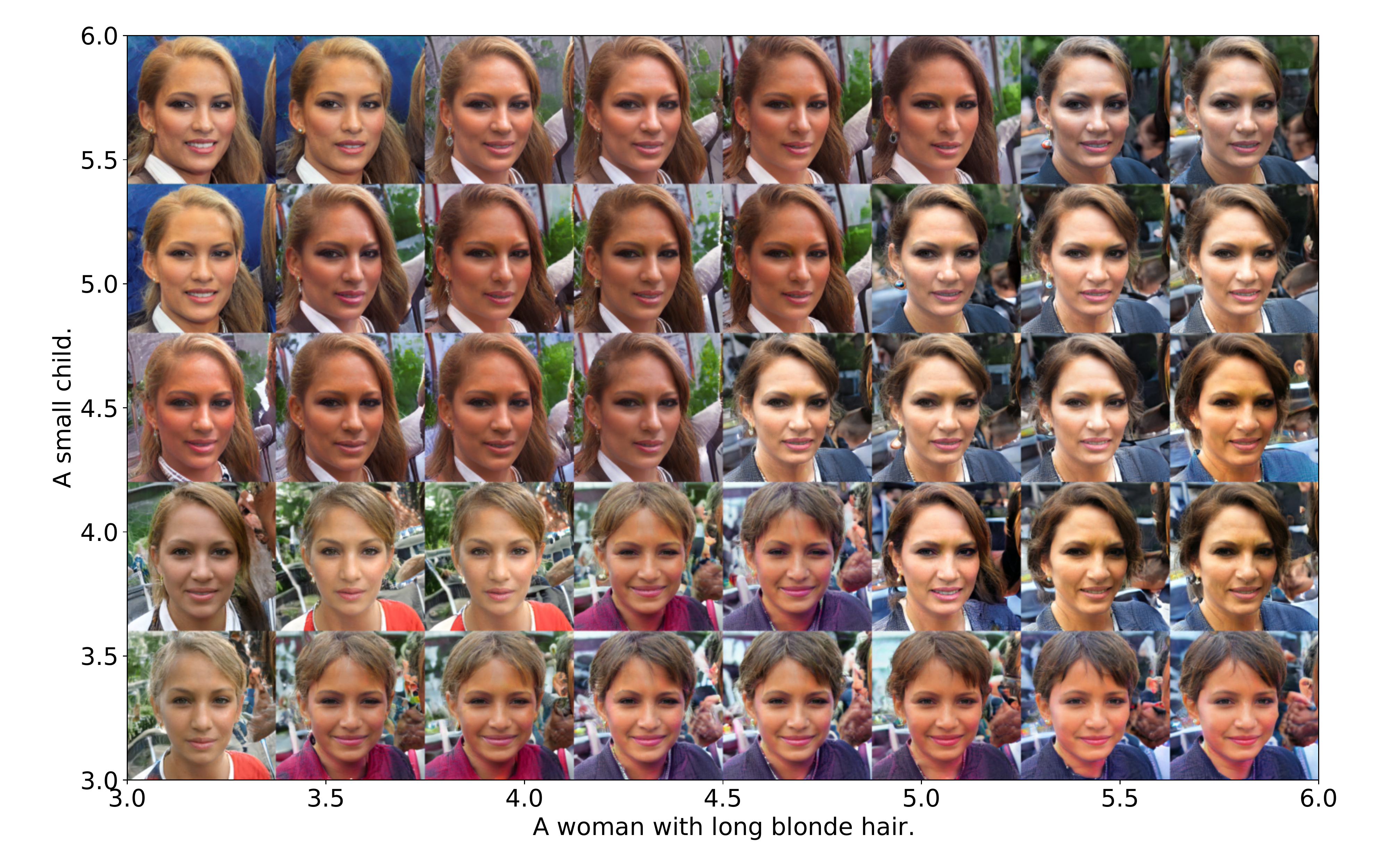}
\caption{A latent space illumination (StyleGAN) collage generated by CMA-MAE for the objective ``A photo of Jennifer Lopez.'' and for measures ``A small child.'' and ``A woman with long blonde hair.'' The axes values indicate the score returned by the CLIP model, where lower score indicates a better match.}
\label{fig:mae_lopez}
\end{figure*}

\begin{figure*}
\centering
\includegraphics[width=0.8\linewidth]{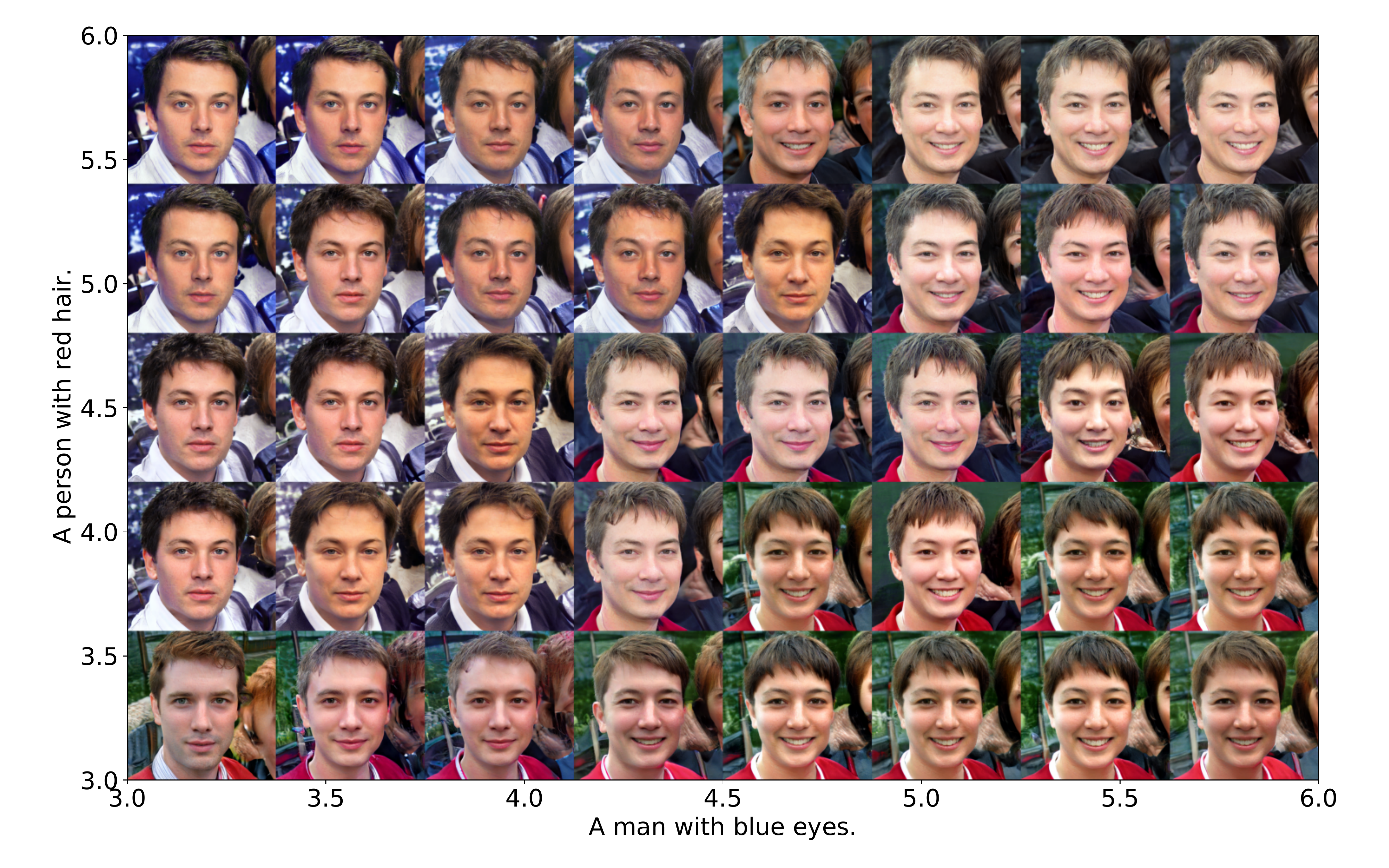}
\caption{A latent space illumination (StyleGAN) collage generated by CMA-MAE for the objective ``Elon Musk with short hair.'' and for measures ``A man with blue eyes.'' and ``A person with red hair.'' The axes indicate the score returned by the CLIP model, where lower score indicates a better match.}
\label{fig:mae_musk}
\end{figure*}

\begin{figure*}
\centering
\includegraphics[width=0.8\linewidth]{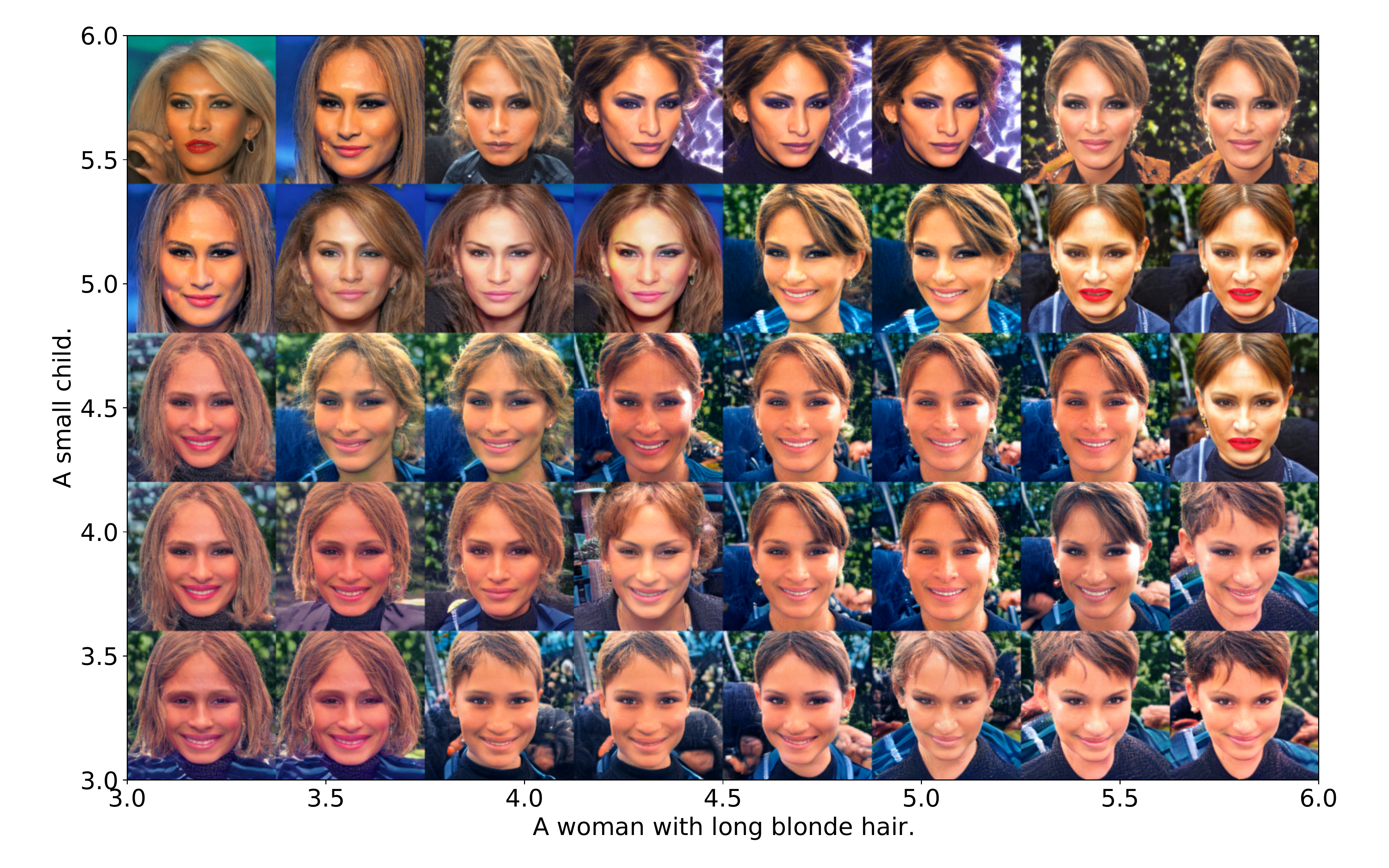}
\caption{A latent space illumination (StyleGAN) collage generated by CMA-MAEGA for the objective ``A photo of Jennifer Lopez.'' and for measures ``A small child.'' and ``A woman with long blonde hair.'' The axes values indicate the score returned by the CLIP model, where lower score indicates a better match.}
\label{fig:maega_lopez}
\end{figure*}

\begin{figure*}
\centering
\includegraphics[width=0.8\linewidth]{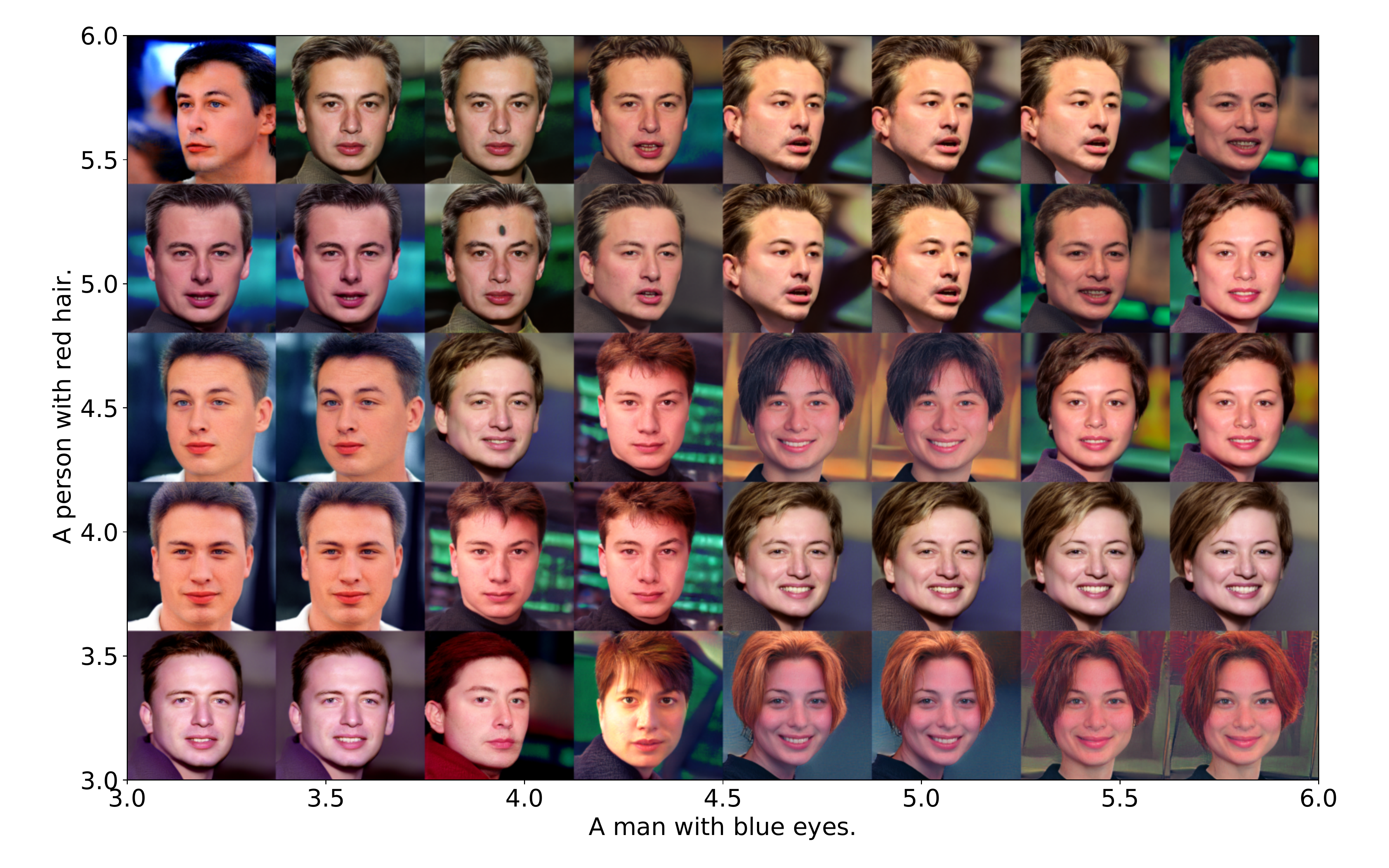}
\caption{A latent space illumination (StyleGAN) collage generated by CMA-MAEGA for the objective ``Elon Musk with short hair.'' and for measures ``A man with blue eyes.'' and ``A person with red hair.'' The axes values indicate the score returned by the CLIP model, where lower score indicates a better match.}
\label{fig:maega_musk}
\end{figure*}

\section{Ethics Statement and Societal Impacts} 
\label{sec:societal_impacts}
By controlling the trade-off between exploration and exploitation in QD algorithms, we aim towards improving their performance and robustness, thus making these algorithms easier to apply in a wide range of domains and applications. One promising application is synthetically extracting datasets from generative 
models to train machine learning algorithms~\citep{jahanian2021generative,besnier2020dataset}. This can raise ethical considerations because generative models can reproduce and exacerbate existing biases in the datasets that they were trained on
~\citep{jain2020imperfect,menon2020pulse}. On the other hand, quality diversity algorithms with carefully selected measure functions can target diversity with desired attributes, thus we hypothesize that they can be effective in generating balanced datasets. Furthermore, by attempting to find diverse solutions, QD algorithms are a step towards open-endedness in AI~\citep{stanley2017open} and will often result in unexpected and often surprising emergent behaviors~\citep{lehman2020surprising}. We recognize that this presents several challenges in predictability and monitoring of AI systems~\citep{hendrycks2021unsolved}, and we highlight the importance of future work on balancing the tradeoff between open-endedness and control~\citep{ecoffet2020open}.

\section{Reproducibility Inventory}
In the supplemental material we provide complete source code for all algorithms and experiments, as well as the Conda environments for installing project dependencies. The ``README.md'' document provides complete instructions both setup and execution of all experiments. In Appendix~\ref{sec:hyperparams} we provide all hyperparameters. In Appendix~\ref{sec:domain} we provide domain-specific details for replicating all experimental domains. In Appendix~\ref{sec:implementation} we provide information about the computational resources and hardware we used to run our experiments.  In Appendix~\ref{sec:cma-maega} we provide the pseudocode for the CMA-MAEGA algorithm, the DQD counterpart of CMA-MAE. In Appendix~\ref{sec:cmame-theory} we provide the proofs of all theorems in the paper. In Appendix~\ref{sec:cmamaega-theory} we provide the theoretical properties of CMA-MAEGA. In Appendix~\ref{sec:conversion} we provide the derivation of the conversion formula for the archive learning rate. In Appendix~\ref{sec:batch_update} we provide a batch threshold update rule that is invariant to the order that the solutions are processes within a batch update.  In Appendix~\ref{sec:improving-lsi} we discuss the implementation details for additional experiments that improve the quality of the generated images in the latent space illumination domain. 
In Appendix~\ref{sec:additional-results} we present all metrics with standard errors for each algorithm and domain.

\end{document}